\DeclareMathOperator*{\argmin}{argmin} 
\DeclareFontFamily{U}{mathx}{}
\DeclareFontShape{U}{mathx}{m}{n}{<-> mathx10}{}
\DeclareSymbolFont{mathx}{U}{mathx}{m}{n}
\DeclareMathAccent{\widehat}{0}{mathx}{"70}
\DeclareMathAccent{\widecheck}{0}{mathx}{"71}
\newtheorem{theorem}{Theorem}
\newtheorem{definition}{Definition}
\newtheorem{remark}{Remark}
\newtheorem{proposition}{Proposition}
\newtheorem{assumption}{Assumption}
\journal{Robotics and Autonomous Systems}
\begin{document}

\begin{frontmatter}



\title{Pro-Routing: Proactive Routing of Autonomous Multi-Capacity Robots for Pickup-and-Delivery Tasks}


\author[1]{Daniel Garces\corref{cor1}}
\ead{dgarces@g.harvard.edu}

\author[1]{Stephanie Gil}
\ead{sgil@seas.harvard.edu}

\cortext[cor1]{Corresponding author}

\affiliation[1]{organization={John A. Paulson School Of Engineering And Applied Sciences, Harvard University},
            addressline={150 Western Ave}, 
            city={Boston},
            postcode={02134}, 
            state={MA},
            country={USA}}

\begin{abstract}
We consider a multi-robot setting, where we have a fleet of multi-capacity autonomous robots that must service spatially distributed pickup-and-delivery requests with fixed maximum wait times. Requests can be either scheduled ahead of time or they can enter the system in real-time. In this setting, stability for a routing policy is defined as the cost of the policy being uniformly bounded over time. Most previous work either solve the problem offline to theoretically maintain stability or they consider dynamically arriving requests at the expense of the theoretical guarantees on stability. In this paper, we aim to bridge this gap by proposing a novel proactive rollout-based routing framework that adapts to real-time demand while still provably maintaining the stability of the learned routing policy. We derive provable stability guarantees for our method by proposing a fleet sizing algorithm that obtains a sufficiently large fleet that ensures stability by construction. To validate our theoretical results, we consider a case study on real ride requests for Harvard’s Evening Van System. We also evaluate the performance of our framework using the currently deployed smaller fleet size. In this smaller setup, we compare against the currently deployed routing algorithm, greedy heuristics, and Monte-Carlo-Tree-Search-based algorithms. Our empirical results show that our framework maintains stability when we use the sufficiently large fleet size found in our theoretical results. For the smaller currently deployed fleet size, our method services $6\%$ more requests than the closest baseline while reducing median passenger wait times by $33\%$.
\end{abstract}



\begin{keyword}
Multi-agent Task Planning \sep Sequential Decision Making \sep Multi-Agent Model-based Reinforcement Learning \sep Autonomous Robots
\end{keyword}

\end{frontmatter}


\section{Introduction}
The development of autonomous robotic platforms has led to the emergence of cooperative multi-robot systems, where robots coordinate to solve complex tasks more efficiently. These multi-robot systems have been successfully deployed in different applications, including unmanned aerial and ground vehicle routing for surveillance and maintenance tasks \cite{Nishi2005TRO, Zheng2005TRO, Bopardikar2014TRO, Testa2022TRO}, routing of robot fleets for warehouse operations \cite{Sorbelli2022TRO}, multi-robot task assignment for musical performances \cite{Chopra2017TRO}, multi-robot pickup and delivery tasks with aerial and ground robots \cite{Camisa2023TRO}, autonomous taxicab routing \cite{Kondor2022, Garces2023ICRA, Garces2024ICRA, Garces2024AAMAS}, UAV resupply scheduling \cite{Arribas2023TRO}, and multi-robot service coordination for pandemic responses \cite{Fu2023TRO}. Although all these works have shown very promising results for the development of cooperative multi-robot systems, most of them have focused on single-capacity-robot settings, where robots can only tackle one request at a time. For this reason, there remains a need to generalize these cooperative algorithms to multi-capacity-robot settings, where each robot can handle multiple requests simultaneously. Such settings are common in transportation and delivery applications, including autonomous public transportation \cite{Jokinen2011}, autonomous ride-sharing \cite{Ongel2019, SHAHEEN2019}, freight management and optimization \cite{Xidias2022}, autonomous package deliveries \cite{Matthew2015, Reed2022, Lee2023}, and warehouse package redistribution \cite{Ham01022021}. 

In this paper, we focus on multi-capacity-robot settings, where we have a fleet of multi-capacity autonomous robots that must service spatially distributed pickup-and-delivery requests with fixed maximum wait times. The pickup-and-delivery requests have mixed lead-times, which means that requests can be either scheduled ahead of time or enter the system in real-time. Under this setup, the location and number of future requests is unknown a-priori. The goal of our system is then to find robot routes that minimize request wait times for current and future requests, while satisfying the maximum wait times and robot capacity constraints. More specifically, we seek a routing policy that adapts to real-time demand while still guaranteeing stability for the system, where stability is defined as the cost of the policy being uniformly bounded over time. In this setup, the cost is described as the sum of wait times for the requests, and hence stability implies that no request waits indefinitely.

Since generating routes for a fleet of robots is a combinatorial problem that grows in complexity with the number of robots and the number of requests in the system, finding an optimal solution for this problem is intractable in a large setting \cite{Garey1990, Toth2002}. 
Multiple works in the literature have tried to find reasonable sub-optimal solutions for the multi-capacity robot routing problem by considering offline setups where requests are all scheduled ahead of time \cite{LYU2019, Zuo2021, WANG2023IC, Huang2020DR, Ma2021, Chen2021, Song2023}. These offline algorithms are able to leverage principled assignment methods to theoretically guarantee the stability of the resulting routing policies. However, the assumption that all requests are known a-piori prevents these methods from generalizing to real-time setups where requests arrive dynamically and hence routes have to be adapted online. Other works have tried to tackle this real-time dynamic setup by considering insertion heuristics \cite{Gomes2014, NARAYAN2017, AlonsoMora2017, Sun2021} and dynamic optimization approaches \cite{Huang2020CBO, BRUNI2014, Rongge2023, VANENGELEN2018, Wilbur2022}, but often times at the expense of provable stability for the resulting routing policies.

\begin{figure*}
    \centering
    \includegraphics[width=\linewidth]{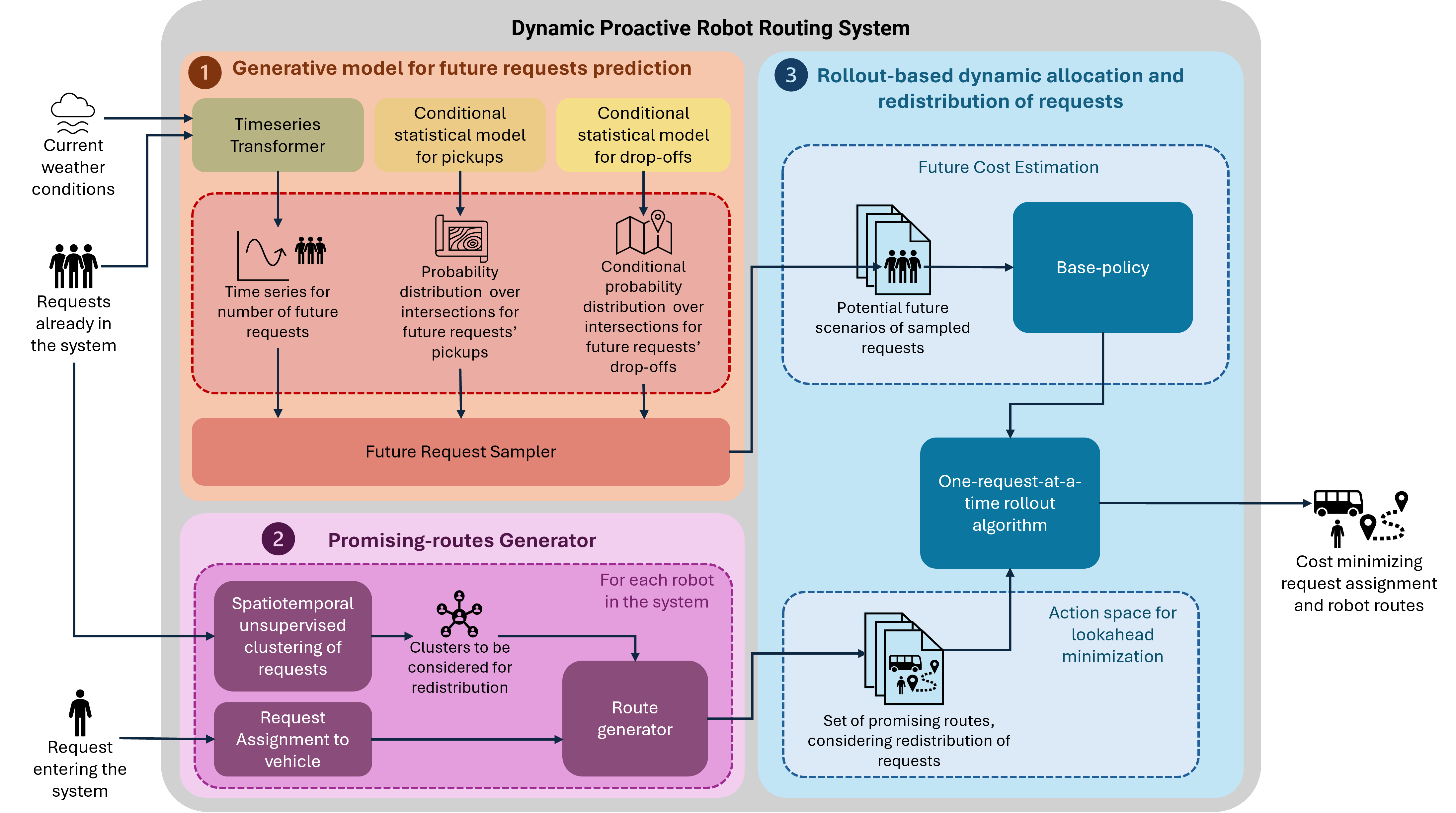}
    \vspace{-5pt}
    \caption{\small{General overview of our proposed method with inputs and outputs. Our method is composed of three main modules: 1) Generative model for future request prediction; 2) Promising-routes generator; 3) Rollout-based dynamic allocation and redistribution of requests.}}
    \label{fig:general_overview}
\end{figure*}

In this paper, we leverage recent advances in model-based Reinforcement Learning (RL) \cite{bertsekas2019reinforcement, bertsekas2020rollout, Bertsekas2022AlphaZero, Bertsekas2023RLCourse},  time-series prediction \cite{wu2023timesnet, nie2023a, wang2024timemixer, liu2024itransformer, wang2024tssurvey}, and unsupervised clustering \cite{Campello2015, Malzer2020}, to create a novel proactive rollout-based routing framework with future request planning that allows for real-time adaptation while provably guaranteeing stability of the learned routing policy.
An overview of our method is shown in Fig.~\ref{fig:general_overview}. 
Our method relies primarily on a new rollout algorithm that we propose called one-request-at-a-time rollout (see module 3 in Fig.~\ref{fig:general_overview}), where we assign one request at a time to a robot in the fleet, creating routes through which robots service multiple requests simultaneously. By assigning requests one at a time, our algorithm is able to remove the combinatorial dependence of the action space on the number of requests, while maintaining the standard rollout decision-making structure. 
The standard rollout decision-making structure \cite{bertsekas2020rollout, Bertsekas2023RLCourse} is composed of three main components: 1) a one-step look-ahead minimization where the cost of immediate controls are estimated using Monte-Carlo Simulations; 2) a future cost approximation that estimates the future cost of each immediate control by executing a simple-to-compute heuristic, known as the base policy, for a finite truncated time horizon; and 3) a terminal cost approximation that compensates for the truncated execution of the base policy. 

Utilizing this decision-making structure, we modify the one-step look-ahead minimization and future cost estimation to allow our method to better adapt to real-time demand. In other words, the main algorithmic novelty of our approach lies in the real-time demand adaptation mechanism that feeds potential actions and future costs estimations to our proposed one-request-at-a-time algorithm to create robot routes that better accommodate potential future requests. The real-time demand adaptation mechanism is composed of two main components: 1) a promising-routes generator (see module 2 in Fig.~\ref{fig:general_overview}) that uses unsupervised clustering of requests to determine feasible request reassignments across robots and creates a set of feasible immediate controls or routes to be considered in the look-ahead minimization of rollout; 2) a generative demand model (see module 1 in Fig.~\ref{fig:general_overview}) that uses a time-series transformer and empirical conditional distributions to sample potential sequences of future requests that are then used to estimate future costs of immediate controls.

To provably guarantee that the learned routing policy obtained by our framework is stable, we build on top off recent theoretical advances \cite{Bertsekas2022AlphaZero, Bertsekas2023RLCourse} that have shown that rollout algorithms with a stable base policy will learn a stable near-optimal policy.
In this paper, we hence focus on how to characterize stability for the multi-capacity robot routing setting and how to find a stable base policy. We first propose a cost function formulated as the sum of wait times for serviced requests. This cost function becomes unbounded if a request is not serviced. Using this cost function, we then propose a greedy base policy with computation times that are amenable to the real-time nature of the problem. We show that our base policy is stable if and only if there is a sufficiently large fleet of robots that will allow the proposed greedy policy to service all requests in a finite time horizon almost surely. Using this result, we design an algorithm to find a sufficiently large fleet size that obtains stability for the proposed base policy. We first consider iterative fleet-sizing algorithms commonly used in the literature. We show that these algorithms are not guaranteed to find a sufficiently large fleet size for stability of our proposed greedy base policy. We then propose an improved fleet sizing algorithm that addresses the limitations of iterative fleet-sizing algorithms, and we show that our fleet-sizing algorithm guarantees stability of our proposed base policy by construction.

We validate our theoretical results and verify the performance of our proposed method by considering a case study on real request data from Harvard's Evening Van Service, a university-wide minibus transportation service that allows students to book rides between any two street intersections inside the van's operating area. Requests for this service can be either booked in advance, or placed in real-time for immediate pickup. Using the real historical ride requests, we empirically demonstrate that our fleet-sizing algorithm obtains a sufficiently large fleet size that allows our proposed greedy base policy and our one-request-at-a-time rollout algorithm to service all new requests within a held-out test set, maintaining stability in practice. 

We also verify the performance of our routing framework using the fleet size currently deployed in the real Evening Van Service. Even though this fleet size is smaller than the one found by our proposed fleet-sizing algorithm, we consider this scenario to evaluate the performance of our proposed framework using the currently available resources for the Evening Van Service. Under this setup, we run a comparison study where we consider several baselines, including multiple Monte-Carlo Tree Search (MCTS) proactive routing frameworks inspired by \cite{Wilbur2022}, different greedy policies using our proposed greedy base policy with different costs, and the currently deployed evening van planning framework. Based on this comparison study, we empirically demonstrate that our method is able to service around $6\%$ more requests requests than the closest baseline during the same $20$ testing days while reducing median passenger wait times at the station to around $2$ minutes compared to $3$ to $10$ minutes of median passenger wait times for all the other baselines. In terms of runtime, our method is able to generate a route for a request in less than $20$ seconds, being around $8$ times faster than the other baselines that also consider future requests. Additionally, we perform different ablations studies to illustrate the advantages provided by our proposed generative model for future request prediction, and our proposed promising-routes generator.

\section{Related Works}
In this section, we review relevant works from the literature in three main areas: multi-capacity robot routing, fleet sizing algorithms, and demand prediction. We divide related works into these three areas to provide a more comprehensive review of related methods and better compare them to the different components of our proposed approach.

\subsection{Multi-capacity Robot Routing}
The multi-capacity robot routing problem corresponds to a special case of the vehicle routing problem \cite{Toth2002}, where robots in the system can service more than one request at a time. Since even the vehicle routing problem is NP-hard \cite{Garey1990, Toth2002}, most work in the literature has focused on developing approximation algorithms to obtain near-optimal routes. Some works have considered setups where requests enter the system before the robots are dispatched. Under these setups, requests are hence known a-priori and routes can be optimized using the exact request locations. To do this, different authors have considered methods such as clustering heuristics \cite{LYU2019, Zuo2021, WANG2023IC}, branch-and-bound optimization \cite{Huang2020DR}, and genetic algorithms \cite{Ma2021, Chen2021, Song2023}. Some of these methods obtain reasonably good routes and theoretical guarantees on the number of requests serviced or the stability of the system. However, these methods are not applicable to cases where new requests enter the system while the robots are already executing the planned routes, necessitating routes to adapt in real-time. Other works in the literature have tried to solve this limitation by considering real-time requests in their formulation, performing iterative optimizations that incorporate new requests at each time step. These works consider genetic algorithms \cite{WANG2020}, greedy insertion and route modification heuristics \cite{Gomes2014, NARAYAN2017, AlonsoMora2017, Sun2021}, relaxed optimization problems \cite{Huang2020CBO}, and proactive schemes that incorporate demand estimation to predict future requests \cite{BRUNI2014, Rongge2023, VANENGELEN2018, Wilbur2022}. Even though these methods are able to modify routes online based on real-time demand, the online algorithms under consideration do not longer provide provable stability claims. In this paper, we aim to fill this gap by proposing a framework that adapts to real-time demand online by considering potential future requests, but is still amenable to theoretical analysis and hence can provably guarantee stability for our learned policy.

\subsection{Fleet Sizing Algorithms}
Different works in the literature have tried to solve the problem of finding a fleet size that will achieve either a specific service rate or some notion of stability. Most works in the field have looked at the single-capacity-robot setting, where a robot must pick up and drop-off a request before being able to service a different request. For this setting, formal theoretical bounds on the number of robots required for stability of the routing policy have been derived \cite{Spieser2014, Garces2024ICRA}. These bounds, however, are not easily generalizable to the multi-capacity robot routing setting, where robots can pick-up multiple requests before dropping-off the first requests. Other works have tried to characterize the number of robots required for a specific request completion rate in the multi-capacity robot setting \cite{Zardini2022}. These approaches use simulation studies \cite{Barrios2014, Spieser2014, Bosch2016, Fagnant2018DynamicRA, Vazifeh2018AddressingTM}, path cover heuristics \cite{Qu2022HowMV, MAKHDOMI2024}, and iterative assignment formulations \cite{WINTER2018}. However, these approaches do not provide any analytical results or theoretical service rates or stability guarantees. Other works in the literature have tried to obtain theoretical service rate guarantees by considering an iterative joint optimization in the context of flow optimization \cite{Wallar2019OptimizingVD}, jointly solving the routing and the fleet sizing problems using historical data. 
These joint optimization approaches, however, are designed to produce fixed routes and fleet-sizes that are determined a-priori, and hence they are not applicable to our routing setting where routes need to be adapted in real-time, but the fleet size needs to be determined a-priori. 
In our work, we take inspiration from the algorithm proposed in \cite{Wallar2019OptimizingVD} to formulate an iterative fleet-sizing algorithm that assigns requests sequentially and grows the fleet size if a request can't be allocated to any robot. We show that such an approach does not procure the stability of our proposed greedy base policy. We then propose an improved algorithm that restarts the assignment process once the fleet size has been increased and we show that this algorithm provably finds a fleet size that is sufficiently large for our greedy base policy to be stable. 

\subsection{Demand Prediction Methods}
Different works in the literature have tried to predict demand for pickup-and-delivery services using historical data. Earlier works considered traditional machine learning schemes including XGBoost \cite{Shun2022}, Random Forests, Ridge Regression, and combination forecasting \cite{Liu2020CFM}. Recent advances in more advanced architectures inspired a push for demand prediction using LSTM-based models \cite{Huan2022}, and Generative Adversarial Networks (GANs) \cite{wang2023RH}. Other works have tried to use the structure of the environment to formulate the routing problem as a graph learning problem, introducing Graph Convolutional Networks (GCNs) \cite{Zhao2022GCN, Feng2022, STPGCN}. On the other hand, due to the time-dependence of demand patterns, especially in transportation settings, different authors have also considered casting the problem as a time-series and using time-series transformers as the predictive mechanism \cite{wu2023timesnet, nie2023a, wang2024timemixer, liu2024itransformer}. These approaches have seen recent success in prediction of traffic patterns and transportation requests for on-demand mobility. In addition to this, there have been empirical studies that hint at the possibility of transformers being better suited to forecast time-series with long term seasonal patterns compared to other architectures \cite{Zhao2021Predictability, haoyietal-informer-2021, haoyietal-informerEx-2023}. In this paper, we build on top off state-of-the-art time-series architectures \cite{wu2023timesnet, nie2023a, wang2024timemixer, liu2024itransformer} by conditioning on contextual data like the weather and date to predict demand for pickup-and-delivery tasks. We decide to use transformer-based architectures instead of Graph-based architectures due to the size of the environment and the fact that pickup-and-delivery requests tend to be distributed across different areas, which results in low-predictability for the spatiotemporal relationships between requests. In other words, the fact that requests tend to have spatially different pickup and drop-off locations over time results in sparse graph-based data that makes it harder for the GCNs to learn usable patterns. Time-series transformers, on the other hand, can aggregate spatial data, focusing exclusively on temporal patterns. This choice tends to decrease the sparsity of the data and make the learning process easier.

\section{Problem Formulation}
In this section, we present the formulation for a proactive multi-capacity robot routing problem, where routes and pickup times need to be optimized to service both scheduled requests as well as real-time requests. In the following subsections we define the environment, the request structure, the robot routes, the state and control spaces for the system, and the cost structure. Following a similar argument as in \cite{Bertsekas2022AlphaZero}, we formulate the robot routing problem as a stochastic dynamic program instead of a Markov Decision Process to more naturally connect the problem setup with the notation commonly used for rollout algorithms \cite{bertsekas2020rollout, Bertsekas2023RLCourse}.

\subsection{Environment}
We assume that our system operates on a finite, discrete time horizon, and we denote this time horizon as $T$. We assume that our system operates over a set of days that we denote as $\mathcal{D}$. For a given day $d \in \mathcal{D}$, we denote the start of the time horizon as $T^{\text{start}} \geq 0$, and the end as $T^{\text{end}}$, such that $T^{\text{end}}-T^{\text{start}} = T$. We refer to the time range $\vec{T} = [T^{\text{start}}, T^{\text{start}} + 1, \dots, T^{\text{end}}-1, T^{\text{end}}]$ as the system's operating times for a day $d$. We assume that start times and end times for different days are defined using a global time, and hence the start time $T^{\text{start}}$ for day $d+1$ will be larger than the end time $T^{\text{end}}$ for day $d$.

We consider an environment with a fixed physical structure that can be represented as a directed graph $G = (V, E)$, where $V = \{ 1, \dots, n\}$ corresponds to the set of physical locations of interest numbered $1$ through $n$, while $E \subseteq \{ (i,j) | i, j \in V\}$ corresponds to the set of directed segments connecting adjacent locations. 
For example, in a ground vehicle routing application, $G$ would correspond to the underlying street network, where the vertices $V$ correspond to street intersections, while the set of edges $E$ would correspond to the directed streets that connect adjacent intersections. 
We denote the set of neighboring locations to location $i$ as $\mathcal{N}_i = \{ j | j \in V, (i,j) \in E\}$. 
We define the time it takes to go from location $i$ to location $j \in \mathcal{N}_i$ as $\textit{time}(i,j)$. We denote the set of edges that composes the shortest path between nonadjacent locations $k$ and $l$ as $\mathcal{V}_{(k,l)} \subset E$, and the time associated with the path as $\textit{time}(k,l) = \sum_{(i,j) \in \mathcal{V}_{(k,l)}} \textit{time}(i,j)$.

We assume that we have a fixed fleet of $M$ autonomous multi-capacity robots represented by the set of identifiers $\mathcal{L} = \{ \ell_1, \ell_2, \dots, \ell_M\}$. 
All robots are assumed to be homogeneous with a known maximum capacity denoted as $C$. 
We assume that all autonomous robots have access to a central server and hence can obtain other robots' location, remaining capacity, and assigned requests, as well as all outstanding requests in the system.
We assume that robots are originally stored in depots or bases $\mathcal{B} \subset V$ that correspond to physical storing facilities in the map. We assume that these depots $\mathcal{B}$ are given a-priori. We denote the depot where robot $\ell_m$ is stored as $b^{\ell_m} \in \mathcal{B}$.
For a given day $d$, we assume that all robots are in circulation for the operating times $\vec{T}$ and that they remain inside the environment graph $G$. We also assume that all robots must return to their original depot before the end of the episode $T^{\text{end}}$.

\subsection{Request Structure}
\label{subsec:transportation_requests}
We define a request $r_q$ as a pickup and delivery task, and we represent it as a tuple, such that: 
\begin{equation*}
    r_q = \left< \rho_{q}, \delta_{q}, t_{q}^{\text{entry}}, t_{q}^{\text{pick}}, a_q, \widehat{t}^{\text{pick}}_{q} , \widehat{t}^{\text{drop}}_{q}, \psi^{\text{pick}}_{q}, \psi^{\text{drop}}_{q} \right>
\end{equation*}
where $\rho_{q} \in V$ is request $r_q$'s pickup location; $\delta_{q} \in V$ corresponds to request $r_q$'s drop-off location; $t_{q}^{\text{entry}}$ is the time at which the request entered the system; $t_{q}^{\text{pick}}$ corresponds to request $r_q$'s desired pickup time; $a_q \in \mathcal{L} \cup \{ -1\}$ corresponds to the label of the robot assigned to request $r_q$ or $-1$ if request $r_q$ hasn't been assigned to any robot; $\widehat{t}^{\text{pick}}_{q} \in \mathds{R}^{+} \cup \{-1\}$ corresponds to the planned pickup time as determined by the route of the assigned robot, where $\widehat{t}^{\text{pick}}_{q} \geq t_{q}^{\text{pick}}$ if the request $r_q$ has been assigned to a robot or $\widehat{t}^{\text{pick}}_{q} = -1$ otherwise; $\widehat{t}^{\text{drop}}_{q} \in \mathds{R}^{+} \cup \{-1\}$ corresponds to the planned drop off time as determined by the route of the assigned robot, where $\widehat{t}^{\text{drop}}_{q} > \widehat{t}^{\text{pick}}_{q}$ if the request $r_q$ has been assigned to a robot or $\widehat{t}^{\text{drop}}_{q} = -1$ otherwise; $\psi^{\text{pick}}_q \in \{ 0, 1\}$ is an indicator, such that $\psi^{\text{pick}}_q = 1$ if request $r_q$ has been picked up by a robot, and $\psi^{\text{pick}}_q = 0$ otherwise; and $\psi^{\text{drop}}_q \in \{ 0, 1\}$ is an indicator, such that $\psi^{\text{drop}}_q  = 1$ if request $r_q$ has been dropped off, and $\psi^{\text{drop}}_q  = 0$ otherwise. We assume that all requests must enter the system at a time prior to the requested pickup time, or more formally $t_{q}^{\text{entry}} < t_{q}^{\text{pick}}, \forall r_q$. We define the latest pickup time that can be specified for a request for a given day $d$ as $T^{\text{last}}$, such that $T^{\text{last}} < T^{\text{end}}$. The time $T^{\text{last}}$ is chosen in such a way that a robot at a depot has enough time to pickup and drop-off a request that enters the system at time $T^{\text{last}}$, while still being able to make it back to the depot on or before $T^{\text{end}}$.

We group requests based on whether they are scheduled ahead of the current operating times for day $d$, or they enter the system in real-time. We denote the set of requests that are scheduled ahead of the current operating time range $\vec{T}$ for day $d$ as  $\mathcal{R}^{\text{sched}} = \{ r_q | t_{q}^{\text{entry}} < T^{\text{start}}, T^{\text{start}} \leq t_{q}^{\text{pick}} \leq T^{\text{last}}\}$. 
This set of requests is assumed to be known at the start of the current operating times $\vec{T}$. 
We denote the set of real-time requests that enter the system at time $t$ for a given day $d$ as $\mathcal{R}^{\text{realtime}}_{t} = \{ r_q | t = t_{q}^{\text{entry}}, T^{\text{start}} \leq t_{q}^{\text{pick}} \leq T^{\text{last}}\}$.
We denote the set of all requests that have entered the system before time $t$ as $\mathcal{R}_{t} =  \left( \bigcup_{t' \in [T^{\text{start}}, \dots, t-1]} \mathcal{R}^{\text{realtime}}_{t'} \right) \cup \mathcal{R}^{\text{sched}}$.

We model the number of real-time requests that enter the system as a random variable $\eta$ with its realization at time $t$ being denoted as $\eta(t)$, and its unknown underlying distribution being denoted as $p_{\eta}$. From this, we have that $|\mathcal{R}^{\text{realtime}}_{t}| = \eta(t)$. We model the pickup $\rho_q$ and drop-off $\delta_q$ for requests $r_q \in \mathcal{R}^{\text{realtime}}_{t}$ as random variables $\rho$ and $\delta$, respectively. These random variables have unknown probability distributions $p_{\rho}$ and $p_{\delta}$, respectively. 
We assume that we have access to a set of historical dates $\mathcal{D}_{H} \subset \mathcal{D}$ that contains a subset of the operating dates for the system. We define the set of historical requests for the set of dates $\mathcal{D}_{H}$ as $\mathcal{H} = \{ \mathcal{R}_{T^{\text{end}}} |  T^{\text{end}} \text{is the end time for day }d, d \in \mathcal{D}_{H}\}$. We assume that the set $\mathcal{H}$ of historical requests can be  used to estimate $p_{\eta}$, $p_{\rho}$ and $p_{\delta}$ to obtain approximate probability distributions $\tilde{p}_{\eta}$, $\tilde{p}_{\rho}$ and $\tilde{p}_{\delta}$, respectively.
We define the true underlying request distribution as a tuple $\mathcal{P} = \left< p_{\eta}, p_{\rho}, p_{\delta} \right>$, and the estimated request distribution as $\tilde{\mathcal{P}} = \left< \tilde{p}_{\eta}, \tilde{p}_{\rho}, \tilde{p}_{\delta} \right>$.

\subsection{Robot Routes}
\label{subsec:robot_routes}
To represent robot routes for a given day $d$, we define a directed, weighted, time augmented-graph $G^{\text{time}} = (V^{\text{time}}, E^{\text{time}})$. We define $V^{\text{time}} = \{ \left< i, t\right> | \forall i \in V, T^{\text{start}} \leq t \leq  T^{\text{end}}\}$ to be the vertex set composed of a tuple of physical locations in the graph $G$ and time steps during the operating times $\vec{T}$ for day $d$. 
We define the edge set $E^{\text{time}} = E^{\text{move}} \bigcup E^{\text{wait}}$ to specify time transitions between locations in $G$, where:
\begin{align}
    E^{\text{move}} & = \{ ( \left< i, t \right>, \left< j, t' \right>) \big | (i,j) \in E, t \in \vec{T}, t'-t = \textit{time}{(i,j)}\} \nonumber \\
    E^{\text{wait}} & = \{ ( \left< i, t \right>, \left< i, t' \right>) \big | \forall i \in V, t \in \vec{T}, t'-t = 1 \} \nonumber
\end{align}
Intuitively, $E^{\text{move}}$ corresponds to the edges associated with movements between vertices in $G$, while $E^{\text{wait}}$ corresponds to a waiting action of one time-step at a specific vertex in $G$.

We define the set of requests that have been assigned to robot $\ell_{m}$ at time $t$ as $\mathcal{A}_{t}^{\ell_{m}} = \{ r_q | r_q \in \mathcal{R}_{t}, a_q = \ell_{m}\}$, and the vector of requests assignments for the entire fleet at time $t$ as $\vec{A}_{t} = [ \mathcal{A}_{t}^{\ell_{1}}, \dots, \mathcal{A}_{t}^{\ell_{M}} ]$. We define the set of valid request assignments for the entire fleet as $\mathcal{F}_{t}$, such that $\vec{A}_{t} \in \mathcal{F}_{t}$. The set of valid assignments $\mathcal{F}_{t}$ contains all possible assignments of requests to robots, where each request gets assigned to a single robot (i.e. $\mathcal{A}_{t}^{\ell_{k}} \cap \mathcal{A}_{t}^{\ell_{m}} = \emptyset$ for $\ell_{k} \neq \ell_{m}$). Also, once a request has been picked up by a robot, it cannot be reassigned to a different robot, and hence the set $\mathcal{F}_{t}$ must contain those assignments that were part of $\mathcal{F}_{t-1}$ and correspond to requests $r_q$ where $\psi_{q}^{\text{pick}} = 1$. 

We define the location of robot $\ell_{m}$ at time $t$ as $\nu_{t}^{\ell_{m}} \in V$, and the number of requests inside the robot at time $t$ as $c_{t}^{\ell_{m}}$. 
For each robot $\ell_{m} \in \mathcal{L}$, we define the space of valid routes at time $t$ as the set $\mathcal{Y}_{t}^{\ell_{m}}$. We define a valid route $y_{t}^{\ell_{m}} \in \mathcal{Y}_{t}^{\ell_{m}}$ as follows:
\begin{definition}
    A valid route $y_{t}^{\ell_{m}} \in \mathcal{Y}_{t}^{\ell_{m}}$ for a day $d$ is a directed path on $G^{\text{time}}$ that must satisfy the following four conditions:
    \begin{enumerate}
        \item $y_{t}^{\ell_{m}}$ starts on node $\left< b^{\ell_{m}}, T^{\text{start}}\right>$, ends on node $\left< b^{\ell_{m}}, T^{\text{end}}\right>$, and contains node $\left<\nu_{t}^{\ell_{m}}, t\right>$.
        \item $y_{t}^{\ell_{m}}$ must include the portion of $y_{t-1}^{\ell_{m}}$ that has already been traversed by time $t-1$.
        \item We denote the sequence of intermediate nodes in $y_{t}^{\ell_{m}}$ where requests are being picked up or dropped off as $\vec{S}_{t}^{\ell_{m}}$, and we refer to it as the sequence of stops associated with $y_{t}^{\ell_{m}}$. This sequence of stops $\vec{S}_{t}^{\ell_{m}}$  must include pickup nodes $\left< \rho_q, \widehat{t}^{\text{pick}}_{q}\right>$ with $\widehat{t}^{\text{pick}}_{q} \geq t_{q}^{\text{pick}}$ and drop-off nodes $\left< \delta_q, \widehat{t}^{\text{drop}}_{q}\right>$ with $\widehat{t}^{\text{drop}}_{q} > \widehat{t}^{\text{pick}}_{q}$ for all requests $r_q \in \mathcal{A}_{t}^{\ell_{m}}$.
        \item The number of requests inside the robot must be below the maximum capacity of the robot at all times (i.e. $c_{t}^{\ell_{m}} \leq C$ for $t \in \vec{T}$).
    \end{enumerate}
\end{definition}

We define the space of all possible valid robot routes for the fleet at time $t$ on day $d$ as $\mathcal{Y}_{t} = \mathcal{Y}_{t}^{\ell_{1}} \times \dots \times \mathcal{Y}_{t}^{\ell_{M}}$.

\subsection{State Representation and Control Space}

We represent the state of the system at time $t$ on day $d$ as a tuple $x_{t} = \left< \vec{\nu}_{t}, \vec{c}_{t}, \vec{y}_{t}, \vec{A}_{t}, \mathcal{R}_{t} \right>$, where: $\vec{\nu}_{t} = [\nu_{t}^{\ell_{1}}, \dots, \nu_{t}^{\ell_{M}}]$ corresponds to the vector of locations for all robots $\ell_{m} \in \mathcal{L}$ at time $t$; the vector $\vec{c}_t = [c_{t}^{\ell_{1}}, \dots, c_{t}^{\ell_{M}}]$ contains the number of requests inside each robot $\ell_{m} \in \mathcal{L}$ at time $t$; the vector $\vec{y}_{t} \in \mathcal{Y}_{t}$  contains the selected valid routes for all robots $\ell_{m} \in \mathcal{L}$ before the incoming requests $\mathcal{R}_{t}^{\text{realtime}}$ enter the system; the vector $\vec{A}_{t} \in \mathcal{F}_{t}$ contains the selected valid request assignments for the robot fleet $\mathcal{L}$ before the incoming requests $\mathcal{R}_{t}^{\text{realtime}}$ enter the system; and the set $\mathcal{R}_{t}$ corresponds to the requests that have entered the system before time $t$ on day $d$ (see Sec.~\ref{subsec:transportation_requests}). 

We define the set of possible controls for the robot fleet at time $t$ on day $d$ as $\mathcal{U}(x_{t}) = \{ [\vec{A}_{t+1}, \vec{y}_{t+1}] \big | \vec{A}_{t+1} \in \mathcal{F}_{t+1}, \vec{y}_{t+1} \in \mathcal{Y}_{t+1} \}$. Hence, at each time step $t \in \vec{T}$, the system must choose a specific control $u_{t} \in \mathcal{U}(x_{t})$ by specifying a valid request assignment for the fleet $\vec{A}_{t+1}$ and a valid set of routes $\vec{y}_{t+1}$ that consider incoming requests $\mathcal{R}^{\text{realtime}}_{t}$.

\subsection{Constraints and Cost Structure}
\label{subsec:contraints_cost}
Given that a request $r_q$ gets assigned to an arbitrary robot $\ell_{m}$ under control $u_{t}$ and state $x_{t}$, we define the wait time at the pick up station for request $r_q$ as $w^{\text{pick}}_{r_q}(x_{t}) = \widehat{t}^{\text{pick}}_{q}(x_{t}) - t_{q}^{\text{pick}}$, where $\widehat{t}^{\text{pick}}_{q}(x_{t})$ corresponds to request $r_q$'s planned pickup time given by the routes in state $x_{t}$, and $t_{q}^{\text{pick}}$ is request $r_q$'s desired pickup time. Similarly, we define the wait time inside the robot before request $r_q$ gets dropped off as $w^{\text{drop}}_{r_q}(x_{t}) = \widehat{t}^{\text{drop}}_{q}(x_{t}) - (\widehat{t}^{\text{pick}}_{q}(x_{t})+\textit{time}(\rho_q, \delta_q))$, where $\widehat{t}^{\text{drop}}_{q}(x_{t})$ is request $r_q$'s planned drop off time given by the routes in state $x_{t}$, and $\textit{time}(\rho_q, \delta_q)$ is the direct trip time between the pickup $\rho_q$ and drop off $\delta_q$ locations. We model request impatience by introducing hard constraints on the wait times at the pickup station and wait times inside the robot before a request gets dropped off. 
We define $W^{\text{pick}}$ and $W^{\text{drop}}$ as the maximum amount of time a request is willing to wait at the pickup station and inside the robot, respectively, before the request is canceled. 
If a request is assigned to a robot using a valid route as defined in Sec.~\ref{subsec:robot_routes}, but its assignment results in a violation of the maximum wait times for itself or any of the previously assigned requests, then the route is considered invalid. 
If request $r_q$ can't be assigned to any robot because its assignment results in an invalid route for every robot under consideration, then the request is rejected and we set $a_q = -1$, $w^{\text{pick}}_{r_q}(x_{t}) = \infty$, and $w^{\text{drop}}_{r_q}(x_{t}) = \infty$. Rejected requests are added to the set $\mathcal{R}^{\text{rejected}}$. 

Using these wait times, we define $h(x_{t})$ as the immediate cost of state $x_{t}$:
\begin{equation}
    \label{eq:immediate_cost}
    h(x_{t}) = \sum_{r_q \in \mathcal{R}_{t}} w^{\text{pick}}_{r_q}(x_{t}) + w^{\text{drop}}_{r_q}(x_{t})
\end{equation}
 where the cost is expressed as the sum of the wait times given by the routes contained in state $x_{t}$ for all requests that have entered the system before time $t$. If no request has entered the system before time $t$ then we set $h(x_{t}) = 0$. For time steps $t' < T^{\text{start}}$ we set $h(x_{t'}) = 0$.

 In the following sections we will cover the definition of stability that we use in this paper, and the challenges associated with the proactive and multi-capacity aspects of the routing problem.

\section{Stability of a Robot Routing Policy}
\label{subsec:stability_definition}
We define a policy $\pi = \{ \mu_{T^{\text{start}}}, \dots, \mu_{T^{\text{end}}}\}$ for a given day $d$ as a set of functions that produces control $u_{t} = \mu_t(x_{t}) \in \mathcal{U}(x_{t})$ based on the current state $x_{t}$. We denote the state transition function as $f$, such that $x_{t+1} = f(x_{t}, u_{t}, \mathcal{R}^{\text{realtime}}_{t})$ and robots move along the routes contained in $u_{t}$. We define the stage cost for this transition as:
\begin{equation}
    \label{eq:stage_cost}
    g(x_{t}, u_{t}, \mathcal{R}^{\text{realtime}}_{t}) = h(x_{t+1}) - h(x_{t})
\end{equation}
and the cost at the beginning of the horizon as 
\begin{equation}
    \label{eq:initial_stage_cost}
    g(x_{T^{\text{start}}}, u_{T^{\text{start}}}, \mathcal{R}^{\text{realtime}}_{T^{\text{start}}}) = h(x_{T^{\text{start}}+1})
\end{equation}
Since at time $t$ we do not have access to requests that haven't entered the system yet, we consider a stochastic formulation for the cost of executing a policy $\pi$. Similarly to \cite{Bertsekas2023RLCourse}, we define the expected cost of executing a policy $\pi$ from state $x_{t}$ to the end of the horizon $T^{\text{end}}$ as 
\begin{align}
    J_{\pi} (x_{t}) =  \mathds{E} \left[  \sum_{t' = t}^{T^{\text{end}}} g(x_{t'}, \mu_{t'}(x_{t'}), \mathcal{R}^{\text{realtime}}_{t'}) \right] \nonumber
\end{align}
where the expectation is taken over the random variables associated with the number and locations of future requests contained in the sets $\mathcal{R}^{\text{realtime}}_{t}$,$ \dots$, $\mathcal{R}^{\text{realtime}}_{T^{\text{end}}}$. We use this cost definition and the formulation given in \cite{Bertsekas2022AlphaZero} to define stability as follows:
\begin{definition}
    \label{def:stability}
    A policy $\pi$ is stable if $J_{\pi}(x_{t}) < \infty$ for all states $x_{t}$ for $t \in \vec{T}$
\end{definition}
This means that a policy $\pi$ for a given day $d$ is stable if the cost is bounded away from infinity for all reachable states during the operating times $\vec{T}$.

\section{Challenges of the Proactive Multi-Capacity Robot Routing Problem}
In this section, we will cover why the proactive multi-capacity robot routing problem is hard, and what are the specific challenges that we will focus on this paper. In general, we are interested in finding a policy $\pi^{*}$ such that
\begin{align}
    J_{\pi^{*}}(x_{t}) = \min_{\pi \in \Pi} J_{\pi}(x_{t}) \nonumber
\end{align}
for $t \in \vec{T}$, where $\Pi$ is the space of all policies for a given day $d$. Solving this minimization optimally to find $\pi^{*}$ is intractable since even simplified versions of the problem are NP-hard \cite{Lenstra2006}. For this reason we consider policy improvement schemes, such as rollout-based methods \cite{bertsekas2019reinforcement} \cite{bertsekas2020rollout}, which allow us to obtain a lower cost policy by improving upon an easy to compute heuristic known as a base policy. 
We consider rollout methods for tackling the proactive dynamic multi-capacity robot routing problem due to their promising results in combinatorial applications \cite{bertsekas2020rollout, Bhattacharya2020MultiagentRollout} and single-occupancy robot routing problems \cite{Garces2023ICRA, Garces2024ICRA, Garces2024AAMAS}, and the recent development of a theoretical framework that states that a rollout algorithm with a reasonable, stable base policy will learn a stable near-optimal policy \cite{Bertsekas2022AlphaZero}.

We define $\bar{\pi} = \{ \bar{\mu}_{T^{\text{start}}}, \dots, \bar{\mu}_{T^{\text{end}}}\}$ as the base policy of choice for a given day $d$. In the rollout setup, we are then interested in finding an approximate policy $\tilde{\pi} = \{ \tilde{\mu}_{T^{\text{start}}}, \dots, \tilde{\mu}_{T^{\text{end}}}\}$, such that the minimizing action for state $x_{t}$ is given by:

\begin{align}
    \Tilde{\mu}_{t}(x_{t}) \in \argmin_{u_{t} \in \mathcal{U}(x_{t})} & \mathds{E}[g(x_{t}, u_{t}, \mathcal{R}^{\text{realtime}}_{t}) + \Tilde{J}_{\bar{\pi}}(x_{t+1})]
    \label{eq:rollout}
\end{align}

Where $x_{t+1} = f(x_{t}, u_{t}, \mathcal{R}^{\text{realtime}}_{t})$ corresponds to the next state given by the transition function, and $\Tilde{J}_{\bar{\pi}}(x_{t+1}) = \sum_{t'=t+1}^{(t+1)+K} g(x_{t'}, \bar{\mu}_{t'}(x_{t'}), \mathcal{R}^{\text{realtime}}_{t'})$ is a cost approximation obtained by applying the base policy $\bar{\pi}$ for $K$ time steps. We estimate the expectation in Eq.~(\ref{eq:rollout}) using Monte-Carlo simulations. This formulation allows us to get a reasonable approximation, but it is still not amenable to the multi-capacity routing setting, where multiple requests can be assigned to the same robot even before the originally assigned requests are dropped off. To generalize this formulation to the multi-capacity routing setting, while allowing the rollout algorithm to consider potential future requests proactively, we must first solve the following four challenges:\\
\textbf{Challenge 1:} We must adapt the multi-agent rollout algorithm to handle the combinatorial dependence on the number of agents and the number of requests associated with the multi-capacity setting. \\
\textbf{Challenge 2:} We must construct a base policy for the multi-capacity setting that is computationally efficient and provably stable so the rollout algorithm can learn a near-optimal stable policy as suggested by previously developed theory \cite{Bertsekas2022AlphaZero}. \\
\textbf{Challenge 3:} We must design a mechanism for accurately estimating the number and locations of future requests and the times at which such requests might enter the system, in order to be able to approximate the expectation in Eq.~\ref{eq:rollout} accurately. \\
\textbf{Challenge 4:} We must develop an algorithm to selectively explore a subset of the control space in order to make the minimization in Eq.~\ref{eq:rollout} tractable while still generating assignments and routes that reflect real-time demand and improve upon the base policy.

In this paper, we propose a system that addresses these four challenges.

\section{Our Approach}
To solve \textbf{Challenge 1}, we propose a new variation of multi-agent rollout that performs assignments one request at a time. We call this new rollout algorithm one-request-at-a-time rollout. Compared to the standard one-agent-at-a-time rollout algorithm \cite{bertsekas2020rollout, Bertsekas2023RLCourse}, our algorithm does not iterate over each robot to find the subset of requests that will result in the lowest cost for the immediate assignment, but instead it assigns one request at a time to robots in the fleet. By doing this, our algorithm can iteratively build routes by considering single-request assignments instead of having to search a combinatorial growing action space associated with the generation of request subsets. In this sense, similarly to the standard one-agent-at-a-time rollout algorithm, our proposed algorithm exchanges complexity in the control space for complexity in the state space but it does so by considering individual request's assignments instead of individual agent's actions. This distinction is important since the fleet size is fixed and hence a single request must consider a constant number of robots in its assignment, while a robot that must choose a set of requests to service must consider a subset of the currently available requests which results in a combinatorial search.

To solve \textbf{Challenge 2}, we design a greedy base policy that is computationally efficient, but still allows for reasonable allocations of requests to robots. In the theoretical results section, we show that our proposed base policy is stable as long as we have a sufficiently large fleet size. We also provide an algorithm to find this sufficiently large fleet size using historical data, and show that our fleet sizing algorithm asymptotically guarantees stability of our proposed base policy for any request distribution as the number of historical days considered in the fleet sizing algorithm goes to infinity (i.e. we can perfectly estimate the underlying request distribution).

To solve \textbf{Challenge 3}, we design a generative model that uses time-series transformers to predict the number of requests at every time interval as a time-series. The pickup and drop-off locations for each request are determined using samples from empirically derived conditional distributions. 

Finally, to solve \textbf{Challenge 4}, we propose a promising-route generation mechanism that uses unsupervised clustering of currently available requests to determine potentially beneficial multi-request swaps across robots. The method then uses these swaps to generate potential routes for the robot fleet. 
In the following subsections, we will cover the solutions for all four challenges in more detail.

\subsection{One-request-at-a-time Rollout}
Our proposed one-request-at-a-time rollout algorithm modifies the standard one-agent-at-a-time rollout algorithm \cite{bertsekas2020rollout} \cite{Bhattacharya2020MultiagentRollout} \cite{Garces2023ICRA} to consider incoming requests as the agents. We consider requests that enter the system at a given time step $t$ one at a time sorting them based on their desired pickup time. For the first time step $t=T^{\text{start}}$ for day $d$, we consider both the set of scheduled requests $\mathcal{R}^{\text{sched}}$ and the set of incoming requests $\mathcal{R}^{\text{realtime}}_{T^{\text{start}}}$, considering the requests in $\mathcal{R}^{\text{sched}}$ first. 
As controls are chosen for requests, next requests in the order consider controls of previous requests as fixed and use controls dictated by the base policy as the controls for future requests. More specifically, let's consider a request $r_q$, which enters the system at time $t$ after $z$ other requests have been considered for assignment at time step $t$. From this, we have that its one-request-at-a-time rollout control is:
\begin{align}
    \tilde{u}_{t}^{q} \in \argmin_{u_{t}^{q} \in \mathcal{U}^{q} (x_{t})} & \mathds{E}[g(x_{t}, \bar{u}, \mathcal{R}^{\text{realtime}}_{t}) + \Tilde{J}_{\bar{\pi}}(x_{t+1})]
    \label{eq:orat_formulation}
\end{align}
where the set $\mathcal{U}^{q} (x_{t})$ is the space of controls available to request $r_q$ at state $x_{t}$ given that the previous $z$ requests already have selected their controls;
the vector
\begin{align*}
    \bar{u} = (\tilde{u}_{t}^{q-z}, \dots, u_{t}^{q}, \dots \bar{\mu}_{t}^{|\mathcal{R}^{\text{realtime}}_{t}|}(x_{t}))
\end{align*} is the sequence of controls for all requests in $\mathcal{R}^{\text{realtime}}_{t}$, where requests before $r_q$ have their respective minimizing controls, while requests after $r_q$ have their controls dictated by the base policy $\bar{\pi}$; and
\begin{align*}
    \Tilde{J}_{\bar{\pi}}(x_{t+1}) = \sum_{t'=t+1}^{(t+1)+K} g(x_{t'}, \bar{\mu}_{t'}(x_{t'}), \mathcal{R}^{\text{realtime}}_{t'})
\end{align*}
is a cost approximation obtained by applying the base policy $\bar{\pi}$ for $K$ time steps.

Now that we have the one-request-at-a-time formulation, we can move on to cover the details of the base policy selected for this problem, the generative model used to generate future samples to accurately approximate the expectation in Eq.\ref{eq:orat_formulation}, and the promising-route generator that is used to create the set of feasible controls $\mathcal{U}^{q} (x_{t'})$ for request $r_q$.

\subsection{Base policy}
\label{subsec:greedy_base_policy}
In order to leverage the theoretical guarantees given by \cite{Bertsekas2022AlphaZero}, which state that a rollout policy with a stable base policy will learn a stable near-optimal policy, we are interested in designing a stable base policy that is computationally efficient and produces reasonable routes. To achieve this, we propose a greedy base policy that we denote as $\pi^{\text{greedy}}$, and we set it as the base policy for the one-request-a-time rollout algorithm such that $\bar{\pi} = \pi^{\text{greedy}}$. The greedy base policy $\pi^{\text{greedy}}$ inserts every request following the algorithm described in algo.~\ref{algo:greedy_base_policy}. This algorithm uses the methods \textit{GetRobotRoute}$(x_{t}, \ell_{m})$, \textit{InsertionProcedure}($y_{t}^{\ell_{m}}, \ell_{m}, r_q$), and \textit{UpdateRobotRoute}($x_{t}, \tilde{y}_{t}^{\ell_{m}}, \ell_{m}$). 

\begin{algorithm}
    \caption{Greedy base policy}
    \begin{algorithmic}[1]
    \label{algo:greedy_base_policy}
        \REQUIRE current state $x_{t}$, incoming request $r_q$.
        \ENSURE the greedy control $u^{\text{greedy}}_{t}$.
        \STATE $U = []$
        \FOR{$\ell_{m} \in \mathcal{L}$}
            \STATE $y_{t}^{\ell_{m}} = \textit{GetRobotRoute}(x_{t}, \ell_{m})$
            \STATE $\tilde{y}_{t}^{\ell_{m}} = \textit{InsertionProcedure}(y_{t}^{\ell_{m}}, \ell_{m}, r_q)$
            \IF{$\tilde{y}_{t}^{\ell_{m}}$ is not None}
                \STATE $\vec{A}_{t}, \vec{y}_{t} = \textit{UpdateRobotRoute}(x_{t}, \tilde{y}_{t}^{\ell_{m}}, \ell_{m})$
                \STATE $u_{t} = [\vec{A}_{t}, \vec{y}_{t}]$
                \STATE $U.\text{append}(u_{t})$
            \ENDIF
        \ENDFOR
        \IF{U is empty}
        \RETURN None
        \ELSE
        \STATE $u^{\text{greedy}}_{t} = \argmin_{u' \in U}g(x_{t}, u', \{r_q\})$
        \RETURN $u^{\text{greedy}}_{t}$
        \ENDIF
    \end{algorithmic}
\end{algorithm}

The method \textit{GetRobotRoute}$(x_{t}, \ell_{m})$ extracts the robot route $y_{t}^{\ell_{m}}$ for robot $\ell_{m}$ from the fleet routes in state $x_{t}$. The method \textit{InsertionProcedure}($y_{t}^{\ell_{m}}$, $\ell_{m}$, $r_q$) inserts request $r_q$'s pickup and dropoff locations in the route $y_{t}^{\ell_{m}}$, using the procedure described in algo.~\ref{algo:insertion_procedure}. 
Finally, The method \textit{UpdateRobotRoute}($x_{t}$, $\tilde{y}_{t}^{\ell_{m}}$, $\ell_{m}$) updates the routes inside state $x_t$ using the new route $\tilde{y}_{t}^{\ell_{m}}$ produced by the insertion procedure to generate a new assignment vector $\vec{A}_{t}$ and new routes $\vec{y}_{t}$ for the entire fleet. If the greedy policy returns None, the old routes in state $x_{t}$ are kept and the request $r_q$ gets added to the set $\mathcal{R}^{\text{rejected}}$.

\begin{algorithm}
    \caption{InsertionProcedure}
    \begin{algorithmic}[1]
    \label{algo:insertion_procedure}
        \REQUIRE current route $y_{t}^{\ell_{m}}$, agent identifier $\ell_{m}$, incoming request $r_q$.
        \ENSURE new planned route $\tilde{y}_{t}^{\ell_{m}}$ for agent $\ell_{m}$.
        \STATE $Y = []$
        \STATE $\rho_q, \delta_q = \textit{GetPickDropLocations}(r_q)$
        \STATE $\vec{S}_{t}^{\ell_{m}} = \textit{GetStopSeq}(y_{t}^{\ell_{m}})$
        \FOR{\textit{PickIndex} in $[0, \dots \textit{length}(\vec{S}_{t}^{\ell_{m}})-2]$}
            \STATE $\vec{S}_{t}^{\text{pick}} = \textit{InsertStop}(\rho_q, \textit{PickIndex}, \vec{S}_{t}^{\ell_{m}})$
            \FOR{\textit{DropIndex} in $[\textit{PickIndex}, \dots, \textit{length}(\vec{S}_{t}^{\ell_{m}})-2]$}
                \STATE $\vec{S}_{t}^{\text{drop}} = \textit{InsertStop}(\delta_q, \textit{DropIndex}, \vec{S}_{t}^{\text{pick}})$
                \STATE $y_{t} = \textit{GenerateRoute}(\vec{S}_{t}^{\text{drop}})$
                \IF{$\textit{ValidRoute}(y_{t})$}
                    \STATE $Y.\text{append}(y_{t})$
                \ENDIF
            \ENDFOR
        \ENDFOR
        \STATE $\tilde{y}_{t}^{\ell_{m}} = \textit{GetMinCostRoute}(Y)$
        \RETURN $\tilde{y}_{t}^{\ell_{m}}$
    \end{algorithmic}
\end{algorithm}

The method \textit{InsertionProcedure}($y_{t}^{\ell_{m}}$, $\ell_{m}$, $r_q$) (see algo.~\ref{algo:insertion_procedure}) uses $6$ main functions. The function $\textit{GetPickDropLocations}(r_q)$ obtains the pickup location $\rho_q$ and drop-off location $\delta_q$ associated with request $r_q$.
The function $\textit{GetStopSeq}(y_{t}^{\ell_{m}})$ retrieves the stop sequence $\vec{S}_{t}^{\ell_{m}}$ (as defined in Sec.~\ref{subsec:robot_routes}) associated with the current route $y_{t}^{\ell_{m}}$ for agent $\ell_{m}$. 
The function \textit{InsertStop}($\rho_q$, $\textit{PickIndex}$, $\vec{S}_{t}^{\ell_{m}}$) inserts $\rho_q$ as a stop in the stop sequence $\vec{S}_{t}^{\ell_{m}})$, placing it on index $\textit{PickIndex}+1$ and shifting all the stops in the sequence that previously occupied indices higher than or equal to $\textit{PickIndex}+1$. 
Similarly, the function \textit{InsertStop}($\delta_q$, \textit{DropIndex}, $\vec{S}_{t}^{\text{pick}}$) has the same functionality as the previous \textit{InsertStop} function, but considers the drop-off location of the request $\delta_q$, and the modified stop sequence $\vec{S}_{t}^{\text{pick}}$ instead. 
The function \textit{GenerateRoute}($\vec{S}_{t}^{\text{drop}}$) creates a route by constructing a directed path over $G^{\text{time}}$ using the shortest path between adjacent stops in the sequence of stops $\vec{S}_{t}^{\text{drop}}$. 
Robots that reach a pickup location before the desired pickup time of a request use edges in $E^{\text{wait}}$ to wait at that location until the request's desired pickup time. 
We use the function $\textit{ValidRoute}(y_{t})$ to check that the generated route $y_t$ is valid, meaning that it does not violate the structural conditions of a valid route described in Sec.~\ref{subsec:robot_routes} or the time constraints described in Sec.~\ref{subsec:contraints_cost}. 
Once we have iterated over all possible insertion points in the sequence of stops, and have generated a list of valid routes $Y$, we select the route with the minimum wait times by using the function  $\textit{GetMinCostRoute}(Y)$. 
We consider the wait times of requests as described in Sec.~\ref{subsec:contraints_cost}. Note that all original requests that were part of the route must still be part of the route after request $r_q$ is inserted, and hence the cost associated with the new route is positive. 
If $Y$ is empty, then $\tilde{y}_{t}^{\ell_{m}}$ is set to None. We show an example of this insertion procedure for a route with a single stop before the depot in Fig.~\ref{fig:insertion_example}.

\begin{figure}
    \centering
    \includegraphics[width=0.99\linewidth]{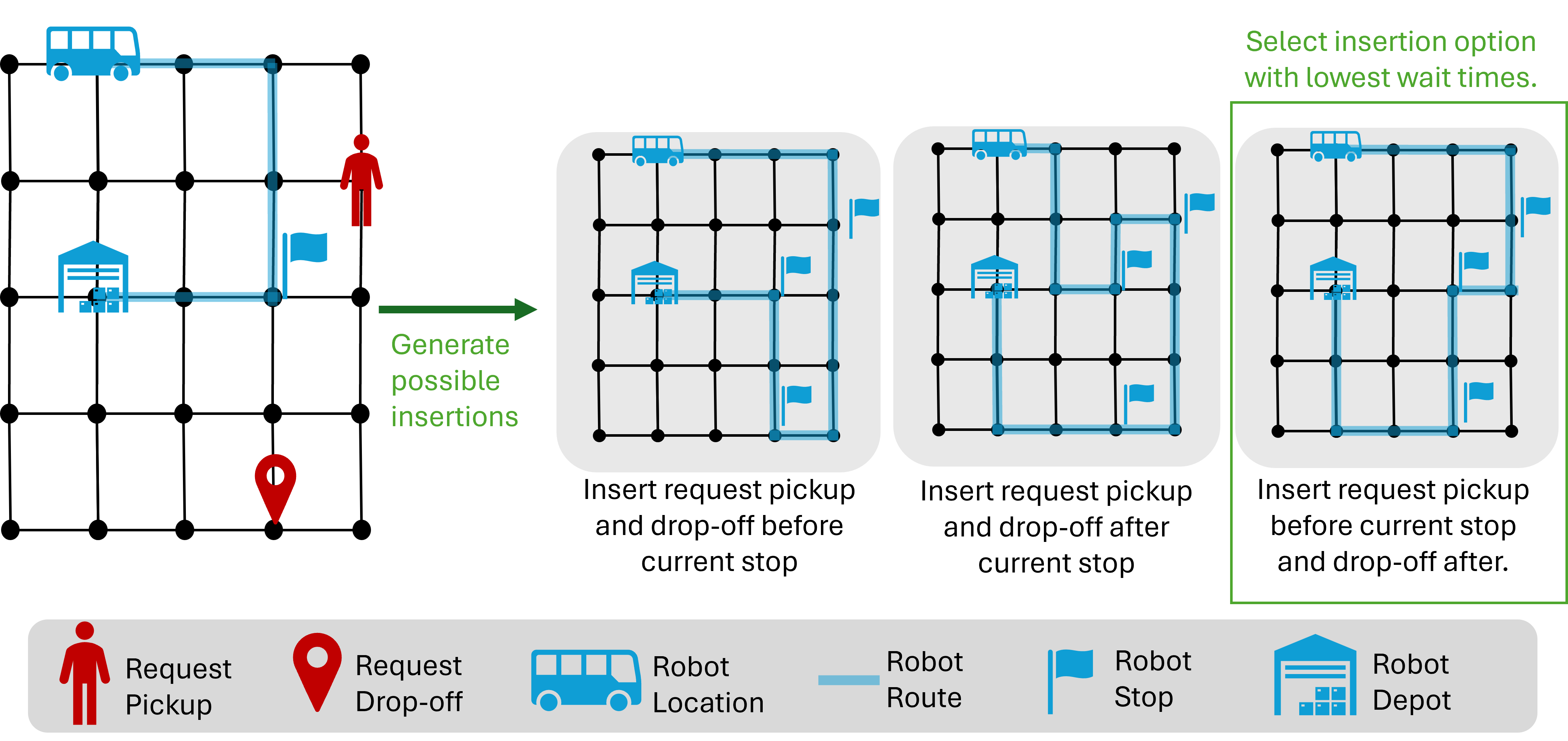}
    \caption{Example of executing the insertion procedure detailed in algo.~\ref{algo:insertion_procedure}, when there is only one stop in the bus route besides the depot.}
    \label{fig:insertion_example}
\end{figure}

In the theoretical results section (see Sec.~\ref{sec:theoretical_results}), we will show that our proposed greedy base policy $\pi^{\text{greedy}}$ is stable as long as we have a sufficiently large fleet size, and we will provide a fleet sizing algorithm that provably finds this sufficiently large fleet size.

\subsection{Generative Model for Predicting Future Requests}
\label{subsec:generative_model}
In order to better estimate future costs associated with the assignment of future requests, we propose a generative model for predicting potential future requests. Our proposed generative model is composed of three main components: 1) a time-series transformer used to predict the number of requests entering the system for the next hour using the requests observed during the current hour, 2) A conditional distribution for the location of the pickups estimated using historical data, and 3) A conditional distribution for the location of the drop-offs also estimated using historical data. We will cover the details of these three components in the following subsections.

\subsubsection{Time-series Transformer for predicting the time and number of requests}
For the task of predicting the number of real-time requests in the future, we consider the current hour of operation. We assume we can partition the current hour $h$ of operation for day $d$ into $N$ non-overlapping intervals of equal size $[I^{1}_{h}, \dots, I^{N}_{h}]$. For an arbitrary interval $I^{k}_{h}$, we denote the number of requests inside the interval as $R^{k}_{h} = \sum_{t \in I^{k}_{h}} |\mathcal{R}^{\text{realtime}}_{t}|$. We denote the set of context variables associated with each interval $I^{k}_{h}$ as $O^{k}_{h}$. These context variables include temporal identifiers like the hour, day of the week, month, and interval index, in addition to weather information including temperature and precipitation for that specific hour. For each time interval, we generate a feature vector $Z^{k}_{h} = [R^{k}_{h}, O^{k}_{h}]$. We denote the time series of $N$ feature vectors for the current hour $h$ as $\vec{Z}_{h} = [Z^{1}_{h}, \dots, Z^{N}_{h}]$ and the time-series of context variables for the next hour as $\vec{O}_{h+1} = [O^{1}_{h+1}, \dots, O^{N}_{h+1}]$. We modify the time-series transformer architectures \cite{wu2023timesnet} \cite{nie2023a} \cite{wang2024timemixer} \cite{liu2024itransformer} \cite{wang2024tssurvey} to handle the context variables, allowing us to feed $\vec{Z}_{h}$ and $\vec{O}_{h+1}$ directly into the transformer to obtain a predicted sequence of requests $R^{\text{pred}}_{h+1} = [\tilde{R}^{1}_{h+1}, \dots, \tilde{R}^{N}_{h+1}]$, where $\tilde{R}^{k}_{h+1}$ is the predicted number of requests for interval $k$ in the following hour $h+1$. This sequence of requests determines the number of samples that will be obtained from the two conditional distributions covered in the following subsection.

\subsubsection{Conditional distributions for the pickup and drop-off locations}
\label{subsubsec:conditional_dist}
Given historical data we compute the empirical distribution for the location of the pickups by conditioning on a specific month $s_m$, a specific day of the week $s_w$, and a specific hour of the day $s_h$.
We define the number of historical requests with pickups at intersection $i \in V$ for a specific month $s_m$, a specific day of the week $s_w$, and a specific hour of the day $s_h$, as $n^{\text{pick}(i)}_{(s_m, s_w, s_h)}$. Using this, we define the probability of a request $r_q$'s pickup occurring at location $i$ given that the request enter the system during a specific month $s_m$, a specific day of the week $s_w$, and a specific hour of the day $s_h$ as:
\begin{align*}
    Pr(\rho_q = i | s_m, s_w, s_h) = \frac{n^{\text{pick}(i)}_{(s_m, s_w, s_h)}}{\sum_{j \in V} n^{\text{pick}(j)}_{(s_m, s_w, s_h)}}
\end{align*}

To obtain the conditional distribution for the drop-off locations, we condition on a specific month $s_m$, a specific day of the week $s_w$, and the specific pickup location $\rho$ of historical requests. We define the number of historical requests with drop-offs at intersection $i \in V$ for a specific month $s_m$, a specific day of the week $s_w$, and with a specific pickup $\rho$ as $n^{\text{drop}(i)}_{(s_m, s_w, \rho)}$. We hence define the probability of a request $r_q$'s drop-off occurring at location $i$ given that its pickup happened at location $\rho$, and the request entered the system during a specific month $s_m$, and a specific day of the week $s_w$ as:
\begin{align*}
    Pr(\delta_q = i | s_m, s_w, \rho_q=\rho) = \frac{n^{\text{drop}(i)}_{(s_m, s_w, \rho)}}{\sum_{j \in V} n^{\text{drop}(j)}_{(s_m, s_w, \rho)}}
\end{align*}

\subsubsection{Sampling potential future requests}
We are interested in generating sequences of potential future requests to be used in the approximation of the expectation in Eq.~\ref{eq:orat_formulation}. We consider a sequence of future requests for the next hour of operation as a single future scenario. For this reason, we are interested in generating multiple future scenarios by sampling multiple sequences of requests.
We define the total number of future scenarios to be considered as $N^{\text{scenarios}}$. 
To generate the samples of potential future requests that compose a single scenario, we first consider the number of expected future requests for the next hour $R^{\text{pred}}_{h+1}$ which is obtained from the time-series transformer. We assume that if a request was predicted to come into the system during time interval $I^{k}_{h+1}$, it will enter the assignment process at the end of such time interval. For each predicted request, we use the conditional distributions described in Sec.~\ref{subsubsec:conditional_dist} to sample first the pickup location using the month, day of the week, and hour contained in the context variables $O^{k}_{h+1}$. We then use the sampled pickup location as a conditioning variable for the drop-off distribution. We sample the drop-off location for the predicted request using the sampled pickup location, and the month and day of the week contained in the context variables $O^{k}_{h+1}$. We repeat this process as many times as the number of requests predicted by the time-series transformers. All the sampled requests for a single future scenario are grouped in the set $\mathcal{R}^{\text{future}}_{l}$ where $l$ corresponds to the index of the future scenario. We denote the final vector of future scenarios as $R^{\text{future}} = [\mathcal{R}^{\text{future}}_{1}, \dots, \mathcal{R}^{\text{future}}_{N^{\text{scenarios}}}]$, and we use it in combination with the base policy to approximate the expectation of our proposed rollout framework contained in Eq.~\ref{eq:orat_formulation}.

\subsection{Promising-routes generator}
\label{subsec:promising_action_generator}
In order to generate the set of actions $\mathcal{U}^{q}(x_{t})$ to be considered in the one-request-at-a-time rollout framework, we propose a promising-routes generator described in algo.~\ref{algo:promising_actions}. This promising-routes generator uses the methods $\textit{InsertReq}(r_q, x_t, \ell_m)$, $\textit{ExtractReqs}(x_t)$, \textit{ClusterReqs}(\textit{AvailReqs}), \textit{IsClusterValid}(\textit{Cluster}), \textit{RemoveReqs} (\textit{Cluster}, $x_{t}$, $\ell_{m}$), 
\textit{InsertReqs}(\textit{Cluster}, $\vec{A'}_{t}$, $\vec{y'}_{t}$, $\ell_{k}$), and \textit{GetMinControls}($U$, $N^{\text{routes}}$). 

\begin{algorithm}[!ht]
    \caption{Promising-routes generator}
    \begin{algorithmic}[1]
    \label{algo:promising_actions}
        \REQUIRE current state $x_{t}$, current request $r_q$, maximum number of controls $N^{\text{routes}}$
        \ENSURE Set of promising actions $\mathcal{U}^{q}(x_{t})$
        \STATE U = []
        \FOR{$\ell_{m} \in \mathcal{L}$}
        \STATE $\vec{A}_{t}, \vec{y}_{t} = \textit{InsertReq}(r_q, x_t, \ell_m)$
        \IF{$\vec{y}_{t}$ is not None}
            \STATE $\bar{u}_{t} = [\vec{A}_{t}, \vec{y}_{t}]$
            \STATE $U.\text{append}(\bar{u}_t)$
            \STATE $\mathcal{R}_{t} = \textit{ExtractReqs}(x_t)$
            \STATE $\textit{AvailReqs} = \{ r_l | r_l \in \mathcal{R}_{t}, \psi^{\text{drop}}_l = 0, a_q = \ell_{m}\}$
            \STATE $\textit{ReqClusters} = \textit{ClusterReqs}(\textit{AvailReqs})$
            \FOR{\textit{Cluster} in \textit{ReqClusters}}
                \IF{\textit{IsClusterValid}(\textit{Cluster})}
                    \STATE $\vec{A'}_{t}, \vec{y'}_{t} = \textit{RemoveReqs}(\textit{Cluster}, x_{t}, \ell_{m})$
                    \FOR{$\ell_{k} \in \mathcal{L} \setminus \{ \ell_{m}\}$}
                    \STATE $\vec{A}_{t}, \vec{y}_{t} = \textit{InsertReqs}(\textit{Cluster}, \vec{A'}_{t}, \vec{y'}_{t}, \ell_k)$
                    \IF{$\vec{y}_{t}$ is not None}
                        \STATE $u_{t} = [\vec{A}_{t}, \vec{y}_{t}]$
                        \STATE $U.\text{append}(u_t)$
                    \ENDIF
                    \ENDFOR
                \ENDIF
            \ENDFOR
        \ENDIF
        \ENDFOR
        \STATE $\mathcal{U}^{q}(x_{t}) = \textit{GetMinControls}(U, N^{\text{routes}})$
        \RETURN $\mathcal{U}^{q}(x_{t})$
    \end{algorithmic}
\end{algorithm}

To generate a promising route, we start by considering an insertion without any request swaps. Algo.~\ref{algo:promising_actions} uses the method $\textit{InsertReq}(r_q, x_t, \ell_m)$ to insert a request in the route for robot $\ell_m$ using the greedy policy described in algo.~\ref{algo:greedy_base_policy}. Once we have appended the simple insertion to the list of possible routes, we consider requests assigned to robot $\ell_m$ that have not been dropped off yet.
For each request $r_l \in \textit{AvailReqs}$, we generate a feature vector $[\rho_l, \delta_l, t^{\text{pick}}_l]$. This feature vector is then used by the method \textit{ClusterReqs}(\textit{AvailReqs}) to cluster requests using the HDBSCAN clustering algorithm \cite{Campello2015} \cite{Malzer2020}. 
Each cluster of requests is checked using the \textit{IsClusterValid}(\textit{Cluster}) method, which verifies if all requests in the cluster are requests that have not been picked up yet. 
If all requests in a cluster have not been picked up, then the requests get unallocated from robot $\ell_{m}$ using $\textit{RemoveReqs}(\textit{Cluster}, x_{t}, \ell_{m})$, and then get allocated to a new robot $\ell_{k}$ using the method \textit{InsertReqs}(\textit{Cluster}, $\vec{A'}_{t}, \vec{y'}_{t}, \ell_k$). 
The $\textit{RemoveReqs}(\textit{Cluster}, x_{t}, \ell_{m})$ method removes the stops from where the requests in the cluster would have been picked up and dropped off and then generates a new route for the robot by using the shortest path between the remaining stops. 
The method \textit{InsertReqs}(\textit{Cluster}, $\vec{A'}_{t}, \vec{y'}_{t}, \ell_{k}$) uses the insertion procedure described in algo.~\ref{algo:insertion_procedure} to insert all requests in the cluster, one request at a time, and then it updates the assignment vector and corresponding routes. 
Finally, the method \textit{GetMinControls}$(U, N^{\text{routes}})$, evaluates the stage cost $g(x_{t}, u'_t, \{r_l\})$ for every $u'_t \in U$ and returns a list of controls in ascending cost. 
If the list a larger than $N^{\text{routes}}$, then it returns the first $N^{\text{routes}}$ elements in the list (the elements with the lowest stage cost).

Now that we have all the components for our approach, we will explore the stability of our proposed base policy $\pi^{\text{greedy}}$ in the theoretical results section (see Sec.~\ref{sec:theoretical_results}), and a case study on a real transportation system in the empirical results section (see Sec.~\ref{sec:empirical_results}).

\section{Theoretical Results}
\label{sec:theoretical_results}

In this section we show that our proposed greedy policy $\pi^{\text{greedy}}$ is stable if and only if there are enough robots available to service all requests in the operating horizon $\vec{T}$ for an arbitrary day $d \in \mathcal{D}$ almost surely.
We also show that an iterative fleet sizing algorithm that increases the fleet size every time a request can't be allocated \cite{Wallar2019OptimizingVD}, but continues the assignment process using all previously determined assignments, is not guaranteed to find a sufficiently large fleet size for the stability of $\pi^{\text{greedy}}$. We then propose an algorithm for finding a sufficiently large fleet size that addresses the limitations of the previous iterative algorithm by restarting the assignment and discarding request assignments made before the fleet size was increased. We show that our fleet sizing algorithm asymptotically guarantees the stability of $\pi^{\text{greedy}}$ for any request distribution $\mathcal{P}$ as the number of historical request samples goes to infinity (i.e $|\mathcal{H}| \to \infty$). 

In this section, we call offline execution any procedure that considers exclusively the historical request data contained in $\mathcal{H}$. On the other hand, we call online execution the process of running an algorithm using a new request sequence sampled from $\mathcal{P}$.

\subsection{General Assumptions}
For all our theoretical results, we consider the following four assumptions:

\begin{assumption}
    \label{assumption:depots}
    We assume depots are distributed throughout the graph $G$ such that any point in the map is at most $W^{\text{pick}}$ seconds away from the closest depot. In other words, for every $i \in V$, we assume $\exists b \in \mathcal{B}$ such that $\textit{time}(b, i) \leq W^{\text{pick}}$.
\end{assumption}

\begin{assumption}
    \label{assumption:buffer_time}
    We assume that the time $T^{\text{last}}$ is chosen in such a way that the time difference $T^{\text{end}} - T^{\text{last}}$ is three times the time it takes to traverse the diameter of $G$.
\end{assumption}

\begin{assumption}
    \label{assumption:bounded_requests}
    We assume that the number of scheduled requests and the number of real-time requests at each time step for a given day $d$ are bounded away from infinity such that $|\mathcal{R}^{\text{sched}}| < \infty$ and $|\mathcal{R}^{\text{realtime}}_{t}| < \infty, \forall t \in \vec{T}$
\end{assumption}

\begin{assumption}
    \label{assumption:request_distribution}
    We assume that all the requests for any day $d$ are drawn from a request distribution $\mathcal{P}$, and that we have a set of historical requests sets $\mathcal{H}$ as defined in Sec.~\ref{subsec:transportation_requests} for which the requests $\mathcal{R}_{T^{\text{end}}}\in\mathcal{H}$ for any historical date $\bar{d}$ are drawn from the same request distribution $\mathcal{P}$.
\end{assumption}

Assumption~\ref{assumption:depots} is designed to prevent pathological cases where requests enter the system, but they can't be serviced by any permissible routing policy since they are too far from any depot and hence would require a robot to be already moving in that direction before the request enters the system.
Assumption~\ref{assumption:buffer_time} is designed to allow robots to pick up and drop-off requests that enter on or before $T^{\text{last}}$, while still being able to return to their original depot on or before the end of the horizon $T^{\text{end}}$. This assumption prevents pathological cases where requests can't be serviced because they have entered the system too close to the end of the horizon. The time difference $T^{\text{end}} - T^{\text{last}}$ is chosen to be three times the time it takes to traverse the diameter of the graph $G$ since that value is the worst case travel time required to service a request, where one diameter is traversed to reach the pickup of the request, one diameter is traversed to go from the pickup to the drop-off location, and a third diameter is traversed to return from the drop-off location to the original depot for the robot.
Assumption~\ref{assumption:bounded_requests} imposes a realistic constraint, assuming that a finite number of requests are scheduled a-priori and a finite number of requests enter the system at each time step.
Finally, assumption~\ref{assumption:request_distribution} guarantees that the request distribution that produced the historical data is the same as the request distribution encountered during online execution i.e. we consider "in-distribution" requests only.

Now that we have these assumptions, we can move on to our theoretical claims.

\subsection{Conditions for Stability of the Greedy Base Policy $\pi^{\text{greedy}}$}
\label{subsec:theoretical_stability_def}
In this subsection, we are interested in determining the conditions under which the base policy $\pi^{\text{greedy}}$ described in Sec.~\ref{subsec:greedy_base_policy} is stable. Specifically, we consider conditions on the fleet size as shown in the following theorem.

\begin{theorem}
    \label{theorem:stabiltiy_properties}
    Given Assumptions~\ref{assumption:depots}, \ref{assumption:buffer_time}, \ref{assumption:bounded_requests}, \ref{assumption:request_distribution}, and a fixed fleet size $|\mathcal{L}|$, our proposed greedy policy $\pi^{\text{greedy}}$ is stable as defined in Def.~\ref{def:stability} if and only if the fleet size $|\mathcal{L}|$ is large enough to allow $\pi^{\text{greedy}}$ to service all requests $\mathcal{R}_{T^{\text{end}}}$ for any day $d \in \mathcal{D}$, such that $\mathcal{R}^{\text{rejected}} = \emptyset$ almost surely.
\end{theorem}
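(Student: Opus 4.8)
The plan is to exploit the all-or-nothing structure of the cost function. By the construction in Sec.~\ref{subsec:contraints_cost}, whenever a request $r_q$ is serviced along a valid route its wait times obey the hard constraints $w^{\text{pick}}_{r_q}(x_t) \leq W^{\text{pick}}$ and $w^{\text{drop}}_{r_q}(x_t) \leq W^{\text{drop}}$, whereas a rejected request sets $w^{\text{pick}}_{r_q}(x_t) = w^{\text{drop}}_{r_q}(x_t) = \infty$. Thus the immediate cost $h$ of Eq.~(\ref{eq:immediate_cost}) is finite precisely on the event $\{\mathcal{R}^{\text{rejected}} = \emptyset\}$, and the whole theorem reduces to tying the finiteness of $J_{\pi^{\text{greedy}}}$ to that event. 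As a first step I would record the telescoping identity obtained from Eq.~(\ref{eq:stage_cost}): for any fixed realization of the request sequence,
\begin{align}
    \sum_{t'=t}^{T^{\text{end}}} g\big(x_{t'}, \bar{\mu}_{t'}(x_{t'}), \mathcal{R}^{\text{realtime}}_{t'}\big) = h(x_{T^{\text{end}}+1}) - h(x_t), \nonumber
\end{align}
with the initial-step convention of Eq.~(\ref{eq:initial_stage_cost}) absorbing the $-h(x_t)$ term when $t = T^{\text{start}}$. Since $h(x_t)$ is determined by the (fixed) state $x_t$, taking expectations shows $J_{\pi^{\text{greedy}}}(x_t) < \infty$ if and only if $\mathds{E}[h(x_{T^{\text{end}}+1})] < \infty$, so it remains to analyze the terminal cumulative cost.

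For sufficiency ($\Leftarrow$), I would assume the fleet is large enough that $\mathcal{R}^{\text{rejected}} = \emptyset$ almost surely. On this probability-one event every request is serviced by a valid route, so every summand of $h$ is bounded by $W^{\text{pick}} + W^{\text{drop}}$; combined with Assumption~\ref{assumption:bounded_requests} and the finiteness of the horizon $\vec{T}$, this yields $h(x_{T^{\text{end}}+1}) \leq |\mathcal{R}_{T^{\text{end}}}|\,(W^{\text{pick}} + W^{\text{drop}}) < \infty$ almost surely. An almost surely bounded non-negative random variable has finite expectation, hence $\mathds{E}[h(x_{T^{\text{end}}+1})] < \infty$ and $\pi^{\text{greedy}}$ is stable by Def.~\ref{def:stability}.

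For necessity ($\Rightarrow$) I would argue by contraposition. Suppose the fleet is not large enough, i.e. $\Pr(\mathcal{R}^{\text{rejected}} \neq \emptyset) > 0$. On that positive-probability event at least one rejected request contributes an infinite wait time, forcing $h(x_{T^{\text{end}}+1}) = +\infty$; since $h$ is non-negative and equals $+\infty$ with positive probability, its expectation is infinite. By the reduction above the cost from the initial state $x_{T^{\text{start}}}$ is then infinite, so $\pi^{\text{greedy}}$ fails the all-states requirement of Def.~\ref{def:stability} and is unstable. Contraposition gives that stability forces $\mathcal{R}^{\text{rejected}} = \emptyset$ almost surely, completing the equivalence.

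The subtle points I would watch are: (i) reconciling the special initial stage cost of Eq.~(\ref{eq:initial_stage_cost}) with the general form so that the telescoping leaves only the terminal cumulative cost; (ii) the measure-theoretic step that a positive-probability infinity forces an infinite expectation, which is what converts ``some request rejected with positive probability'' into genuine instability; and (iii) confirming that Assumption~\ref{assumption:bounded_requests} delivers a uniform bound on $|\mathcal{R}_{T^{\text{end}}}|$ rather than mere almost-sure finiteness, since otherwise the expected terminal cost in the sufficiency direction could diverge even in the absence of rejections. I expect (iii), the interplay between a fixed finite fleet and the boundedness of demand, to be the main obstacle, as it is precisely what guarantees that ``no rejection almost surely'' is compatible with a finite fleet and a finite expected cost.
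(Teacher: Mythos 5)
Your proposal is correct and follows essentially the same route as the paper's proof: telescoping the stage costs to reduce everything to the terminal cumulative cost $h(x_{T^{\text{end}}})$, bounding each wait time by the hard constraints $W^{\text{pick}} + W^{\text{drop}}$ together with Assumption~\ref{assumption:bounded_requests} for sufficiency, and for necessity invoking the fact that a non-negative random variable that is infinite with positive probability has infinite expectation (your contraposition is exactly the contrapositive of the paper's Markov-inequality step). Your subtle point (iii) is well spotted --- the paper also passes from almost-sure finiteness of $|\mathcal{R}^{\text{realtime}}_{t}|$ to finiteness of its expectation without further comment --- but this does not change the substance or structure of the argument.
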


To prove the statement of this theorem, we will first show the forward direction of the implication, demonstrating that if $\pi^{\text{greedy}}$ with a fixed fleet size $|\mathcal{L}|$ is stable according to Def.~\ref{def:stability} then the fleet size $|\mathcal{L}|$ is large enough to allow $\pi^{\text{greedy}}$ to service all requests in the operating horizon $\vec{T}$ for any day $d \in \mathcal{D}$, such that $\mathcal{R}^{\text{rejected}} = \emptyset$ almost surely. To do this, we first prove that $Q(x_{t})$ the sum of stage costs for $\pi^{\text{greedy}}$ on an arbitrary day $d \in \mathcal{D}$ starting at an arbitrary state $x_{t}$ is a non-negative random variable. We then use Markov's inequality to show that the definition of stability allows us to almost surely bound the wait times for all requests in the system for an arbitrary day $d \in \mathcal{D}$. Using the properties of our proposed greedy policy, we then show that having almost surely bounded wait times for all requests means that all requests were placed in a valid route and hence $\mathcal{R}^{\text{rejected}} = \emptyset$ almost surely. 
After showing the forward direction, we will prove the reverse direction. For the reverse direction, we want to show that if the fleet size $|\mathcal{L}|$ is large enough to allow $\pi^{\text{greedy}}$ to service all requests in the operating horizon $\vec{T}$ such that $\mathcal{R}^{\text{rejected}} = \emptyset$ for any day $d \in \mathcal{D}$ almost surely, then $\pi^{\text{greedy}}$ is stable according to Def.~\ref{def:stability}. To show this, we use the definition of valid routes given in Sec.~\ref{subsec:robot_routes} and the fact that our proposed greedy policy can only consider valid routes that do not violate the time constraints specified in Sec.~\ref{subsec:contraints_cost}. We use these two facts to show that if the fixed fleet size $|\mathcal{L}|$ is large enough for $\pi^{\text{greedy}}$ to almost surely allocate all requests to robots with valid routes, then the requests wait times are bounded away from infinity almost surely. Using the definition of almost surely, we can then easily show that almost surely bounded wait times imply a bounded policy cost. By definition of stability, we then conclude that the proposed greedy policy $\pi^{\text{greedy}}$ is stable.

\begin{proof}
    To start, we first assume that the Assumptions~\ref{assumption:depots}, \ref{assumption:buffer_time}, \ref{assumption:bounded_requests}, and \ref{assumption:request_distribution} hold and we have a fixed fleet size $|\mathcal{L}|$. For notational simplicity, we define the sum of stage costs for $\pi^{\text{greedy}}$ on an arbitrary day $d \in \mathcal{D}$ starting at an arbitrary state $x_{t}$ as 
    \begin{align*}
        Q(x_{t}) & =  \sum_{t' = t}^{T^{\text{end}}} g(x_{t'}, \mu^{\text{greedy}}_{t'}(x_{t'}), \mathcal{R}^{\text{realtime}}_{t'})
    \end{align*}
    and $J_{\pi^{\text{greedy}}} (x_{t}) = \mathds{E}[Q(x_{t})]$.

    We will first prove the forward direction of the if-and-only-if statement, showing that if $\pi^{\text{greedy}}$ is stable as defined in Def.~\ref{def:stability}, then the fleet size $|\mathcal{L}|$ is large enough to allow the greedy policy $\pi^{\text{greedy}}$ to service all requests in the operating horizon $\vec{T}$ such that $\mathcal{R}^{\text{rejected}} = \emptyset$ for an arbitrary day $d\in \mathcal{D}$ almost surely. We start by showing the non-negativity of $Q(x_{t})$. From the definition of $Q(x_{t})$, we have that: 
    \begin{align*}
        Q(x_{t}) & = \sum_{t' = t}^{T^{\text{end}}} g(x_{t'}, \mu^{\text{greedy}}_{t'}(x_{t'}), \mathcal{R}^{\text{realtime}}_{t'}) \\
        & \overset{(1)}{=} \sum_{t' = t}^{T^{\text{end}}} h(f(x_{t'}, \mu^{\text{greedy}}_{t'}(x_{t'}), \mathcal{R}^{\text{realtime}}_{t'})) -h(x_{t'}) \\
        & \overset{(2)}{=} \sum_{t' = t}^{T^{\text{end}}} h(x_{t'+1})-h(x_{t'}) \\
        & \overset{(3)}{=} \left( h\left(x_{T^{\text{end}}}\right)-h\left(x_{T^{\text{end}}-1}\right)\right) \\
        & \qquad + \left( h\left(x_{T^{\text{end}}-1}\right)-h\left(x_{T^{\text{end}}-2}\right)\right) \\
        & \qquad + \dots \\
        & \qquad + \left(h(x_{t+2})-h(x_{t+1}) \right) \\
        & \qquad + \left(h(x_{t+1})-h(x_{t}) \right) \\
        & \overset{(4)}{=} h\left(x_{T^{\text{end}}}\right) - h(x_{t})
    \end{align*}
    Where (1) follows from the definition of stage cost $g(x_{t'}, \mu^{\text{greedy}}_{t'}(x_{t'}), \mathcal{R}^{\text{realtime}}_{t'})$ in Eq.~\ref{eq:stage_cost}; (2) comes from the definition of the state transition function $f(x_{t'}, \mu^{\text{greedy}}_{t'}(x_{t'}), \mathcal{R}^{\text{realtime}}_{t'})$; (3) comes from the expansion of the summation; and (4) comes from canceling out terms in the summation. 
    Since wait times $w^{\text{pick}}_{r_q}(x_{t})$ and $w^{\text{drop}}_{r_q}(x_{t})$ are non-negative by definition for all requests $r_q$ and all states  $x_{t}$, we know that $h(x_{t}) = \sum_{r_q \in \mathcal{R}_{t}} w^{\text{pick}}_{r_q}(x_{t}) + w^{\text{drop}}_{r_q}(x_{t}) \geq 0$ for any state $x_{t}$, where $h(x_{t}) = 0$ only if no requests have entered the system before time $t$. Because the proposed greedy policy $\pi^{\text{greedy}}$ does not allow reassignment of requests, we can conclude that if a new request enters the system at time $t$, we get $h(x_{t+1}) \geq h(x_{t})$. This relationship comes from the fact that the sum of wait times for all requests in the system at time $t+1$ is equal to the sum of wait times for requests already in the system in the previous time step $t$ plus the wait times of requests that enter the system at time $t$. If no request enters the system at time $t$, then we get $h(x_{t+1}) = h(x_{t})$, since no new requests are added and hence the wait times of requests already in the system do not change under $\pi^{\text{greedy}}$. By iteratively applying the relationship $h(x_{t+1}) \geq h(x_{t})$, we obtain that $h(x_{T^{\text{end}}}) \geq h(x_{t})$, which allows us to conclude that $h\left(x_{T^{\text{end}}}\right) - h(x_{t}) \geq 0$ and hence the random variable $Q(x_{t})$ is non-negative by definition. When we consider the start of the horizon where $t=T^{\text{start}}$, we have that:
    \begin{align*}
        Q(x_{T^{\text{start}}}) & = \sum_{t' = T^{\text{start}}}^{T^{\text{end}}} g(x_{t'}, \mu^{\text{greedy}}_{t'}(x_{t'}), \mathcal{R}^{\text{realtime}}_{t'}) \\
        & \overset{(1)}{=} \sum_{t' = T^{\text{start}}+1}^{T^{\text{end}}} g(x_{t'}, \mu^{\text{greedy}}_{t'}(x_{t'}), \mathcal{R}^{\text{realtime}}_{t'}) \\
        & \qquad + g(x_{T^{\text{start}}}, \mu^{\text{greedy}}_{T^{\text{start}}}(x_{T^{\text{start}}}), \mathcal{R}^{\text{realtime}}_{T^{\text{start}}}) \\ 
        & \overset{(2)}{=} h\left(x_{T^{\text{end}}}\right) - h\left(x_{T^{\text{start}}+1}\right) + h\left(x_{T^{\text{start}}+1}\right) \\
        & =  h\left(x_{T^{\text{end}}}\right)
    \end{align*}
    Where (1) follows from separating the sum into the first term and a sum over the rest; and (2) follows from the simplification of the sum done previously for $Q(x_{t})$ and from the stage cost definition given in Eq.~\ref{eq:initial_stage_cost}. From this, we can conclude that $Q(x_{T^{\text{start}}})$ is also non-negative since $h\left(x_{T^{\text{end}}}\right) \geq 0$ by definition.

    Since for the forward direction we are assuming that $\pi^{\text{greedy}}$ with a fixed fleet size $|\mathcal{L}|$ is stable, we know from the definition of stability given in Def.~\ref{def:stability} that $J_{\pi^{\text{greedy}}}(x_{t}) = \mathds{E}[Q(x_{t})] < \infty$ for all reachable states $x_{t}$ with $t \in \vec{T}$. Since $Q(x_{t})$ is a non-negative random variable, we can apply Markov's inequality to obtain:
    \begin{align*}
        P(Q(x_{t}) \geq a) \leq \frac{\mathds{E}[Q(x_{t})]}{a}
    \end{align*}
    for $a>0$. If we take the limit as $a \to \infty$ for both sides, we get
    \begin{align*}
        \lim_{a \to \infty} P(Q(x_{t}) \geq a) & \leq \lim_{a \to \infty} \frac{\mathds{E}[Q(x_{t})]}{a} \\
        P(Q(x_{t}) \geq \infty) & \overset{(1)}{\leq} 0 
    \end{align*}
    Where inequality (1) comes from the definition of stability $\mathds{E}[Q(x_{t})] < \infty$, and hence the right-hand side goes to $0$ as $a \to \infty$. This expression allows us to conclude that the sum of stage costs is bounded away from infinity such that $Q(x_{t}) < \infty$ almost surely, i.e. the probability of $Q(x_{t}) \geq \infty$ goes to zero. Since this relationship holds for all reachable states $x_{t}$ with $t \in \vec{T}$, it must hold for the initial state $x_{T^{\text{start}}}$, which has the largest sum of stage costs. This implies that:
    \begin{align*}
        Q(x_{T^{\text{start}}}) & =  h\left(x_{T^{\text{end}}}\right) \\
        & = \sum_{r_q \in \mathcal{R}_{T^{\text{end}}}} w^{\text{pick}}_{r_q}(x_{T^{\text{end}}}) + w^{\text{drop}}_{r_q}(x_{T^{\text{end}}}) \\
        & < \infty \text{ a.s }
    \end{align*}
    From this expression, we can conclude that the wait times for each request $r_q \in \mathcal{R}_{T^{\text{end}}}$ are bounded away from infinity almost surely for an arbitrary day $d \in \mathcal{D}$. Since under $\pi^{\text{greedy}}$ the wait times for a request $r_q$ would be set to infinity if a request could not be assigned to a valid route that satisfies the time constraints specified in Sec.~\ref{subsec:contraints_cost}, we can conclude that all requests $r_q \in \mathcal{R}_{T^{\text{end}}}$ can be allocated to at least one robot $\ell_{m} \in \mathcal{L}$ under a valid route that satisfies all the wait time constraints almost surely. This implies that $\mathcal{R}^{\text{rejected}} = \emptyset$ for an arbitrary day $d \in \mathcal{D}$ almost surely, and hence the fleet size $|\mathcal{L}|$ is sufficient for servicing all requests $r_q \in \mathcal{R}_{T^{\text{end}}}$ under the policy $\pi^{\text{greedy}}$.

    We will now prove the reverse direction of the implication, showing that if the fleet size $|\mathcal{L}|$ is large enough for the greedy policy $\pi^{\text{greedy}}$ to service all requests in the operating horizon $\vec{T}$ such that $\mathcal{R}^{\text{rejected}} = \emptyset$ for an arbitrary day $d \in \mathcal{D}$ almost surely, then the policy $\pi^{\text{greedy}}$ is stable according to Def.~\ref{def:stability}. For this direction, we assume that the fleet size is large enough for $\mathcal{R}^{\text{rejected}} = \emptyset$ for an arbitrary day $d \in \mathcal{D}$ under $\pi^{\text{greedy}}$ almost surely. Since the policy $\pi^{\text{greedy}}$ only considers assignments that result in valid routes that satisfy the time constraints specified in Sec.~\ref{subsec:contraints_cost}, we can conclude that for an arbitrary state $x_{t}$ all requests are assigned to robots with valid routes that satisfy the wait time constraints such that 
    $w^{\text{pick}}_{r_q}(x_{t}) \leq W^{\text{pick}}$ and $w^{\text{drop}}_{r_q}(x_{t}) \leq W^{\text{drop}}$ for all requests $r_q \in \mathcal{R}_{t}$. Scenarios where these conditions are not satisfied have probability 0 by the definition of almost surely. For proving the implication of the reverse direction we will show that the largest cost of the policy is bounded away from infinity such that
    $J_{\pi^{\text{greedy}}}(x_{T^{\text{start}}}) < \infty$. From this result, we will use the fact that $J_{\pi^{\text{greedy}}}(x_{T^{\text{start}}}) \geq J_{\pi^{\text{greedy}}}(x_{t})$ for any state $x_{t}$ reachable under $\pi^{\text{greedy}}$ to conclude that $J_{\pi^{\text{greedy}}}(x_t) < \infty $ and hence the greedy policy $\pi^{\text{greedy}}$ is stable by definition. To prove that $J_{\pi^{\text{greedy}}}(x_{T^{\text{start}}}) < \infty$, we start with the following:
    \begin{align}
        J_{\pi^{\text{greedy}}}& (x_{T^{\text{start}}}) = \mathds{E}[Q(x_{T^{\text{start}}})] \nonumber \\
        & \overset{(1)}{=} \mathds{E}[h\left(x_{T^{\text{end}}}\right)] \nonumber \\
        & \overset{(2)}{=} \mathds{E}\left[ \sum_{r_q \in \mathcal{R}_{T^{\text{end}}}} w^{\text{pick}}_{r_q}(x_{T^{\text{end}}}) + w^{\text{drop}}_{r_q}(x_{T^{\text{end}}}) \right] \nonumber \\
        & \overset{(3)}{\leq} \mathds{E}\left[ \sum_{r_q \in \mathcal{R}_{T^{\text{end}}}}  W^{\text{pick}} + W^{\text{drop}} \right] \nonumber \\
        & \overset{(4)}{=} \left( W^{\text{pick}} + W^{\text{drop}} \right) \mathds{E}\left[ |\mathcal{R}_{T^{\text{end}}}|\right] \nonumber \\
        & \overset{(5)}{=} \left( W^{\text{pick}} + W^{\text{drop}} \right) \mathds{E}\left[ |\mathcal{R}^{\text{sched}}| + \sum_{t=T^{\text{start}}}^{T^{\text{end}}} |\mathcal{R}^{\text{realtime}}_{t}| \right] \nonumber \\
        & \overset{(6)}{=} \left( W^{\text{pick}} + W^{\text{drop}} \right) \mathds{E}[|\mathcal{R}^{\text{sched}}|] \nonumber \\
        & \qquad + \left( W^{\text{pick}} + W^{\text{drop}} \right)  \left(\sum_{t=T^{\text{start}}}^{T^{\text{end}}} \mathds{E}[|\mathcal{R}^{\text{realtime}}_{t}|] \right) \nonumber
    \end{align}
    Where (1) comes from the definition of $Q(x_{T^{\text{start}}})$; (2) follows from the definition of $h\left(x_{T^{\text{end}}}\right)$ in Eq.~\ref{eq:immediate_cost}; (3) follows from the fact that $w^{\text{pick}}_{r_q}(x_{T^{\text{end}}}) \leq W^{\text{pick}}$ and $w^{\text{drop}}_{r_q}(x_{T^{\text{end}}}) \leq W^{\text{drop}}$ almost surely for all requests $r_q \in \mathcal{R}_{T^{\text{end}}}$; equality (4) follows from the fact that $W^{\text{pick}}$ and $W^{\text{drop}}$ do not depend on $r_q$ and hence can be taken out of the sum; equality (5) follows from the definition of $|\mathcal{R}_{T^{\text{end}}}|$; and equality (6) follows from linearity of expectations. 
    Since we know from assumption~\ref{assumption:bounded_requests} that $|\mathcal{R}^{\text{sched}}| < \infty$ and $|\mathcal{R}^{\text{realtime}}_{t}| < \infty$, we can conclude that $\mathds{E}[|\mathcal{R}^{\text{sched}}|] < \infty$ and $\mathds{E}[|\mathcal{R}^{\text{realtime}}_{t}|] < \infty$. Using this result and the fact that $W^{\text{pick}}$, $W^{\text{drop}}$, $T^{\text{last}}$, and $T^{\text{start}}$ are all constants, we can conclude that $J_{\pi^{\text{greedy}}} (x_{T^{\text{start}}}) < \infty$. Since $J_{\pi^{\text{greedy}}}(x_{T^{\text{start}}}) \geq J_{\pi^{\text{greedy}}}(x_{t})$ for an arbitrary state $x_t$ due to the non-negativity of $h(x_{t})$, then we can conclude that $\infty > J_{\pi^{\text{greedy}}}(x_{T^{\text{start}}}) \geq J_{\pi^{\text{greedy}}}(x_{t})$, and hence we can conclude that $\pi^{\text{greedy}}$ is stable by definition.

    Now that we have shown both directions of the statement, we can conclude that the statement of the theorem holds.
\end{proof}

After concluding that the stability of our greedy policy $\pi^{\text{greedy}}$ depends on having a sufficiently large fleet size $|\mathcal{L}|$ to service all requests in an arbitrary day $d \in \mathcal{D}$ almost surely, we now consider potential algorithms to obtain such a desired fleet size.

\subsection{Algorithm for Estimating Fleet Size Required for Stability of Greedy Policy $\pi^{\text{greedy}}$}
\label{subsec:theoretical_fleet_size_algorithm}
We are interested in designing an algorithm that will obtain a sufficiently large fleet size offline using historical data that guarantees the stability of the proposed greedy policy $\pi^{\text{greedy}}$ when considering new request sequences sampled online. 
Inspired by different state-fo-the-art algorithms in the literature \cite{WINTER2018, Wallar2019OptimizingVD}, we first consider an offline iterative approach that increases the fleet size every time a request can't be allocated, solving the fleet sizing problem in a single-pass minimization of costs. We refer to this as the "Single-Pass algorithm". To find the fleet size using the Single-Pass algorithm (see algo.~\ref{algo:single_pass}), we consider the requests in each day inside the set of historical requests sets $\mathcal{H}$. For each day, we start the size of the fleet at 1 robot, and we start assigning requests sequentially using $\pi^{\text{greedy}}$ until we encounter a request that can't be assigned to the fleet with a valid route that satisfies the time constraints specified in Sec.~\ref{subsec:contraints_cost}. Once we encounter such a request, we increase the fleet size by one robot, and retry the assignment of the failed request. Since now we have a new robot that is free and thanks to Assumption~\ref{assumption:depots}, the new request will be allocated to the new robot, and we can continue with the assignment process as before. The final fleet size for that day is the number of robots that allowed $\pi^{\text{greedy}}$ to assign all requests using valid routes.
The final fleet size outputted by the algorithm is found by taking the maximum fleet size found while evaluating days in  $\mathcal{H}$. The Single-Pass algorithm described in algo.~\ref{algo:single_pass} uses two methods: \textit{AssignReq($r_q, x_{t}, \textit{TempSize}, \pi^{\text{greedy}}$)}, and $ \textit{ProgressState}(x_{t})$. The method \textit{AssignReq($r_q, x_{t}, \textit{TempSize}, \pi^{\text{greedy}}$)} assigns request $r_q$ to the routes contained in $x_{t}$, given a fleet size of $\textit{TempSize}$ and a policy $\pi^{\text{greedy}}$. It returns an updated state containing the new valid routes if the request can be allocated. If the request can't be allocated to a valid route, then it returns None. The method $\textit{ProgressState}(x_{t})$ moves time forward by one time-step, moving robots along their planned routes.

\begin{algorithm}
    \caption{Single-Pass Fleet Sizing Algorithm}
    \begin{algorithmic}[1]
    \label{algo:single_pass}
        \REQUIRE Set of historical request sets $\mathcal{H}$
        \ENSURE Fleet size $|\mathcal{L}|$
        \STATE $|\mathcal{L}| = 1$
        \FOR{request set $R_{T^{\text{end}}} \in \mathcal{H}$}
            \STATE \textit{TempSize} = 1
            \STATE $x_{t} = x_{T^{\text{start}}}$
            \FOR{$t \in \vec{T}$}
                \IF{$t == T^{\text{start}}$}
                    \STATE $\mathcal{R} = \mathcal{R}^{\text{realtime}}_{t} \bigcup \mathcal{R}^{\text{sched}}$
                \ELSE
                    \STATE $\mathcal{R} = \mathcal{R}^{\text{realtime}}_{t}$
                \ENDIF
                \FOR{$r_q \in \mathcal{R}$}
                    \STATE $x^{\text{temp}}_{t}$ = \textit{AssignReq($r_q, x_{t}, \textit{TempSize}, \pi^{\text{greedy}}$)} 
                    \IF{$x^{\text{temp}}_{t}$ is None}
                        \STATE $\textit{TempSize} = \textit{TempSize} + 1$
                        \STATE $x^{\text{temp}}_{t}$ = \textit{AssignReq($r_q, x_{t}, \textit{TempSize}, \pi^{\text{greedy}}$)}
                    \ENDIF
                    \STATE $x_{t} = x^{\text{temp}}_{t}$
                \ENDFOR
                \STATE $x_{t+1} = \textit{ProgressState}(x_{t})$
            \ENDFOR
            \STATE $|\mathcal{L}| = \max \{ |\mathcal{L}|, \textit{TempSize}\}$
        \ENDFOR

        \RETURN $|\mathcal{L}|$
    \end{algorithmic}
\end{algorithm}

Even though the Single-Pass algorithm is intuitive and computationally efficient, we show that it does not guarantee stability of $\pi^{\text{greedy}}$.
To show this, we consider the following additional assumption:

\begin{assumption}
    \label{assumption:termination_single_pass}
    If $|\mathcal{H}| \to \infty$, we assume that the Single-Pass algorithm terminates once it has seen all possible sequences of requests. Since the sampling space for the sequence of requests is finite, the algorithm will terminate almost surely as $|\mathcal{H}| \to \infty$.
\end{assumption}

With this assumption, we can now move on to the formal statement.

\begin{proposition}
    \label{proposition:single_pass}
    Given Assumptions~\ref{assumption:depots}, \ref{assumption:buffer_time}, \ref{assumption:bounded_requests}, \ref{assumption:request_distribution}, and \ref{assumption:termination_single_pass}, there exists a request distribution $\bar{\mathcal{P}}$ for which the fleet size $|\mathcal{L}|$ found offline using the single-pass algorithm (see algo.~\ref{algo:single_pass}) is not sufficient for stability of $\pi^{\text{greedy}}$ as defined in Def.~\ref{def:stability} during online execution under the same request request distribution $\bar{\mathcal{P}}$. Moreover, this claim holds even if $|\mathcal{H}| \to \infty$.
\end{proposition}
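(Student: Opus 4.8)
The plan is to establish the proposition by exhibiting an explicit counterexample that exploits the central structural weakness of the Single-Pass algorithm: during offline execution the fleet grows \emph{incrementally}, so early requests are forced onto the robots that already exist, whereas during online execution the full fleet is present from time $T^{\text{start}}$ and the greedy policy $\pi^{\text{greedy}}$ is free to assign those same early requests to \emph{any} robot. Because $\pi^{\text{greedy}}$ is myopic and only minimizes the immediate stage cost $g$, having extra robots available earlier can lead it to ``waste'' a critically positioned robot on an early low-cost request, leaving no robot able to service a later request that offline would have been handled by the robot added precisely when it was needed. The objective is therefore to design a request distribution $\bar{\mathcal{P}}$ for which this discrepancy forces $\mathcal{R}^{\text{rejected}} \neq \emptyset$ online, so that by Theorem~\ref{theorem:stabiltiy_properties} the fleet size returned by algo.~\ref{algo:single_pass} is insufficient for stability of $\pi^{\text{greedy}}$.

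Concretely, I would build an environment with two depots $b_1, b_2 \in \mathcal{B}$ placed so that Assumption~\ref{assumption:depots} holds, and take $\bar{\mathcal{P}}$ to be supported on a short deterministic sequence consisting of an early request $r_1$ followed by a later ``critical'' request $r_{\text{crit}}$. Two geometric/temporal properties drive the construction: \textbf{(i)} the pickup $\rho_1$ of $r_1$ is closer to $b_2$ than to $b_1$, so that when both robots are available the greedy one-request-at-a-time assignment sends $r_1$ to the robot stationed at $b_2$; and \textbf{(ii)} $r_{\text{crit}}$ is positioned and timed so that, under the $W^{\text{pick}}$ and $W^{\text{drop}}$ constraints of Sec.~\ref{subsec:contraints_cost}, it admits a valid route only from a robot at $b_2$ that has \emph{not} already committed to $r_1$ --- the detour needed to serve both $r_1$ and $r_{\text{crit}}$ violates a wait-time bound regardless of the remaining capacity $C$. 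I would place both requests well before $T^{\text{last}}$ to respect Assumption~\ref{assumption:buffer_time}, keep $|\mathcal{R}^{\text{sched}}|$ and each $|\mathcal{R}^{\text{realtime}}_t|$ finite to satisfy Assumption~\ref{assumption:bounded_requests}, and use the same $\bar{\mathcal{P}}$ offline and online to satisfy Assumption~\ref{assumption:request_distribution}.

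With this construction the argument proceeds in three steps. First, I trace the offline Single-Pass: at the arrival of $r_1$ the fleet has size $1$, so $r_1$ is forced onto the robot at $b_1$; when $r_{\text{crit}}$ arrives it cannot be inserted into that robot's route, \textit{TempSize} is incremented, a second robot appears at $b_2$, and $r_{\text{crit}}$ is served by it, so the algorithm returns $|\mathcal{L}| = 2$. Second, I trace online execution with this fixed fleet of two robots: now both robots exist when $r_1$ arrives, property \textbf{(i)} makes $\pi^{\text{greedy}}$ assign $r_1$ to the $b_2$-robot, and property \textbf{(ii)} then makes $r_{\text{crit}}$ impossible to place on either robot, so $r_{\text{crit}} \in \mathcal{R}^{\text{rejected}}$ with probability one. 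Third, since $\{\mathcal{R}^{\text{rejected}} \neq \emptyset\}$ is not a probability-zero event, the contrapositive of the forward direction of Theorem~\ref{theorem:stabiltiy_properties} implies $\pi^{\text{greedy}}$ is not stable under $\bar{\mathcal{P}}$ with $|\mathcal{L}| = 2$. Because $\bar{\mathcal{P}}$ yields the same sequence on every historical day, the Single-Pass output remains $2$ no matter how many days are observed, so the conclusion persists as $|\mathcal{H}| \to \infty$; this is precisely the point that the failure is structural rather than a consequence of insufficient data.

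The main obstacle I expect is the geometric and temporal bookkeeping needed to make properties \textbf{(i)} and \textbf{(ii)} hold \emph{simultaneously} while all of Assumptions~\ref{assumption:depots}--\ref{assumption:request_distribution} remain satisfied --- in particular, choosing edge travel times, depot placements, and the desired pickup times $t_1^{\text{pick}}$ and $t_{\text{crit}}^{\text{pick}}$ so that $r_{\text{crit}}$ is feasible from a free $b_2$-robot yet infeasible once that robot has committed to $r_1$, all without violating the $W^{\text{pick}}$-coverage of Assumption~\ref{assumption:depots}. Verifying that the tie-breaking and the insertion procedure of algo.~\ref{algo:insertion_procedure} indeed produce the claimed assignments on both the offline and online passes is the one place where the argument must be checked against the precise definitions rather than against the intuition.
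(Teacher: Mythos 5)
Your overall strategy is exactly the paper's: pick a deterministic request distribution, trace the Single-Pass algorithm offline to obtain a fleet size, trace $\pi^{\text{greedy}}$ online with that fixed fleet to exhibit a rejected request with probability one, and conclude via the forward direction of Theorem~\ref{theorem:stabiltiy_properties}; your handling of the $|\mathcal{H}|\to\infty$ clause (every historical day yields the same sequence, so the output never grows) also matches. The concrete counterexample differs, though. The paper uses a \emph{single} depot and three requests: offline, the lone robot is forced to bundle $r_1$ and $r_2$, leaving the newly spawned second robot sitting at the depot exactly when $r_3$ arrives; online, the greedy wait-time minimization spreads $r_1$ and $r_2$ across the two robots, pulling both away from the depot so that $r_3$ is unreachable. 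You instead use two depots and two requests, with the failure driven by the greedy policy ``wasting'' the critically positioned $b_2$-robot on the early request. Both constructions exploit the same structural mismatch, but the paper's single-depot version is cleaner for a specific reason your version has to confront: the Single-Pass algorithm outputs only a \emph{number} of robots, not a depot assignment, so with two depots you must additionally pin down (offline) which depot the incrementally spawned robot occupies and (online) how the two robots are distributed over $\{b_1,b_2\}$ --- neither is determined by algo.~\ref{algo:single_pass}, and the argument collapses if, say, both online robots start at $b_2$. The paper sidesteps this entirely by having one depot.

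The genuine gap is that for an existence claim proved by counterexample, the counterexample must actually be exhibited, and you explicitly defer the ``geometric and temporal bookkeeping'' that makes properties \textbf{(i)} and \textbf{(ii)} hold simultaneously. One concrete hazard there: since robots that arrive early wait at the pickup node via $E^{\text{wait}}$ edges, $w^{\text{pick}}_{r_1}=\widehat{t}^{\,\text{pick}}_{1}-t^{\text{pick}}_{1}$ is zero for \emph{both} robots whenever both can reach $\rho_1$ by $t^{\text{pick}}_1$, in which case your property \textbf{(i)} reduces to an unspecified tie-break rather than a strict greedy preference for the $b_2$-robot. You need $t^{\text{pick}}_1$ tight enough that the $b_1$-robot incurs a strictly positive wait (the paper achieves the analogous strict inequality by making the detour through $\ell_1$'s existing route strictly costlier than using the idle depot robot). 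These are repairable with explicit travel times and pickup times, as the paper does with $W^{\text{pick}}=W^{\text{drop}}=5$ and unit edge times, but as written your proposal asserts rather than verifies that the required configuration exists.
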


We prove this proposition by finding a specific request distribution for which the statement holds.

\begin{proof}
    Without loss of generality, let's consider a simplified routing set-up, where the underlying graph $G$ corresponds to a street network as shown in Fig.~\ref{fig:counter_example_diagram}. 

    \begin{figure}
    \centering
    \includegraphics[width=0.7\linewidth]{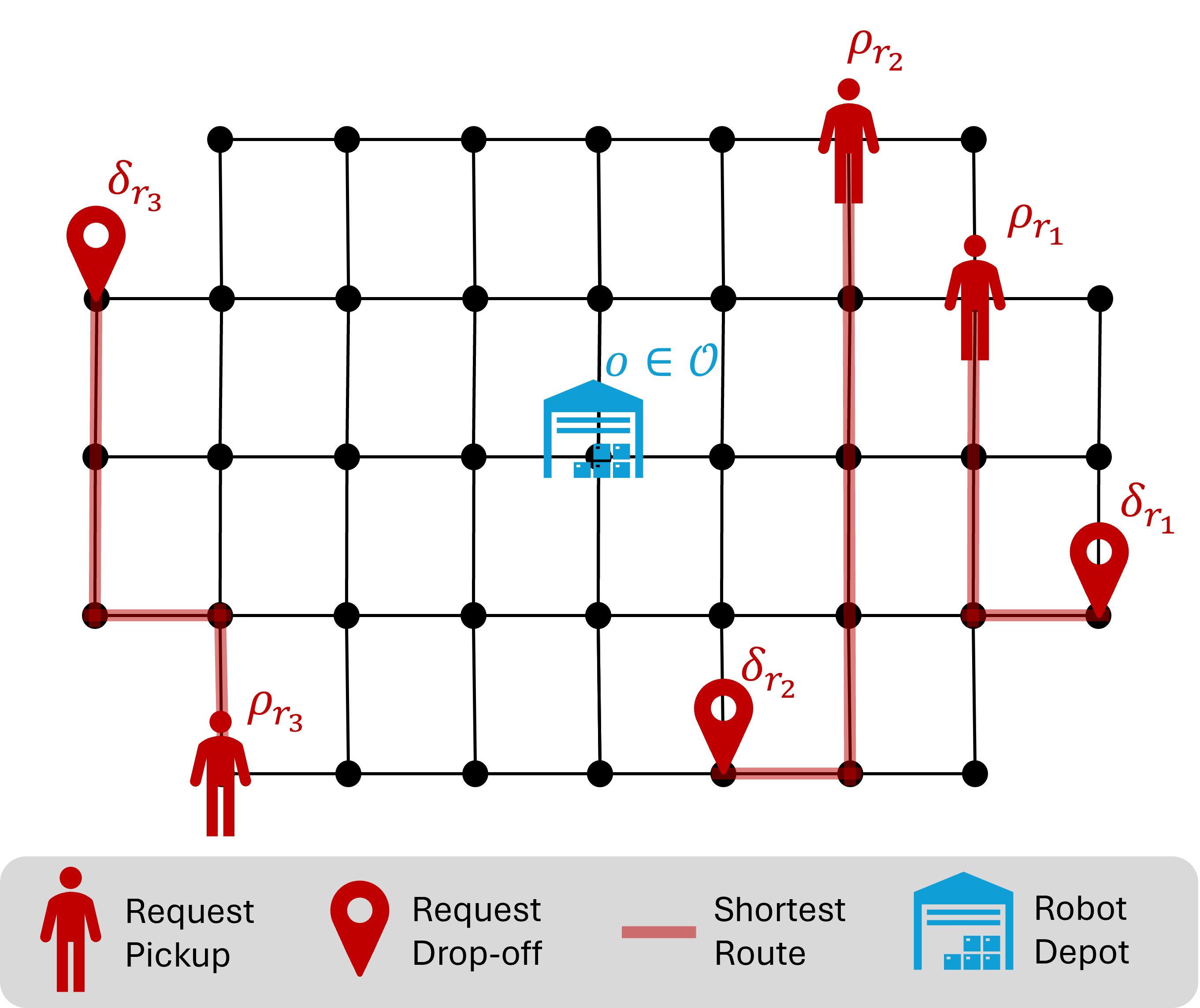}
    \caption{Simple routing example used as a counter-example that shows that the fleet size found using the single-pass algorithm is not sufficient for the stability of our proposed greedy policy $\pi^{\text{greedy}}$.}
    \label{fig:counter_example_diagram}
    \end{figure}
    
    In this set-up, all streets are assumed to be bidirectional. We assume that the time $t$ is discrete and corresponds to the number of minutes since the start of the horizon. For simplicity, we assume that it takes 1 minute to traverse any edge in the street network, there is only one depot in the middle of the network, and the wait time constraints are set to $W^{\text{pick}} = W^{\text{drop}} = 5$.
    
    Lets consider a request sequence $[r_1, r_2, r_3]$.
    The pickup and drop-off locations for the request sequence $[r_1, r_2, r_3]$ are depicted in Fig.~\ref{fig:counter_example_diagram}. In this request sequence, $r_1$ enters the system at time $t=1$, $r_2$ enters the system at time $t=2$, and $r_3$ enters the system at time $t=3$. Let's consider the request distribution $\bar{\mathcal{P}}$ for which only this request sequence is possible. Such a request distribution can be easily constructed by having all the probability mass concentrated only at the pickup and drop-off locations for the requests at the times at which each request enters. For instance, at time step $t=1$, we assign a
    pickup probability of $1.0$ in the pickup location for request $r_1$ and a drop-off probability of $1.0$ in the drop-off location for request $r_1$. All other locations have a probability mass of $0.0$. Similarly, at time step $t=2$, we assign a
    pickup probability of $1.0$ on the pickup location for request $r_2$ and a drop-off probability of $1.0$ on the drop-off location for request $r_2$. We repeat this process for all the requests. Under this request distribution $\bar{\mathcal{P}}$, all samples in $\mathcal{H}$ will be composed of the same request sequence $[r_1, r_2, r_3]$. For the case where $|\mathcal{H}| \to \infty$, this means that the algorithm will finish after observing the second sample, which is identical to the first sample. This also means that an instantiation of requests for the online execution corresponds exactly to the request sequence $[r_1, r_2, r_3]$. Since the single-pass algorithm executes an independent optimization for days in $\mathcal{H}$, it will obtain the same fleet size for days with identical request sequences. This implies that we only need to consider the output of the single-pass algorithm for the only possible request sequence $[r_1, r_2, r_3]$.
    
    Now, lets consider the assignment problem for the request sequence $[r_1, r_2, r_3]$, where we use the Single-Pass algorithm to obtain the fleet size first, and then we show that such a fleet size is insufficient for stability of the greedy policy $\pi^{\text{greedy}}$ during online assignment of requests in the request sequence $[r_1, r_2, r_3]$. With the Single-Pass algorithm, we start the fleet size with 1 robot that we denote $\ell_1$. At time $t=0$, there are no requests, so the robot remains at the depot. At time $t=1$ request $r_1$ enters the system and gets assigned to robot $\ell_1$ according to $\pi^{\text{greedy}}$. At time $t=2$, request $r_2$ enters the system and robot $\ell_1$ has moved one intersection closer to the pickup location of $r_1$. There is only one stop sequence that produces a valid route according to $\pi^{\text{greedy}}$, $\vec{S}_1 = [\rho_{r_1}, \rho_{r_2}, \delta_{r_1}, \delta_{r_2}]$. The other three possible stop sequences $\vec{S}_2 = [\rho_{r_1}, \rho_{r_2}, \delta_{r_2}, \delta_{r_1}]$, $\vec{S}_3 = [\rho_{r_2}, \rho_{r_1}, \delta_{r_1}, \delta_{r_2}]$ and $\vec{S}_4 = [\rho_{r_2}, \rho_{r_1}, \delta_{r_2}, \delta_{r_1}]$ violate the wait time constraint for request $r_1$ as picking up $r_2$ first will result in $w^{\text{pick}}_{r_1} > 5$, and dropping off $r_2$ before dropping off $r_1$ will result in $w^{\text{drop}}_{r_1} > 5$. For this reason, $\pi^{\text{greedy}}$ will select $\vec{S}_1$ and assign $r_2$ to $\ell_1$. At time $t=3$, request $r_3$ enters the system and robot $\ell_1$ has moved two intersections towards the pickup location of $r_1$. At this point, robot $\ell_1$ is too far to produce valid routes for request $r_3$, and hence a new robot $\ell_2$ gets added into the system. Request $r_3$ gets assigned to $\ell_2$ and the final fleet size that is output by the Single-Pass algorithm is 2 robots since there are no more requests entering the system after $t=3$.

    Now, if we consider the same request sequence $[r_1, r_2, r_3]$ during online execution, and evaluate the greedy policy using a fixed fleet size of 2 robots, we get the following behavior. At time $t=1$, request $r_1$ enters the system and gets assigned to robot $\ell_1$. At time $t=2$, robot $\ell_1$ has moved one intersection towards request $r_1$, robot $\ell_2$ remains at the depot, and request $r_2$ enters the system. According to $\pi^{\text{greedy}}$ request $r_2$ encounters two valid routes, in the first one, it gets added to robot $\ell_1$, while in the second one, it gets added to robot $\ell_2$. Since $\pi^{\text{greedy}}$ minimizes the wait times for all requests in the routes, it will select the second route and request $r_2$ will get assigned to robot $\ell_2$. At time $t=3$, robot $\ell_1$ has moved two intersections towards request $r_1$, robot $\ell_2$ has moved one intersection towards request $r_2$, and request $r_3$ enters the system. At this point, both robots are too far to produce any valid routes for the assignment of $r_3$. This means that request $r_3$ can't be assigned to the fleet and hence it will be added to $\mathcal{R}^{\text{rejected}}$. Since the request sequence $[r_1, r_2, r_3]$ has a non-zero probability of occurring under $\bar{\mathcal{P}}$, we know that the fleet size $|\mathcal{L}|$ found offline using the Single-pass algorithm is not sufficiently large to allow $\pi^{\text{greedy}}$ to service all requests $\mathcal{R}_{T^{\text{end}}}$ for any day $d \in \mathcal{D}$, such that $\mathcal{R}^{\text{rejected}} = \emptyset$ almost surely. From Thm.~\ref{theorem:stabiltiy_properties}, we can conclude that the policy $\pi^{\text{greedy}}$ is not stable with the fleet size found offline $|\mathcal{L}|$.
\end{proof}

\begin{remark}
    The Single-Pass fleet-sizing algorithm fails at finding a sufficiently large fleet size for the stability of $\pi^{\text{greedy}}$ since it increases the fleet-size dynamically without considering the allocations that the policy would have executed in the previous assignments if more robots were available. This discrepancy in the way in which the fleet is constructed and how the robots are utilized during online execution of $\pi^{\text{greedy}}$ allows for the existence of sequences of requests for which the fleet-size found offline is not guaranteed to produce a stable policy assignment even if the same request sequence is observed during online execution. Even though we only consider a single request sequence in the proof, it is also important to note there are other possible request sequences that will result in instability of $\pi^{\text{greedy}}$ during online execution. These request sequences are characterized by requests being temporally close, but spatially distributed at opposite ends of the area covered by the depot.
\end{remark}

In order to guarantee the stability of $\pi^{\text{greedy}}$ we propose an improved version of algo.~\ref{algo:single_pass}, allowing for the new algorithm to restart the optimization process every time the fleet size increases. We call this algorithm the Restart-and-Optimize algorithm, and we describe it in algo.~\ref{algo:fleet_size}. The Restart-and-Optimize algorithm described in algo.~\ref{algo:fleet_size} uses the same two methods as the Single-Pass algorithm: \textit{AssignReq($r_q, x_{t}, \textit{TempSize}, \pi^{\text{greedy}}$)} , and $ \textit{ProgressState}(x_{t})$. The main difference between the two algorithms, is that every time a request can't be allocated in the Restart-and-Optimize algorithm, instead of increasing the fleet size and allocating the request to the new robot, the algorithm increases the fleet size and restarts the assignment process from the start of the time horizon. This restart allows the algorithm to obtain a fleet size that will match the available resources expected during online execution. In the theorem below, we show that our proposed Restart-and-Optimize algorithm finds a sufficiently large fleet size that asymptotically guarantees stability of $\pi^{\text{greedy}}$ as the number of historical request samples goes to infinity (i.e $|\mathcal{H}| \to \infty$), overcoming the limitations of the single-pass algorithm.

\begin{algorithm}
    \caption{Restart-and-Optimize Fleet Sizing Algorithm}
    \begin{algorithmic}[1]
    \label{algo:fleet_size}
        \REQUIRE Set of historical requests sets $\mathcal{H}$, maximum number of robots to be considered $M^{\text{max}}$
        \ENSURE Fleet size $|\mathcal{L}|$
        \STATE $|\mathcal{L}| = 1$
        \FOR{request set $R_{T^{\text{end}}} \in \mathcal{H}$}
            \FOR{$\textit{TempSize} \in [1, \dots, M^{\text{max}}]$}
                \STATE $x_{t} = x_{T^{\text{start}}}$
                \STATE \textit{ValidFlag} = True
                \FOR{$t \in \vec{T}$}
                    \IF{$t == T^{\text{start}}$}
                        \STATE $\mathcal{R} = \mathcal{R}^{\text{realtime}}_{t} \bigcup \mathcal{R}^{\text{sched}}$
                    \ELSE
                        \STATE $\mathcal{R} = \mathcal{R}^{\text{realtime}}_{t}$
                    \ENDIF
                    \FOR{$r_q \in \mathcal{R}$}
                        \STATE $x^{\text{temp}}_{t}$ = \textit{AssignReq($r_q, x_{t}, \textit{TempSize}, \pi^{\text{greedy}}$)} 
                        \IF{$x^{\text{temp}}_{t}$ is None}
                            \STATE \textit{ValidFlag} = False
                            \STATE Break Out of For Loop
                        \ENDIF
                        \STATE $x_{t} = x^{\text{temp}}_{t}$
                    \ENDFOR
                    \IF{\textit{ValidFlag} is False}
                        \STATE Break Out of For Loop
                    \ENDIF
                    \STATE $x_{t+1} = \textit{ProgressState}(x_{t})$
                \ENDFOR
                \IF{\textit{ValidFlag} is True}
                    \STATE $|\mathcal{L}| = \max \{ |\mathcal{L}|, \textit{TempSize}\}$
                    \STATE Break Out of For Loop
                \ENDIF
            \ENDFOR
        \ENDFOR
        \RETURN $|\mathcal{L}|$
    \end{algorithmic}
\end{algorithm}

To prove that this algorithm produces a sufficiently large fleet size for the stability of $\pi^{\text{greedy}}$, we consider the following additional assumption: 
\begin{assumption}
    \label{assumption:restart_and_optimize_termination}
    If $|\mathcal{H}| \to \infty$, we assume that the Restart-and-Optimize algorithm terminates once it has seen all possible sequences of requests. Since the sampling space for the sequences of requests is finite, the algorithm will terminate almost surely as $|\mathcal{H}| \to \infty$. 
\end{assumption}

With this assumption, we can now move on to the formal statement.

\begin{theorem}
    Given Assumptions~\ref{assumption:depots}, \ref{assumption:buffer_time}, \ref{assumption:bounded_requests}, \ref{assumption:request_distribution}, and \ref{assumption:restart_and_optimize_termination}, for any request distribution $\mathcal{P}$, if $|\mathcal{H}| \to \infty$, the fleet size $|\mathcal{L}|$ found offline using the Restart-and-Optimize algorithm (see algo.~\ref{algo:fleet_size}) is guaranteed to be sufficiently large for stability of $\pi^{\text{greedy}}$ as defined in Def.~\ref{def:stability} during online execution under the same distribution $\mathcal{P}$.
\end{theorem}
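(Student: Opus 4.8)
The plan is to reduce the claim to Theorem~\ref{theorem:stabiltiy_properties}: since that theorem already shows $\pi^{\text{greedy}}$ is stable if and only if the fleet is large enough that $\mathcal{R}^{\text{rejected}} = \emptyset$ almost surely, it suffices to prove that the fleet size $|\mathcal{L}|$ returned by algo.~\ref{algo:fleet_size} services, with no rejections, almost every request sequence drawn from $\mathcal{P}$ during online execution. First I would establish that the space of single-day request sequences is finite: by Assumption~\ref{assumption:bounded_requests} the number of scheduled and per-step real-time requests is bounded, the horizon $\vec{T}$ is finite and discrete, and each request's exogenous attributes (pickup and drop-off in $V$, entry and desired pickup times in $\vec{T}$) range over finite sets. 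Hence the support of $\mathcal{P}$ over day-long sequences is a finite set $\Sigma$. Combining Assumption~\ref{assumption:request_distribution} (repeated sampling from $\mathcal{P}$) with Assumption~\ref{assumption:restart_and_optimize_termination}, as $|\mathcal{H}| \to \infty$ every $\sigma \in \Sigma$ appears in $\mathcal{H}$ almost surely, and the algorithm terminates having processed each of them.

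Next I would argue that the algorithm computes the correct per-sequence quantity. The decisive structural feature of algo.~\ref{algo:fleet_size} is that, unlike the Single-Pass algorithm, it \emph{restarts} the assignment from $T^{\text{start}}$ with a fixed fleet of size $\textit{TempSize}$ whenever a request cannot be placed. Because $\pi^{\text{greedy}}$ is deterministic given the sequence and the fixed fleet, each inner pass of the algorithm reproduces exactly the online execution of $\pi^{\text{greedy}}$ with that fixed fleet size on $\sigma$. Consequently, for each $\sigma \in \Sigma$ the value $\textit{TempSize}$ at which the algorithm first records success, call it $m(\sigma)$, is precisely the smallest fixed fleet size for which online $\pi^{\text{greedy}}$ services all of $\sigma$ with $\mathcal{R}^{\text{rejected}} = \emptyset$; such a finite $m(\sigma) \le M^{\text{max}}$ exists because, by Assumptions~\ref{assumption:depots} and~\ref{assumption:buffer_time}, one dedicated robot per request always yields valid routes and the number of requests per day is finite. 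Since $\mathcal{H}$ covers all of $\Sigma$, the returned value is $|\mathcal{L}| = \max_{\sigma \in \Sigma} m(\sigma)$.

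To conclude, consider an online sequence $\sigma'$ drawn from $\mathcal{P}$; almost surely $\sigma' \in \Sigma$, so $m(\sigma') \le |\mathcal{L}|$, and $\pi^{\text{greedy}}$ with $m(\sigma')$ robots services all of $\sigma'$. The remaining, and principal, step is a monotonicity lemma: for a fixed sequence $\sigma$, if $\pi^{\text{greedy}}$ with fleet size $k$ incurs no rejections, then neither does $\pi^{\text{greedy}}$ with any fleet size $k' \ge k$. Granting this, $\pi^{\text{greedy}}$ with the online fleet $|\mathcal{L}| \ge m(\sigma')$ also services all of $\sigma'$, so $\mathcal{R}^{\text{rejected}} = \emptyset$ for almost every $\sigma'$, and Theorem~\ref{theorem:stabiltiy_properties} then yields stability of $\pi^{\text{greedy}}$ under $\mathcal{P}$ in the sense of Def.~\ref{def:stability}.

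I expect the monotonicity lemma to be the hard part, because $\pi^{\text{greedy}}$ is myopic: an additional robot enlarges the control set at every step, so greedy may divert an early request onto the new vehicle and thereby alter the entire downstream trajectory rather than simply leaving the extra robot idle. I would attack it with a coupling argument that runs the $k$- and $k'$-robot executions in parallel and maintains the invariant that the $k'$-robot configuration dominates the $k$-robot one, in the sense that every request feasibly insertable under the smaller fleet remains feasibly insertable under the larger fleet. The key enabling fact is Assumption~\ref{assumption:depots}: any never-used robot resting at its depot furnishes a valid single-request insertion for any pickup within $W^{\text{pick}}$, so the larger fleet always retains at least the fallback options of the smaller one. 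Care is needed where the two executions diverge after greedy makes different cost-minimizing choices; handling this divergence, most likely through an exchange argument showing that assigning a request to a fresh robot never enlarges the set of subsequently rejected requests, is the crux of the proof.
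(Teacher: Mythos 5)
Your proposal follows essentially the same route as the paper's proof: both arguments rest on (i) finiteness of the support of $\mathcal{P}$ together with Assumptions~\ref{assumption:request_distribution} and~\ref{assumption:restart_and_optimize_termination} to conclude that $\mathcal{H}$ eventually contains every realizable sequence almost surely, (ii) the observation that each restarted pass with a fixed \textit{TempSize} exactly replicates the online execution of $\pi^{\text{greedy}}$ with that fleet size, and (iii) a monotonicity step asserting that a sequence serviceable with $m(\sigma)$ robots remains serviceable with the larger returned fleet $|\mathcal{L}| = \max_\sigma m(\sigma)$, after which Theorem~\ref{theorem:stabiltiy_properties} converts zero rejections into stability. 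The one substantive difference is in how step (iii) is treated. The paper casts the argument as a contradiction over two cases ($\bar d = d_{max}$ and $\bar d \neq d_{max}$) and disposes of the second case in a single sentence --- that a larger fleet ``allows requests that would have been allocated to the same robot to be distributed across more robots, reducing wait-times'' --- which is precisely your monotonicity lemma asserted without proof. You are right to flag this as the crux, and right that it is not obvious: the paper's own Proposition~\ref{proposition:single_pass} shows that $\pi^{\text{greedy}}$'s trajectory changes qualitatively when an extra robot is available early (the online two-robot run diverts $r_2$ onto the fresh robot and then rejects $r_3$), so ``more robots never hurt'' cannot be justified by pretending the extra robots stay idle; it genuinely requires the kind of coupling or exchange argument you sketch. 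In short, your proposal matches the paper's decomposition and omits nothing the paper actually supplies; it is simply more candid about the one step that neither you nor the paper proves. Carrying out the coupling argument you outline would yield a strictly more rigorous proof than the published one.
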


\begin{proof}
    We will prove the theorem by contradiction.
    Let's consider an arbitrary request distribution $\mathcal{P}$. Since the graph $G$ is finite, the set of potential locations where requests can be picked-up and dropped-off is also finite. Using this observation and the fact that we assume that the number of incoming requests is finite (i.e. $|\mathcal{R}^{\text{realtime}}_{t}| < \infty, \forall t$), we can conclude that the support for the request distribution $\mathcal{P}$ is finite, and hence there are a finite number of potential sequences of requests that can be sampled from request distribution $\mathcal{P}$. If we take the number of historical dates to infinity (i.e. $|\mathcal{H}| \to \infty$), we can conclude that $\mathcal{H}$ will include all potential request sequences possible almost surely, and hence the fleet size outputted by our algorithm will be the maximum fleet-size found after considering all possible request sequences. Since $\mathcal{P}$ is also the request distribution used during online execution, we know that all possible sequences of online requests are also contained inside the historical requests sets in $\mathcal{H}$.
    
    Let's denote the date in $\mathcal{H}$ that produced the maximum fleet size according to the Restart-and-Optimize algorithm as $d_{max}$ and the set of requests associated with it as $\mathcal{R}_{d_{max}} \in \mathcal{H}$. Now let's consider an arbitrary day $\bar{d}$. For the sake of the contradiction, let's assume that the fleet size $|\mathcal{L}|$ found by the Restart-and-Optimize algorithm is not sufficiently large to guarantee stability for $\pi^{\text{greedy}}$ during online execution. From Theorem \ref{theorem:stabiltiy_properties}, we know that $\pi^{\text{greedy}}$ is stable if and only if $\mathcal{R}^{\text{rejected}} = \emptyset$ for any day $d \in \mathcal{D}$ almost surely. So the fact that $\pi^{\text{greedy}}$ is not stable implies that there exists an arbitrary day $\bar{d}$ that has a sequence of requests with non-zero probability for which $|\mathcal{R}^{\text{rejected}}|>0$. 
    Since all dates in $\mathcal{H}$ are sampled from $\mathcal{P}$, we know that all request sequences with a non-zero probability are contained in $\mathcal{H}$. We will consider two cases for the arbitrary day $\bar{d}$ for which the policy is unstable during online execution: 1) $\bar{d}=d_{max}$, 2) $\bar{d}\neq d_{max}$. For the case $\bar{d}=d_{max}$, we know that obtaining $|\mathcal{R}^{\text{rejected}}|>0$ during online execution is not possible, since we know that the request sequence for day $d_{max}$ is also contained in $\mathcal{H}$, and hence the fleet size $|\mathcal{L}|$ is sufficiently large to guarantee stability of $\pi^{\text{greedy}}$ during online execution by construction. If $|\mathcal{L}|$ was not sufficiently large the Restart-and-Optimize algorithm would have increased the fleet size and restarted the assignment process for that day until a sufficiently large fleet size was found. For the case $\bar{d}\neq d_{max}$, we know that the request sequence for day $\bar{d}$ was also contained in $\mathcal{H}$ and since $\bar{d}\neq d_{max}$ the fleet size found for day $\bar{d}$ is smaller than the fleet size found for day $d_{max}$. Since the fleet size for day $\bar{d}$ is smaller than the final fleet size output by the Restart-and-Optimize algorithm (the fleet size obtained for day $d_{max}$), we also know that it is not possible for $|\mathcal{R}^{\text{rejected}}|>0$ during online execution. This is the case since having a larger fleet size than the one found offline for $\bar{d}$ allows requests that would have been allocated to the same robot to be distributed across more robots, reducing wait-times. 
    Now that we have checked the two cases, we can conclude that there is a contradiction, and hence if $|\mathcal{H}| \to \infty$, the fleet size $|\mathcal{L}|$ found offline using the Restart-and-Optimize algorithm is sufficiently large to guarantee stability for $\pi^{\text{greedy}}$ during online execution.
\end{proof}

\begin{remark}
    It is important to note that it is not necessary to take the number of historical days to infinity (i.e. $|\mathcal{H}| \to \infty$) to find a sufficiently large fleet size using the proposed Restart-and-Optimize algorithm in practice. As long as the historical data in $\mathcal{H}$ is representative of the expected demand during online execution of the system, even a finite amount of samples will allow the algorithm to obtain a sufficiently large fleet size $|\mathcal{L}|$ for stability of $\pi^{\text{greedy}}$. Characterizing the exact number of samples required for the algorithm to output a sufficiently large fleet size is an open problem; however, we empirically demonstrate that around $200$ request sequences ($200$ days of requests) for $\mathcal{H}$ is enough to find a sufficiently large fleet size that empirically maintains stability of $\pi^{\text{greedy}}$ during online execution for a minibus routing application. We include these results in Sec.~\ref{subsec:empirical_results_stability}.
\end{remark}

In the following section we will explore a case study to validate the practical applications of our theoretical results and to demonstrate the advantage of our proposed method.

\section{Case Study and Empirical Results: Harvard's Evening Van}
\label{sec:empirical_results}

To evaluate our proposed method, we consider a real multi-capacity multi-robot routing setup. In particular, we consider historical data for Harvard's Evening Van service, a university-wide minibus transportation service that allows students to book rides from any intersection in Harvard's campus to any other intersection within the operational area (see Fig.~\ref{fig:street_network}). Requests for this service can be either booked in advance or placed in real-time for immediate pickup. We test our approach using historical evening van data for $8$ months in $2022$ and $3$ months in $2023$ with a total of $254$ days worth of data, which contained around $40,000$ transportation requests with $177$ requests per day on average. We divide the evening van historical data into three non-overlapping datasets: training, testing, and validation. For the testing set, we randomly select two days for each month in the evening data and remove these dates from the historical data pool. We then randomly select one day for each month in the remaining data pool to create the validation data. The training dataset then corresponds to all the remaining days in the data pool after the dates in the validation and testing sets have been removed. Using this procedure, we obtain $20$ test days, $10$ validation days, and $224$ training days. 

The evening van service area considered in this case study (see Fig.~\ref{fig:street_network}) is composed of $833$ intersections and $1820$ streets. We consider that each time step in the system corresponds to $1$ second, and the operational times of the evening van for a given day $d$ are from $T^{\text{start}} = 7pm$ to $T^{\text{end}} = 3am$, such that $T = 28800$ (there are $28800$ seconds in $8$ hours). We determine travel times between adjacent intersections by dividing the length of each street segment by its associated posted speed limit.
For the map shown in Fig.~\ref{fig:street_network}, there is only one depot. Based on this constraint, we choose the maximum wait time at the station and the maximum wait time inside the bus to be equal to $15$ minutes ($900$ seconds), such that $W^{\text{pick}} = W^{\text{drop}} = 900$. With these maximum wait times, any point in the map can be reached by a bus dispatched from the depot without violating these constraints. Each minibus in the evening van fleet has a maximum capacity of $C=16$ passengers. For our stability experiments, we vary the number of minibuses in the evening van fleet from $2$ to $5$. For the comparison experiments, however, we consider the real fleet size $M = 3$ currently being used by the Evening Van Service.

\begin{figure}
    \centering
    \includegraphics[width=0.6\linewidth]{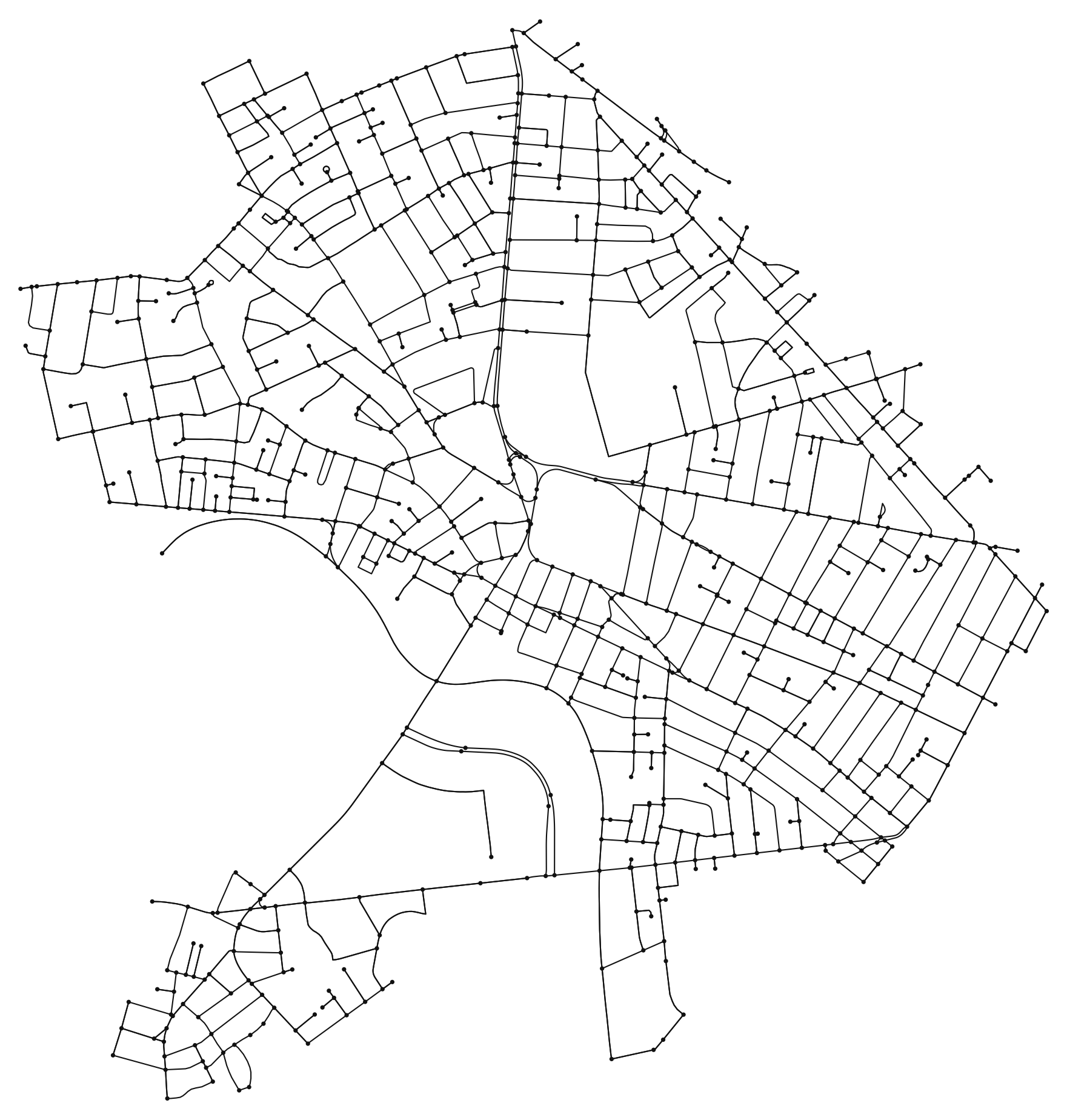}
    \caption{Street network of the operational area for Harvard's Evening Van system.}
    \label{fig:street_network}
\end{figure}

In the following subsections, we will empirically verify our theoretical results, and we will provide more details about implementation choices for our method, the baselines against which we compare, and comparative performance results of our proposed approach.

\subsection{Implementation Details for Our Approach}
To implement our proposed system, we need to provide additional details about the user-defined parameter values for the one-request-at-a-time algorithm, the generative model, and the promising-routes generator.

\subsubsection{One-request-at-a-time algorithm}
We set the number of time-steps to be considered in the application of the base policy for future cost estimation to be $K=3600$ to consider potential future scenarios for the next hour of operation.

\subsubsection{Generative model for future requests prediction}
\label{subsubsec:implementation_details_generative_model}
For the time-series transformer used for the prediction of the number of requests, we set the number of predictive intervals in an hour to be $N=12$. This choice implies that a given interval $l$ for a given hour $h$ will have a size of  $|I^{l}_{h}| = 300$, and hence the system predicts the number of requests inside $5$-minute intervals for the next hour $h+1$. Weather information for the context variables is obtained from Open-Meteo Historical Weather Data API\cite{Open-Meteo}, which uses ERA5\cite{ERA5, ERA5-land}. For the transformer itself, we consider four different architectures TimesNet\cite{wu2023timesnet}, PatchTST \cite{nie2023a}, TimeMixer \cite{wang2024timemixer}, and iTransformer \cite{liu2024itransformer}. We selected these four architectures due to their recent success on other time-series prediction tasks.
We train all four architectures for $300$ epochs using the training set. We select a mean average error (MAE) as the loss function, and we use Adam \cite{Kingma2014AdamAM} as the optimizer. The learning rate for each architecture is chosen using a hyper-parameter search. We choose a batch size of 32, and we shuffle the batches at every epoch during training. The number of layers and the width of each layer for each architecture are chosen following the recommendations given in their respective papers. All training was done using two NVIDIA RTX A6000 ADA. 
Based on prediction results for the validation set, we chose TimeMixer as the architecture of choice for the routing setting since it resulted in the lowest prediction error. 

We use the training dataset to estimate the conditional distributions for the pickup and drop-off locations. These distributions are calculated offline and stored for sampling during the rollout execution. For the process of sampling potential future requests, we choose the number of future scenarios to be $N^{\text{scenarios}} = 20$. This number was chosen empirically based on the observed trade-off between computational complexity and performance. A higher number of potential future scenarios will result in a more accurate estimation of expected future costs, but will increase the runtime of the algorithm.

\subsubsection{Promising-routes generator}
To implement the promising-routes generator, we need to specify the maximum number of controls to be considered in the final promising-routes set $N^{\text{routes}}$, and the clustering size for the clustering algorithm. 
For this setup, we chose $N^{\text{routes}} = 15$ and the minimum clustering size for the HDBSCAN clustering algorithm to be $2$. We selected $15$ as the number of routes to explore the action space enough to have some diversity of routes, but still maintain it sufficiently small that the minimization over the routes can be executed in a reasonable time. We chose the HDBSCAN algorithm \cite{Campello2015} \cite{Malzer2020} since it performs clustering in a hierarchical manner and hence the only hyper parameter needed is the minimum clustering size. We selected the minimum clustering size to be $2$ to allow for pair-wise groupings of requests that are sufficiently close, while still allowing for larger groupings to emerge. 

\subsection{Empirical Validation of Theoretical Results for Stability}
\label{subsec:empirical_results_stability}
In this section, we present our empirical studies to validate our theoretical claim that our Restart-and-Optimize fleet-sizing algorithm (see algo.~\ref{algo:fleet_size}) can find a sufficiently large fleet size that maintains stability of the proposed greedy base policy $\pi^{\text{greedy}}$ during online execution. To do this, we consider the set of $224$ days contained in the training dataset as the set of historical dates $\mathcal{H}$. We set the maximum number of robots to be considered in our algorithm to be $M^{\text{max}}=100$. After we run our proposed Restart-and-Optimize fleet-sizing algorithm on $\mathcal{H}$, we obtain that $5$ minibuses should be enough to guarantee stability of our proposed greedy base policy. We then consider the $20$ dates contained in the testing set as the unseen samples for online execution. We execute both the greedy base policy as well as our proposed one-request-at-a-time rollout algorithm for different fleet sizes $(2, 3, 4, 5)$, and count the number of rejected requests for all $20$ testing dates. We present the results for these stability studies in Table~\ref{tab:stability_results} below. As we can see in the table, a fleet size of $5$ is sufficiently large for all requests to be serviced, and hence we can conclude that both the greedy base policy and the learned rollout policy are stable (see Thm.~\ref{theorem:stabiltiy_properties}). This empirically demonstrates that using historical data, even with a finite number of dates, allows our algorithm to find a sufficiently large fleet size for both the base policy and our proposed rollout policy to service all requests in practice.

\begin{table}[b!]
    \centering
    \caption{Empirical Verification of Our Theoretical Results Using Harvard's Evening Van Requests' Data}
    \begin{tabular}{|c||c|c|} \hline
    \multirow{2}{*}{Number of Robots} & \multicolumn{2}{c|}{Number of Rejected Requests For:} \\ \cline{2-3}
    & Greedy Base Policy & Our Approach \\ \hline
    2 & 363 & 260 \\ \hline
    3 & 20 & 10 \\ \hline
    4 & 0 & 0 \\ \hline
    5 & 0 & 0 \\ \hline
    \end{tabular}
    \label{tab:stability_results}
\end{table}

\subsection{Comparative Results}
In this section, we present a comparative study of our proposed framework and multiple baselines from the literature (see Sec.~\ref{subsec:benchmarks} below). We evaluate all policies using the test dataset described in Sec.\ref{subsubsec:implementation_details_generative_model}, where the real evening van requests are the ground truth data. For this section of the study, we assume that we have $3$ minibuses in the fleet and a single depot to match the real set-up being currently used by the Evening Van Service. This choice of fleet-size, however, means that we are below the sufficiently large fleet-size that we found in our theoretical results, and hence some requests may be rejected for all policies considered. Under this setup, our objective then becomes to minimize the wait times of all requests while decreasing the number of rejected requests over all testing dates.

\subsubsection{Benchmarks}
\label{subsec:benchmarks}
Our comparison results compare the performance of our approach to the performance of six different baselines:
\paragraph{Current Evening Van} This method is the current routing algorithm deployed in the evening van system. The performance of this method is extracted from the evening van historical data, which contains information about the real wait times of the requests and the trip lengths. This method might have higher wait times and a higher number of rejected requests due to the fact that it has to deal with real traffic conditions that affect how fast the vehicles can traverse the graph.
\paragraph{Greedy PTT} This method corresponds to the greedy policy described in Sec.~\ref{subsec:greedy_base_policy}, but instead of the stage cost it uses the Passenger Travel Time (PTT) cost described in \cite{Wilbur2022}.
\paragraph{MCTS PTT} This method is inspired in the Monte-Carlo Tree Search (MCTS) approach introduced in \cite{Wilbur2022} and uses their PTT cost. The algorithm for promising action generation and bootstrapping for sampling future requests are preserved. We increase the number of promising actions to be considered to $15$ to match our maximum number of routes and provide an equitable comparison. We modify the MCTS implementation for exploration of the assignment space to allow requests to have a zero lead time (requests can be placed into the system just moments before their desired pickup time). This is achieved by limiting the swaps of requests between robots to only requests that haven't been picked up. We set the number of MCTS simulations to $1000$ and the tree depth to $20$ to balance performance and runtime, as suggested in the paper \cite{Wilbur2022}.
\paragraph{Greedy Budget} This method corresponds to the greedy policy described in Sec.~\ref{subsec:greedy_base_policy}, but instead of the stage cost it uses the budget cost described in \cite{Wilbur2022}.
\paragraph{MCTS Budget} This method has the same implementation as MCTS PTT described before, but it uses the budget cost introduced in \cite{Wilbur2022}.
\paragraph{Greedy WT} This method corresponds to our proposed greedy policy described in Sec.~\ref{subsec:greedy_base_policy}, using the stage cost defined Sec.~\ref{subsec:stability_definition}.

\subsubsection{Performance results}
\label{subsubsec:performance_results}
To evaluate the performance of our proposed algorithm as compared to the baselines described in Sec.~\ref{subsec:benchmarks}, we report three main metrics, the average wait time per passenger before being picked up (in seconds), the average trip length for serviced passengers (in seconds), and the percentage of rejected requests. The average wait time per passenger is calculated by averaging passenger's wait times for each of the $20$ dates in the testing set, and then reporting the average for each day. Similarly, the average trip length per serviced passenger is calculated by averaging passengers' trip length for each of the $20$ dates in the testing set, and reporting the average for each day. Finally, The percentage of rejected requests is calculated by dividing the number of rejected requests for each day in the test set by the total number of requests that enter the system in that day, and then multiplying that number by $100$. Results for each day in the test set are reported. We present all results for the three metrics in Fig.~\ref{fig:comparison_results}. 

\begin{figure*}[ht]
    \centering
    \includegraphics[width=\textwidth]{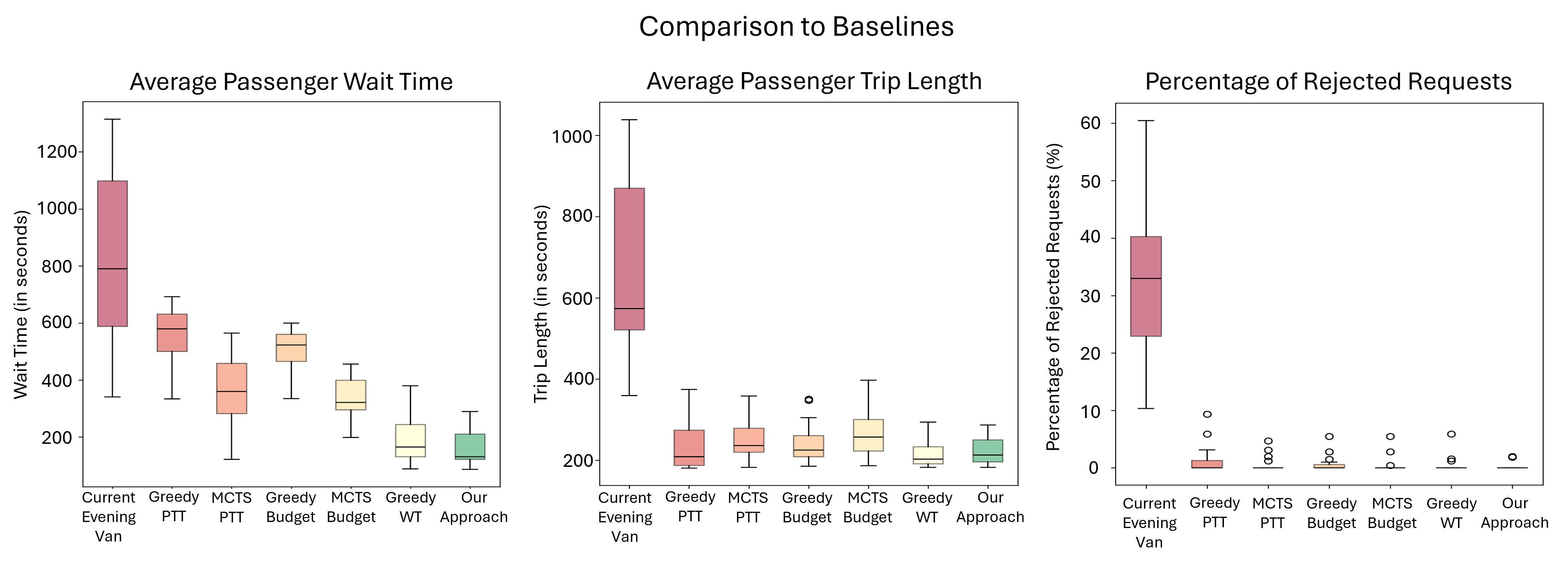}
    \caption{\small{Box plots for the average wait time per request, average trip length per request, and the percentage of rejected requests for all baselines presented in Sec.~\ref{subsec:benchmarks} and our approach.}}
    \label{fig:comparison_results}
\end{figure*}

As shown in Fig.~\ref{fig:comparison_results}, our proposed framework decreases the maximum percentage of rejected requests by $6\%$, obtaining a percentage of rejected requests of $3\%$ compared to $9\%$ for the closest baseline. Our approach also decreases the median average passenger wait time at the station to around $2$ minutes while all the other baselines have median average passenger wait times of $3$ to $12$ minutes. In other words, our method is able to service around $6\%$ more requests than the closest baseline, decrease median passenger wait times at the station by almost one minute, and still maintain similar trip lengths compared to all the other baselines. 

\subsubsection{Runtime Results}
We also present results for the planning time required before a request gets assigned for all the baselines described in Sec.~\ref{subsec:benchmarks}. We do not include the runtime for the currently implemented algorithm in the evening van system, as we do not have access to that information in the data set. We present these results in Fig.~\ref{fig:runtime_results}. 

\begin{figure}
    \centering
    \includegraphics[width=0.75\linewidth]{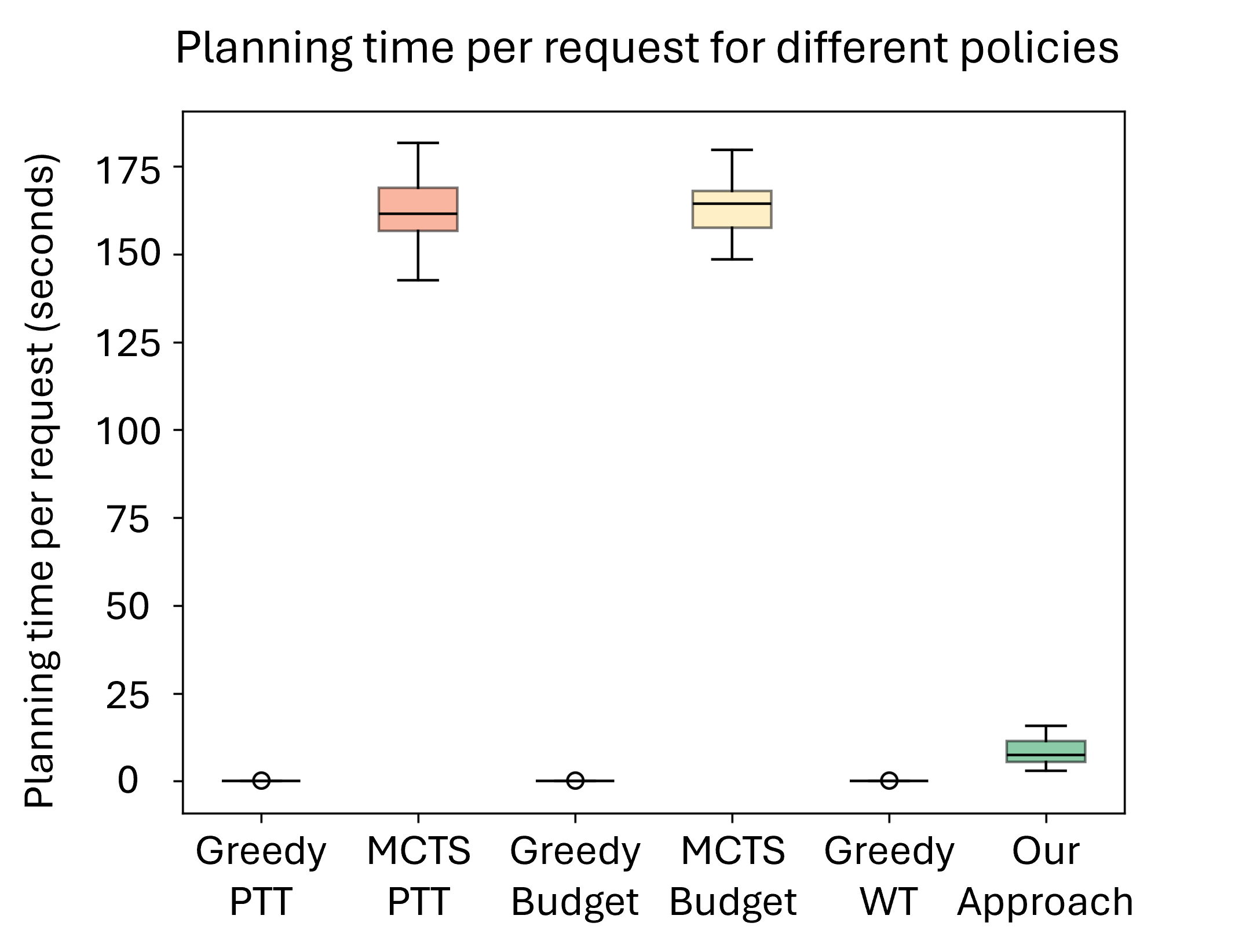}
    \caption{\small{Box plot for the planning time required to generate a routing plan for one request (in seconds).}}
    \label{fig:runtime_results}
\end{figure}

As we can see from Fig.~\ref{fig:runtime_results}, our algorithm has larger planning times than the myopic greedy algorithms, since our algorithm considers potential future requests in its decision making framework, which increases the time required for planning. Compared to other algorithms (MCTS-based) that consider future requests, our algorithm is around $8$ times faster, generating routes for a request assignment in less than $20$ seconds, compared to $170$ seconds for the other baselines that consider future requests. 

\subsection{Ablation Studies}
\label{subsec:ablation_results}
In this section, we report results that evaluate the effect of using different generative models and promising-routes generators in our system. We cover the results of these ablation studies in the following subsections.

\subsubsection{Changing the Generative Model}
\label{subsubsec:ablation_results_generative_model}
To measure the effect of different generative models for generating future scenarios, we consider three different ablation cases:
\paragraph{Bootstrapping} This method is based on the request sampling approach proposed in \cite{Wilbur2022}. This method fits a Gaussian model to the training data to obtain the number of requests per day. It samples the fitted Gaussian model to obtain the number of requests in a day. Given this number, it then uses the requests in the training dataset as a request bank and samples uniformly as many requests as dictated by the sampled number of requests obtained from the Gaussian model.
\paragraph{Our approach} this corresponds to our proposed generative model described in Sec.~\ref{subsec:generative_model} of this paper.
\paragraph{Perfect Prediction} This method uses the ground truth data as the prediction for the future requests. In this case, this method acts as an oracle and hence an upper bound in terms of the best achievable performance for our routing framework.

For this ablation study, we report the same three metrics as in the comparison studies (see Sec.~\ref{subsubsec:performance_results}). The results for this ablation are contained in Fig.~\ref{fig:generative_model_ablation}. 

\begin{figure*}[ht]
    \centering
    \includegraphics[width=\textwidth]{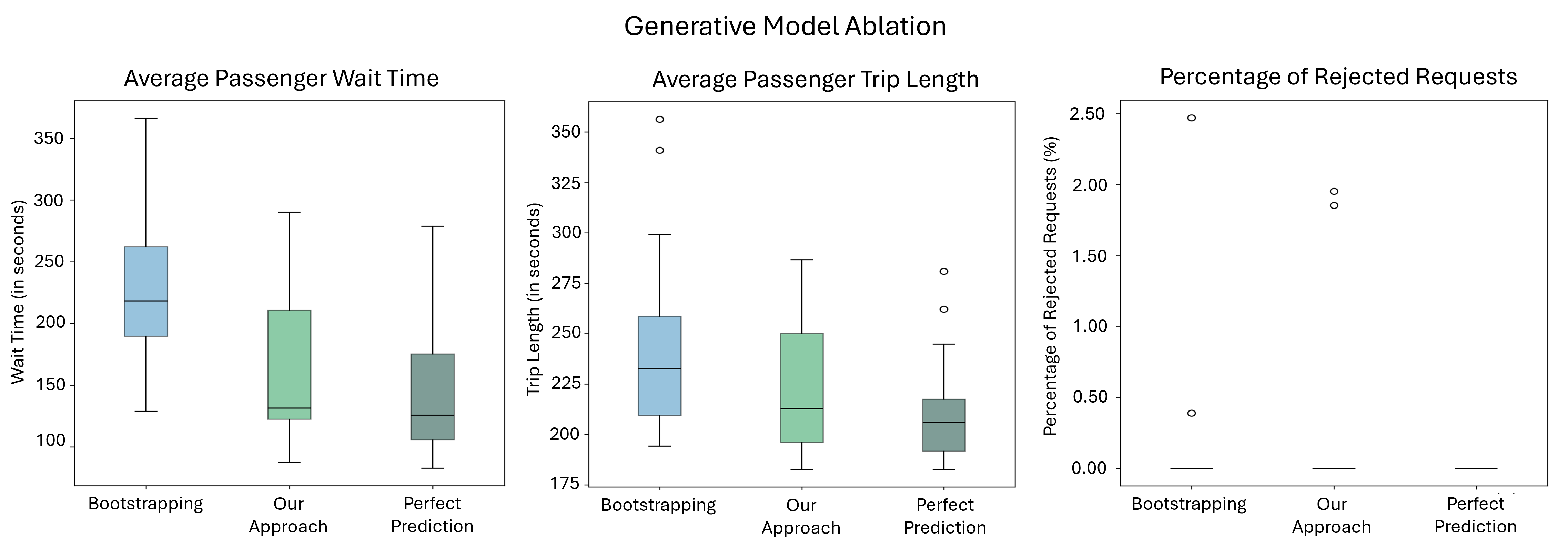}
    \caption{\small{Box plots for the average wait time per request, average trip length per request, and total number of rejected requests for different generative models detailed in Sec.~\ref{subsubsec:ablation_results_generative_model}.}}
    \label{fig:generative_model_ablation}
\end{figure*}

As we can see from Fig.~\ref{fig:generative_model_ablation}, our generative model decreases both the maximum and the median average passenger wait time by approximately $1$ minute and $30$ seconds compared to the bootstrapping model, obtaining wait times that are close to the ones obtained by the perfect prediction model. In terms of the average passenger trip length, our model decreases the spread of the trip lengths, reducing the maximum average trip length by approximately $1$ minute compared to the bootstrapping model. Our generative model, however, obtains longer trip lengths than the perfect prediction model due to predictive errors that are common in deep learning architectures. In terms of the percentage of rejected requests, our model reduces the maximum percentage of rejected requests from $2.5\%$ to $2.0\%$, showing that our method is better at predicting less common sequences of requests, and produces more consistent predictions. However, due to the variability of the bootstrapping process, the bootstrapping method can still sample accurate sequences of requests based exclusively on the bank of historical requests. Even though randomly sampling an accurate sequence of requests has a low probability, it is still possible and can be seen in practice, especially when the request data is seasonal and can be predicted using exclusively historical patterns. For this reason, the bootstrapping model is able to reduce the second largest percentage of rejected requests even beyond what our method is capable of. By looking at the results for the perfect prediction model, we can see that perfect prediction leads to $0$ rejected requests, which illustrates an opportunity for future work to improve the predictive capabilities of the model to further boost performance. In short, all these results show that the time-series transformer provides a predictive advantage over a standard bootstrapping method, but there is still some room for improvement as a perfect prediction model is able to achieve smaller trip lengths and $0$ rejected requests.

\subsubsection{Changing the Promising-Routes Generator}
\label{subsubsec:ablation_results_promising_actions}
To measure the effect of different promising-routes generators, we consider three different ablation cases:
\paragraph{No Swapping} In this scenario, promising-routes are only generated based on assignments of a request to a robot. Swapping of requests between robots is not allowed.
\paragraph{Our approach} This corresponds to our proposed promising-routes generator described in Sec.~\ref{subsec:promising_action_generator} of this paper.
\paragraph{Independent Swapping} This method is inspired by \cite{Wilbur2022}, where only a single request can be swapped between robots. In this method, once the current request gets assigned to a robot $\ell_{m}$, only swaps of requests between other robots that are not robot $\ell_{m}$ are allowed. 

For this ablation study, we report the same three metrics as in the comparison studies (see Sec.~\ref{subsubsec:performance_results}). The results for this ablation are contained in Fig.~\ref{fig:promising_actions_ablation}. 

\begin{figure*}[ht]
    \centering
    \includegraphics[width=\textwidth]{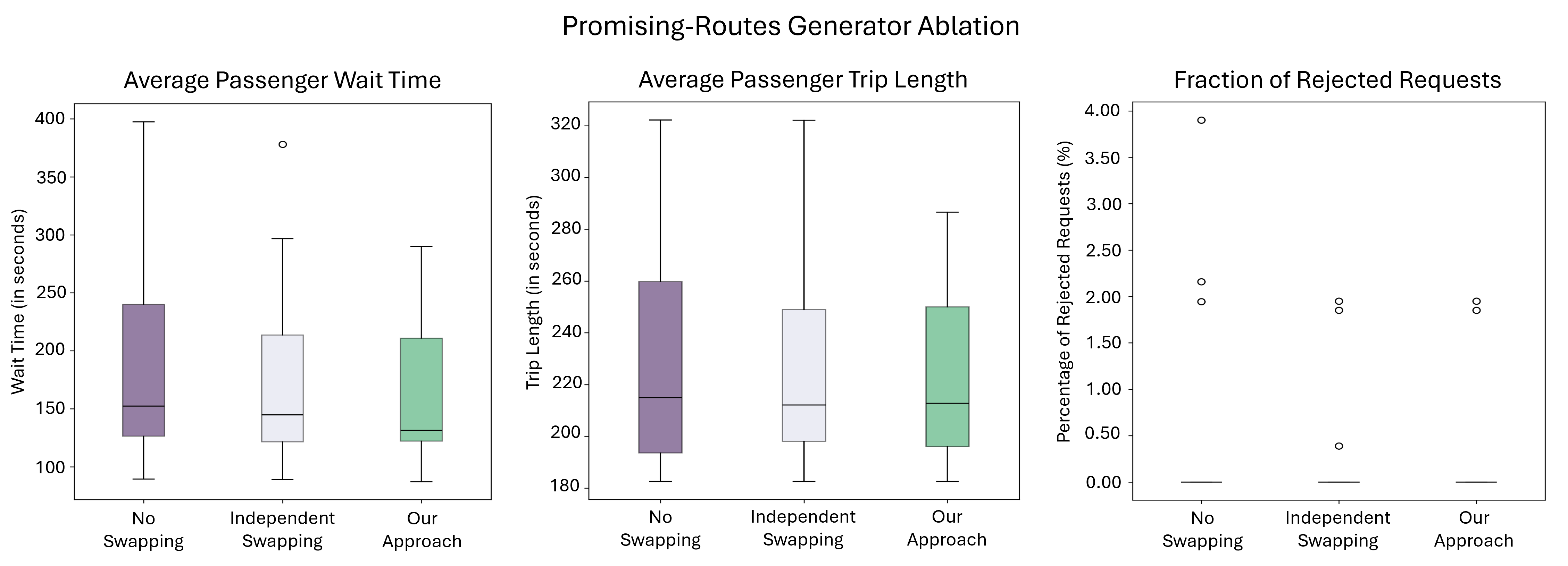}
    \caption{\small{Box plots for average wait time per request, average trip length per request, and total number of rejected requests for different promising-routes generating schemes detailed in Sec.~\ref{subsubsec:ablation_results_promising_actions}.}}
    \label{fig:promising_actions_ablation}
\end{figure*}

As we can see from Fig.~\ref{fig:promising_actions_ablation}, our promising-routes generator results in a decrease of $1$ minute in the maximum average passenger wait time compared to the No Swapping and the Independent Swapping mechanism. Compared to the other methods, our promising-routes generator also reduces the maximum average trip length by $30$ seconds, while achieving the lowest percentage of rejected requests for the outliers, having only two outlier values compared to $3$ outlier values for the other methods. These results show that clustering requests before performing swapping actions create better routes that allow us to service more passengers with lower passenger wait times and lower trip lengths overall.

\section{Conclusion}
In this paper we propose a proactive rollout-based multi-capacity robot routing framework with future request planning that is amenable to theoretical analysis. The framework proposed in this manuscript is applicable to a diverse set of multi-capacity routing problems, including setups in warehouse management, package delivery, and ride-sharing. For this paper, we focus on a transportation setup as an illustrative example, but the framework can be applied to any other multi-capacity autonomous routing setup. As part of our theoretical contributions, we show that our proposed base policy is stable as long as we have a sufficiently large fleet that allows the proposed base policy to service all requests in a finite time horizon. We also propose a fleet-sizing algorithm that provably finds this sufficiently large fleet size. 

To validate our theoretical results and show the advantages of our method, we consider a case study on Harvard's Evening Van data. Using this data, we empirically show that our fleet-sizing algorithm finds a fleet size that allows our base policy and our rollout policy to service all requests in a finite time horizon in practice. Our results also show that our method is able to service around $6\%$ more requests than the closest baseline, decrease median passenger wait times at the station by almost one minute, and maintain similar trip lengths compared to all the other baselines. Our ablation studies show that both the generative model and the unsupervised clustering used in the promising-routes generator provide significant benefits to the route selection procedure. As future work, we want to explore other predictive mechanisms to try to further improve the quality of future cost estimations to achieve results like the ones observed for the oracle in the first ablation study.

\section*{Acknowledgments}
We thank Orhan Eren Akgün for his useful feedback on the theoretical results. We thank Igor Sadalski for his coding tips and template code for the MCTS baseline. We gratefully acknowledge AFOSR award $\#$FA9550-22-1-0223 and DARPA YFA award $\#$D24AP00319-00 for partial support of this work.

\section*{CRedit Autorship Contribution Statement}
\textbf{Daniel Garces}: Conceptualization, Formal analysis, Investigation, Methodology, Software, Validation, Visualization, Writing – original draft. 

\textbf{Stephanie Gil}: Conceptualization, Funding Acquisition, Resources, Supervision, Writing – review and editing.


\section*{Data Availability}
The authors do not have permission to publicly share the data.



\bibliographystyle{elsarticle-num} 
\bibliography{refs}

\begin{thebibliography}{10}
\expandafter\ifx\csname url\endcsname\relax
  \def\url#1{\texttt{#1}}\fi
\expandafter\ifx\csname urlprefix\endcsname\relax\def\urlprefix{URL }\fi
\expandafter\ifx\csname href\endcsname\relax
  \def\href#1#2{#2} \def\path#1{#1}\fi

\bibitem{Nishi2005TRO}
T.~Nishi, M.~Ando, M.~Konishi, Distributed route planning for multiple mobile robots using an augmented lagrangian decomposition and coordination technique, IEEE Transactions on Robotics 21~(6) (2005) 1191--1200.
\newblock \href {https://doi.org/10.1109/TRO.2005.853489} {\path{doi:10.1109/TRO.2005.853489}}.

\bibitem{Zheng2005TRO}
C.~Zheng, L.~Li, F.~Xu, F.~Sun, M.~Ding, Evolutionary route planner for unmanned air vehicles, IEEE Transactions on Robotics 21~(4) (2005) 609--620.
\newblock \href {https://doi.org/10.1109/TRO.2005.844684} {\path{doi:10.1109/TRO.2005.844684}}.

\bibitem{Bopardikar2014TRO}
S.~D. Bopardikar, S.~L. Smith, F.~Bullo, On dynamic vehicle routing with time constraints, IEEE Transactions on Robotics 30~(6) (2014) 1524--1532.
\newblock \href {https://doi.org/10.1109/TRO.2014.2344451} {\path{doi:10.1109/TRO.2014.2344451}}.

\bibitem{Testa2022TRO}
A.~Testa, G.~Notarstefano, Generalized assignment for multi-robot systems via distributed branch-and-price, IEEE Transactions on Robotics 38~(3) (2022) 1990--2001.
\newblock \href {https://doi.org/10.1109/TRO.2021.3120046} {\path{doi:10.1109/TRO.2021.3120046}}.

\bibitem{Sorbelli2022TRO}
F.~B. Sorbelli, S.~Carpin, F.~Corò, S.~K. Das, A.~Navarra, C.~M. Pinotti, Speeding up routing schedules on aisle graphs with single access, IEEE Transactions on Robotics 38~(1) (2022) 433--447.
\newblock \href {https://doi.org/10.1109/TRO.2021.3082021} {\path{doi:10.1109/TRO.2021.3082021}}.

\bibitem{Chopra2017TRO}
S.~Chopra, G.~Notarstefano, M.~Rice, M.~Egerstedt, A distributed version of the hungarian method for multirobot assignment, IEEE Transactions on Robotics 33~(4) (2017) 932--947.
\newblock \href {https://doi.org/10.1109/TRO.2017.2693377} {\path{doi:10.1109/TRO.2017.2693377}}.

\bibitem{Camisa2023TRO}
A.~Camisa, A.~Testa, G.~Notarstefano, Multi-robot pickup and delivery via distributed resource allocation, IEEE Transactions on Robotics 39~(2) (2023) 1106--1118.
\newblock \href {https://doi.org/10.1109/TRO.2022.3216801} {\path{doi:10.1109/TRO.2022.3216801}}.

\bibitem{Kondor2022}
D.~Kondor, I.~Bojic, G.~Resta, F.~Duarte, P.~Santi, C.~Ratti, The cost of non-coordination in urban on-demand mobility, Scientific Reports 12 (03 2022).
\newblock \href {https://doi.org/10.1038/s41598-022-08427-2} {\path{doi:10.1038/s41598-022-08427-2}}.

\bibitem{Garces2023ICRA}
D.~Garces, S.~Bhattacharya, S.~Gil, D.~Bertsekas, Multiagent reinforcement learning for autonomous routing and pickup problem with adaptation to variable demand, 2023 IEEE International Conference on Robotics and Automation (ICRA) (2023) 3524--3531\href {https://doi.org/10.1109/ICRA48891.2023.10161067} {\path{doi:10.1109/ICRA48891.2023.10161067}}.

\bibitem{Garces2024ICRA}
D.~Garces, S.~Bhattacharya, D.~Bertsekas, S.~Gil, Approximate multiagent reinforcement learning for on-demand urban mobility problem on a large map, 2024 IEEE International Conference on Robotics and Automation (ICRA) (2024).

\bibitem{Garces2024AAMAS}
D.~Garces, S.~Gil, Surge routing: Event-informed multiagent reinforcement learning for autonomous rideshare, in: Proceedings of the 23rd International Conference on Autonomous Agents and Multiagent Systems, AAMAS '24, International Foundation for Autonomous Agents and Multiagent Systems, Richland, SC, 2024, p. 641–650.

\bibitem{Arribas2023TRO}
E.~Arribas, V.~Cholvi, V.~Mancuso, Optimizing uav resupply scheduling for heterogeneous and persistent aerial service, IEEE Transactions on Robotics 39~(4) (2023) 2639--2653.
\newblock \href {https://doi.org/10.1109/TRO.2023.3263077} {\path{doi:10.1109/TRO.2023.3263077}}.

\bibitem{Fu2023TRO}
B.~Fu, W.~Smith, D.~M. Rizzo, M.~Castanier, M.~Ghaffari, K.~Barton, Robust task scheduling for heterogeneous robot teams under capability uncertainty, IEEE Transactions on Robotics 39~(2) (2023) 1087--1105.
\newblock \href {https://doi.org/10.1109/TRO.2022.3216068} {\path{doi:10.1109/TRO.2022.3216068}}.

\bibitem{Jokinen2011}
J.-P. Jokinen, T.~Sihvola, E.~Hyytiä, R.~Sulonen, Why urban mass demand responsive transport?, in: 2011 IEEE Forum on Integrated and Sustainable Transportation Systems, 2011, pp. 317--322.
\newblock \href {https://doi.org/10.1109/FISTS.2011.5973631} {\path{doi:10.1109/FISTS.2011.5973631}}.

\bibitem{Ongel2019}
A.~Ongel, E.~Loewer, F.~Roemer, G.~Sethuraman, F.~Chang, M.~Lienkamp, \href{https://www.mdpi.com/2071-1050/11/3/648}{Economic assessment of autonomous electric microtransit vehicles}, Sustainability 11~(3) (2019).
\newblock \href {https://doi.org/10.3390/su11030648} {\path{doi:10.3390/su11030648}}.
\newline\urlprefix\url{https://www.mdpi.com/2071-1050/11/3/648}

\bibitem{SHAHEEN2019}
S.~Shaheen, A.~Cohen, \href{https://www.sciencedirect.com/science/ article/pii/S0144164722001301}{Shared ride services in north america: definitions, impacts, and the future of pooling}, Transport Reviews 39~(4) (2019) 427--442.
\newblock \href {https://doi.org/https://doi.org/10.1080/01441647.2018.1497728} {\path{doi:https://doi.org/10.1080/01441647.2018.1497728}}.
\newline\urlprefix\url{https://www.sciencedirect.com/science/ article/pii/S0144164722001301}

\bibitem{Xidias2022}
E.~Xidias, P.~Zacharia, A.~Nearchou, Intelligent fleet management of autonomous vehicles for city logistics, Applied Intelligence 52 (04 2022).
\newblock \href {https://doi.org/10.1007/s10489-022-03535-y} {\path{doi:10.1007/s10489-022-03535-y}}.

\bibitem{Matthew2015}
N.~Mathew, S.~L. Smith, S.~L. Waslander, Planning paths for package delivery in heterogeneous multirobot teams, IEEE Transactions on Automation Science and Engineering 12~(4) (2015) 1298--1308.
\newblock \href {https://doi.org/10.1109/TASE.2015.2461213} {\path{doi:10.1109/TASE.2015.2461213}}.

\bibitem{Reed2022}
S.~Reed, A.~M. Campbell, B.~W. Thomas, \href{https://doi.org/10.1287/mnsc.2020.3917}{The value of autonomous vehicles for last-mile deliveries in urban environments}, Management Science 68~(1) (2022) 280--299.
\newblock \href {http://arxiv.org/abs/https://doi.org/10.1287/mnsc.2020.3917} {\path{arXiv:https://doi.org/10.1287/mnsc.2020.3917}}, \href {https://doi.org/10.1287/mnsc.2020.3917} {\path{doi:10.1287/mnsc.2020.3917}}.
\newline\urlprefix\url{https://doi.org/10.1287/mnsc.2020.3917}

\bibitem{Lee2023}
S.~Lee, B.~Shahzaad, B.~Alkouz, A.~Lakhdari, A.~Bouguettaya, \href{https://doi.org/10.1145/3544793.3560330}{Autonomous delivery of multiple packages using single drone in urban airspace}, in: Adjunct Proceedings of the 2022 ACM International Joint Conference on Pervasive and Ubiquitous Computing and the 2022 ACM International Symposium on Wearable Computers, UbiComp/ISWC '22 Adjunct, Association for Computing Machinery, New York, NY, USA, 2023, p. 72–74.
\newblock \href {https://doi.org/10.1145/3544793.3560330} {\path{doi:10.1145/3544793.3560330}}.
\newline\urlprefix\url{https://doi.org/10.1145/3544793.3560330}

\bibitem{Ham01022021}
A.~Ham, \href{https://doi.org/10.1080/00207543.2019.1709671}{Transfer-robot task scheduling in job shop}, International Journal of Production Research 59~(3) (2021) 813--823.
\newblock \href {https://doi.org/10.1080/00207543.2019.1709671} {\path{doi:10.1080/00207543.2019.1709671}}.
\newline\urlprefix\url{https://doi.org/10.1080/00207543.2019.1709671}

\bibitem{Garey1990}
M.~R. Garey, D.~S. Johnson, Computers and Intractability; A Guide to the Theory of NP-Completeness, W. H. Freeman \& Co., USA, 1990.

\bibitem{Toth2002}
P.~Toth, D.~Vigo, \href{https://epubs.siam.org/doi/abs/10.1137/ 1.9780898718515}{The Vehicle Routing Problem}, Society for Industrial and Applied Mathematics, 2002.
\newblock \href {https://doi.org/10.1137/1.9780898718515} {\path{doi:10.1137/1.9780898718515}}.
\newline\urlprefix\url{https://epubs.siam.org/doi/abs/10.1137/ 1.9780898718515}

\bibitem{LYU2019}
Y.~Lyu, C.-Y. Chow, V.~C. Lee, J.~K. Ng, Y.~Li, J.~Zeng, \href{https://www.sciencedirect.com/science/ article/pii/S0968090X18305126}{Cb-planner: A bus line planning framework for customized bus systems}, Transportation Research Part C: Emerging Technologies 101 (2019) 233--253.
\newblock \href {https://doi.org/https://doi.org/10.1016/j.trc.2019.02.006} {\path{doi:https://doi.org/10.1016/j.trc.2019.02.006}}.
\newline\urlprefix\url{https://www.sciencedirect.com/science/ article/pii/S0968090X18305126}

\bibitem{Zuo2021}
H.~Zuo, B.~Cao, Y.~Zhao, B.~Shen, W.~Zheng, Y.~Huang, \href{https://doi.org/10.1007/s11227-020- 03424-6}{High-capacity ride-sharing via shortest path clustering on large road networks}, J. Supercomput. 77~(4) (2021) 4081–4106.
\newblock \href {https://doi.org/10.1007/s11227-020-03424-6} {\path{doi:10.1007/s11227-020-03424-6}}.
\newline\urlprefix\url{https://doi.org/10.1007/s11227-020- 03424-6}

\bibitem{WANG2023IC}
J.~Wang, Z.~Lian, C.~Liu, K.~Liu, \href{https://www.sciencedirect.com/science/ article/pii/S2590198222002123}{Iterated clustering optimization of the split-delivery vehicle routing problem considering passenger walking distance}, Transportation Research Interdisciplinary Perspectives 17 (2023) 100751.
\newblock \href {https://doi.org/https://doi.org/10.1016/j.trip.2022.100751} {\path{doi:https://doi.org/10.1016/j.trip.2022.100751}}.
\newline\urlprefix\url{https://www.sciencedirect.com/science/ article/pii/S2590198222002123}

\bibitem{Huang2020DR}
D.~Huang, Y.~Gu, S.~Wang, Z.~Liu, W.~Zhang, \href{https://www.sciencedirect.com/science/ article/pii/S0968090X1930395X}{A two-phase optimization model for the demand-responsive customized bus network design}, Transportation Research Part C: Emerging Technologies 111 (2020) 1--21.
\newblock \href {https://doi.org/https://doi.org/10.1016/j.trc.2019.12.004} {\path{doi:https://doi.org/10.1016/j.trc.2019.12.004}}.
\newline\urlprefix\url{https://www.sciencedirect.com/science/ article/pii/S0968090X1930395X}

\bibitem{Ma2021}
C.~Ma, C.~Wang, X.~Xu, A multi-objective robust optimization model for customized bus routes, IEEE Transactions on Intelligent Transportation Systems 22~(4) (2021) 2359--2370.
\newblock \href {https://doi.org/10.1109/TITS.2020.3012144} {\path{doi:10.1109/TITS.2020.3012144}}.

\bibitem{Chen2021}
X.~Chen, Y.~Wang, X.~Ma, Integrated optimization for commuting customized bus stop planning, routing design, and timetable development with passenger spatial-temporal accessibility, IEEE Transactions on Intelligent Transportation Systems 22~(4) (2021) 2060--2075.
\newblock \href {https://doi.org/10.1109/TITS.2020.3048520} {\path{doi:10.1109/TITS.2020.3048520}}.

\bibitem{Song2023}
C.~Song, S.~Chen, H.~Wang, Y.~Chen, An optimization model for the demand-responsive transit with non-fixed stops and multi-vehicle type, IEEE Access 11 (2023) 92647--92659.
\newblock \href {https://doi.org/10.1109/ACCESS.2023.3309872} {\path{doi:10.1109/ACCESS.2023.3309872}}.

\bibitem{Gomes2014}
R.~Gomes, J.~Pinho~de Sousa, T.~Dias, A grasp-based approach for demand responsive transportation, International Journal of Transportation 2 (2014) 21--32.
\newblock \href {https://doi.org/10.14257/ijt.2014.2.1.02} {\path{doi:10.14257/ijt.2014.2.1.02}}.

\bibitem{NARAYAN2017}
J.~Narayan, O.~Cats, N.~{van Oort}, S.~Hoogendoorn, \href{https://www.sciencedirect.com/science/ article/pii/S2352146517309262}{Performance assessment of fixed and flexible public transport in a multi agent simulation framework}, Transportation Research Procedia 27 (2017) 109--116, 20th EURO Working Group on Transportation Meeting, EWGT 2017, 4-6 September 2017, Budapest, Hungary.
\newblock \href {https://doi.org/https://doi.org/10.1016/j.trpro.2017.12.029} {\path{doi:https://doi.org/10.1016/j.trpro.2017.12.029}}.
\newline\urlprefix\url{https://www.sciencedirect.com/science/ article/pii/S2352146517309262}

\bibitem{AlonsoMora2017}
J.~Alonso-Mora, S.~Samaranayake, A.~Wallar, E.~Frazzoli, D.~Rus, \href{https://www.pnas.org/doi/abs/10.1073/ pnas.1611675114}{On-demand high-capacity ride-sharing via dynamic trip-vehicle assignment}, Proceedings of the National Academy of Sciences 114~(3) (2017) 462--467.
\newblock \href {https://doi.org/10.1073/pnas.1611675114} {\path{doi:10.1073/pnas.1611675114}}.
\newline\urlprefix\url{https://www.pnas.org/doi/abs/10.1073/ pnas.1611675114}

\bibitem{Sun2021}
J.-y. Sun, Y.-y. Chen, J.-l. Huang, P.-y. Wei, C.-c. Song, Flexible bus route optimization scheduling model, Advances in Civil Engineering 2021 (2021) 1--13.
\newblock \href {https://doi.org/10.1155/2021/8816965} {\path{doi:10.1155/2021/8816965}}.

\bibitem{Huang2020CBO}
K.~Huang, L.~Xu, Y.~Chen, Q.~Cheng, K.~An, Customized bus route optimization with the real-time data, Journal of Advanced Transportation 2020 (2020) 1--10.
\newblock \href {https://doi.org/https://doi.org/10.1155/2020/8838994} {\path{doi:https://doi.org/10.1155/2020/8838994}}.

\bibitem{BRUNI2014}
M.~Bruni, F.~Guerriero, P.~Beraldi, \href{https://www.sciencedirect.com/science/ article/pii/S1366554514000970}{Designing robust routes for demand-responsive transport systems}, Transportation Research Part E: Logistics and Transportation Review 70 (2014) 1--16.
\newblock \href {https://doi.org/https://doi.org/10.1016/j.tre.2014.06.002} {\path{doi:https://doi.org/10.1016/j.tre.2014.06.002}}.
\newline\urlprefix\url{https://www.sciencedirect.com/science/ article/pii/S1366554514000970}

\bibitem{Rongge2023}
G.~Rongge, S.~Bhatnagar, G.~Wei, M.~Vallati, S.~Sharif~Azadeh, Operationalizing modular autonomous customized buses based on different demand prediction scenarios, Transportmetrica A: Transport Science (11 2023).
\newblock \href {https://doi.org/10.1080/23249935.2023.2296498} {\path{doi:10.1080/23249935.2023.2296498}}.

\bibitem{VANENGELEN2018}
M.~{van Engelen}, O.~Cats, H.~Post, K.~Aardal, \href{https://www.sciencedirect.com/science/ article/pii/S1366554517307810}{Enhancing flexible transport services with demand-anticipatory insertion heuristics}, Transportation Research Part E: Logistics and Transportation Review 110 (2018) 110--121.
\newblock \href {https://doi.org/https://doi.org/10.1016/j.tre.2017.12.015} {\path{doi:https://doi.org/10.1016/j.tre.2017.12.015}}.
\newline\urlprefix\url{https://www.sciencedirect.com/science/ article/pii/S1366554517307810}

\bibitem{Wilbur2022}
M.~Wilbur, S.~U. Kadir, Y.~Kim, G.~Pettet, A.~Mukhopadhyay, P.~Pugliese, S.~Samaranayake, A.~Laszka, A.~Dubey, An online approach to solve the dynamic vehicle routing problem with stochastic trip requests for paratransit services, in: ICCPS, 2022, pp. 147--158.
\newblock \href {https://doi.org/10.1109/ICCPS54341.2022.00020} {\path{doi:10.1109/ICCPS54341.2022.00020}}.

\bibitem{bertsekas2019reinforcement}
D.~Bertsekas, \href{https://books.google.com/books?id=2f85 EAAAQBAJ}{Reinforcement Learning and Optimal Control}, Athena Scientific optimization and computation series, Athena Scientific, Nashua, NH, USA, 2019.
\newline\urlprefix\url{https://books.google.com/books?id=2f85 EAAAQBAJ}

\bibitem{bertsekas2020rollout}
D.~Bertsekas, \href{https://books.google.com/books?id=Hbo- EAAAQBAJ}{Rollout, Policy Iteration, and Distributed Reinforcement Learning}, Athena scientific optimization and computation series, Athena Scientific., Nashua, NH, USA, 2020.
\newline\urlprefix\url{https://books.google.com/books?id=Hbo- EAAAQBAJ}

\bibitem{Bertsekas2022AlphaZero}
D.~Bertsekas, Lessons from AlphaZero for Optimal, Model Predictive, and Adaptive Control, Athena Scientific, Nashua, NH, USA, 2022.

\bibitem{Bertsekas2023RLCourse}
D.~Bertsekas, A Course in Reinforcement Learning, Athena Scientific, Nashua, NH, USA, 2023.

\bibitem{wu2023timesnet}
H.~Wu, T.~Hu, Y.~Liu, H.~Zhou, J.~Wang, M.~Long, Timesnet: Temporal 2d-variation modeling for general time series analysis, in: International Conference on Learning Representations, 2023.

\bibitem{nie2023a}
Y.~Nie, N.~H. Nguyen, P.~Sinthong, J.~Kalagnanam, \href{https://openreview.net/forum?id=Jbdc0vTO col}{A time series is worth 64 words: Long-term forecasting with transformers}, in: The Eleventh International Conference on Learning Representations, 2023.
\newline\urlprefix\url{https://openreview.net/forum?id=Jbdc0vTO col}

\bibitem{wang2024timemixer}
S.~Wang, H.~Wu, X.~Shi, T.~Hu, H.~Luo, L.~Ma, J.~Y. Zhang, J.~ZHOU, \href{https://openreview.net/forum?id=7oLshf EIC2}{Timemixer: Decomposable multiscale mixing for time series forecasting}, in: The Twelfth International Conference on Learning Representations, 2024.
\newline\urlprefix\url{https://openreview.net/forum?id=7oLshf EIC2}

\bibitem{liu2024itransformer}
Y.~Liu, T.~Hu, H.~Zhang, H.~Wu, S.~Wang, L.~Ma, M.~Long, \href{https://openreview.net/forum?id=JePfAI8 fah}{itransformer: Inverted transformers are effective for time series forecasting}, in: The Twelfth International Conference on Learning Representations, 2024.
\newline\urlprefix\url{https://openreview.net/forum?id=JePfAI8 fah}

\bibitem{wang2024tssurvey}
Y.~Wang, H.~Wu, J.~Dong, Y.~Liu, M.~Long, J.~Wang, \href{https://arxiv.org/abs/2407.13278}{Deep time series models: A comprehensive survey and benchmark} (2024).
\newblock \href {http://arxiv.org/abs/2407.13278} {\path{arXiv:2407.13278}}.
\newline\urlprefix\url{https://arxiv.org/abs/2407.13278}

\bibitem{Campello2015}
R.~J. G.~B. Campello, D.~Moulavi, A.~Zimek, J.~Sander, \href{https://doi.org/10.1145/2733381}{Hierarchical density estimates for data clustering, visualization, and outlier detection}, ACM Trans. Knowl. Discov. Data 10~(1) (Jul. 2015).
\newblock \href {https://doi.org/10.1145/2733381} {\path{doi:10.1145/2733381}}.
\newline\urlprefix\url{https://doi.org/10.1145/2733381}

\bibitem{Malzer2020}
C.~Malzer, M.~Baum, \href{http://dx.doi.org/10.1109/MFI49285.2020. 9235263}{A hybrid approach to hierarchical density-based cluster selection}, in: 2020 IEEE International Conference on Multisensor Fusion and Integration for Intelligent Systems (MFI), IEEE, 2020.
\newblock \href {https://doi.org/10.1109/mfi49285.2020.9235263} {\path{doi:10.1109/mfi49285.2020.9235263}}.
\newline\urlprefix\url{http://dx.doi.org/10.1109/MFI49285.2020. 9235263}

\bibitem{WANG2020}
C.~Wang, C.~Ma, X.~Xu, \href{https://www.sciencedirect.com/science/ article/pii/S0378437119315754}{Multi-objective optimization of real-time customized bus routes based on two-stage method}, Physica A: Statistical Mechanics and its Applications 537 (2020) 122774.
\newblock \href {https://doi.org/https://doi.org/10.1016/j.physa.2019.122774} {\path{doi:https://doi.org/10.1016/j.physa.2019.122774}}.
\newline\urlprefix\url{https://www.sciencedirect.com/science/ article/pii/S0378437119315754}

\bibitem{Spieser2014}
K.~Spieser, K.~Treleaven, R.~Zhang, E.~Frazzoli, D.~Morton, M.~Pavone, \href{https://doi.org/10.1007/978-3-319-05990 -7\_20}{Toward a Systematic Approach to the Design and Evaluation of Automated Mobility-on-Demand Systems: A Case Study in Singapore}, Springer International Publishing, Cham, 2014, pp. 229--245.
\newblock \href {https://doi.org/10.1007/978-3-319-05990-7\_20} {\path{doi:10.1007/978-3-319-05990-7\_20}}.
\newline\urlprefix\url{https://doi.org/10.1007/978-3-319-05990 -7\_20}

\bibitem{Zardini2022}
G.~Zardini, N.~Lanzetti, M.~Pavone, E.~Frazzoli, \href{https://www.annualreviews.org/content/ journals/10.1146/annurev-control-042920- 012811}{Analysis and control of autonomous mobility-on-demand systems}, Annual Review of Control, Robotics, and Autonomous Systems 5~(Volume 5, 2022) (2022) 633--658.
\newblock \href {https://doi.org/https://doi.org/10.1146/annurev-control-042920-012811} {\path{doi:https://doi.org/10.1146/annurev-control-042920-012811}}.
\newline\urlprefix\url{https://www.annualreviews.org/content/ journals/10.1146/annurev-control-042920- 012811}

\bibitem{Barrios2014}
J.~A. Barrios, J.~D. Godier, \href{https://doi.org/10.3141/2416-01}{Fleet sizing for flexible carsharing systems: Simulation-based approach}, Transportation Research Record 2416~(1) (2014) 1--9.
\newblock \href {http://arxiv.org/abs/https://doi.org/10.3141/2416-01} {\path{arXiv:https://doi.org/10.3141/2416-01}}, \href {https://doi.org/10.3141/2416-01} {\path{doi:10.3141/2416-01}}.
\newline\urlprefix\url{https://doi.org/10.3141/2416-01}

\bibitem{Bosch2016}
P.~Bösch, F.~Ciari, K.~Axhausen, Autonomous vehicle fleet sizes required to serve different levels of demand, Transportation Research Record: Journal of the Transportation Research Board 2542 (2016) 111--119.
\newblock \href {https://doi.org/10.3141/2542-13} {\path{doi:10.3141/2542-13}}.

\bibitem{Fagnant2018DynamicRA}
D.~J. Fagnant, \href{https://api.semanticscholar.org/ CorpusID:157206068}{Dynamic ride-sharing and fleet sizing for a system of shared autonomous vehicles in austin, texas}, Transportation 45 (2018) 143--158.
\newline\urlprefix\url{https://api.semanticscholar.org/ CorpusID:157206068}

\bibitem{Vazifeh2018AddressingTM}
M.~M. Vazifeh, P.~Santi, G.~Resta, S.~H. Strogatz, C.~Ratti, \href{https://api.semanticscholar.org/ CorpusID:43923868}{Addressing the minimum fleet problem in on-demand urban mobility}, Nature 557 (2018) 534 -- 538.
\newline\urlprefix\url{https://api.semanticscholar.org/ CorpusID:43923868}

\bibitem{Qu2022HowMV}
B.~Qu, L.~Mao, Z.~Xu, J.~Feng, X.~Wang, \href{https://api.semanticscholar.org/ CorpusID:244821913}{How many vehicles do we need? fleet sizing for shared autonomous vehicles with ridesharing}, IEEE Transactions on Intelligent Transportation Systems 23 (2022) 14594--14607.
\newline\urlprefix\url{https://api.semanticscholar.org/ CorpusID:244821913}

\bibitem{MAKHDOMI2024}
A.~A. Makhdomi, I.~A. Gillani, \href{https://www.sciencedirect.com/science/ article/pii/S0957417424015379}{A greedy approach for increased vehicle utilization in ridesharing platforms}, Expert Systems with Applications 255 (2024) 124670.
\newblock \href {https://doi.org/https://doi.org/10.1016/j.eswa.2024.124670} {\path{doi:https://doi.org/10.1016/j.eswa.2024.124670}}.
\newline\urlprefix\url{https://www.sciencedirect.com/science/ article/pii/S0957417424015379}

\bibitem{WINTER2018}
K.~Winter, O.~Cats, G.~Correia, B.~{van Arem}, \href{https://www.sciencedirect.com/science/ article/pii/S2046043018300285}{Performance analysis and fleet requirements of automated demand-responsive transport systems as an urban public transport service}, International Journal of Transportation Science and Technology 7~(2) (2018) 151--167.
\newblock \href {https://doi.org/https://doi.org/10.1016/j.ijtst.2018.04.004} {\path{doi:https://doi.org/10.1016/j.ijtst.2018.04.004}}.
\newline\urlprefix\url{https://www.sciencedirect.com/science/ article/pii/S2046043018300285}

\bibitem{Wallar2019OptimizingVD}
A.~Wallar, J.~Alonso-Mora, D.~Rus, \href{https://api.semanticscholar.org/ CorpusID:199542236}{Optimizing vehicle distributions and fleet sizes for shared mobility-on-demand}, 2019 International Conference on Robotics and Automation (ICRA) (2019) 3853--3859.
\newline\urlprefix\url{https://api.semanticscholar.org/ CorpusID:199542236}

\bibitem{Shun2022}
S.~Li, \href{https://doi.org/10.1145/3531232.3531260}{Ride-hailing demand prediction with machine learning}, in: Proceedings of the 2022 4th International Conference on Image, Video and Signal Processing, IVSP '22, Association for Computing Machinery, New York, NY, USA, 2022, p. 192–196.
\newblock \href {https://doi.org/10.1145/3531232.3531260} {\path{doi:10.1145/3531232.3531260}}.
\newline\urlprefix\url{https://doi.org/10.1145/3531232.3531260}

\bibitem{Liu2020CFM}
Z.~Liu, H.~Chen, Y.~Li, Q.~Zhang, \href{https://onlinelibrary.wiley.com/doi/abs/ 10.1155/2020/1302586}{Taxi demand prediction based on a combination forecasting model in hotspots}, Journal of Advanced Transportation 2020~(1) (2020) 1302586.
\newblock \href {https://doi.org/https://doi.org/10.1155/2020/1302586} {\path{doi:https://doi.org/10.1155/2020/1302586}}.
\newline\urlprefix\url{https://onlinelibrary.wiley.com/doi/abs/ 10.1155/2020/1302586}

\bibitem{Huan2022}
H.~Huang, B.~Suleiman, W.~Yaqub, \href{https://doi.org/10.1007/ 978-3-031-20891-1\_7}{Hotspots recommender: Spatio-temporal prediction of ride-hailing and taxicab services}, in: Web Information Systems Engineering – WISE 2022: 23rd International Conference, Biarritz, France, November 1–3, 2022, Proceedings, Springer-Verlag, Berlin, Heidelberg, 2022, p. 81–94.
\newblock \href {https://doi.org/10.1007/978-3-031-20891-1\_7} {\path{doi:10.1007/978-3-031-20891-1\_7}}.
\newline\urlprefix\url{https://doi.org/10.1007/ 978-3-031-20891-1\_7}

\bibitem{wang2023RH}
N.~Wang, L.~Zheng, H.~Shen, S.~Li, \href{https://doi.org/10.1093/tse/tdad026}{Ride-hailing origin-destination demand prediction with spatiotemporal information fusion}, Transportation Safety and Environment 6~(2) (2023) tdad026.
\newblock \href {http://arxiv.org/abs/https://academic.oup.com/tse/article-pdf/6/2/tdad026/57234185/tdad026.pdf} {\path{arXiv:https://academic.oup.com/tse/article-pdf/6/2/tdad026/57234185/tdad026.pdf}}, \href {https://doi.org/10.1093/tse/tdad026} {\path{doi:10.1093/tse/tdad026}}.
\newline\urlprefix\url{https://doi.org/10.1093/tse/tdad026}

\bibitem{Zhao2022GCN}
J.~Zhao, W.~Kong, M.~Zhou, T.~Zhou, Y.~Xu, M.~Li, Prediction of urban taxi travel demand by using hybrid dynamic graph convolutional network model, Sensors 22 (2022) 5982.
\newblock \href {https://doi.org/10.3390/s22165982} {\path{doi:10.3390/s22165982}}.

\bibitem{Feng2022}
S.~Feng, J.~Ke, H.~Yang, J.~Ye, A multi-task matrix factorized graph neural network for co-prediction of zone-based and od-based ride-hailing demand, IEEE Transactions on Intelligent Transportation Systems 23~(6) (2022) 5704--5716.
\newblock \href {https://doi.org/10.1109/TITS.2021.3056415} {\path{doi:10.1109/TITS.2021.3056415}}.

\bibitem{STPGCN}
Y.~Zhao, Y.~Lin, H.~Wen, T.~Wei, X.~Jin, H.~Wan, Spatial-temporal position-aware graph convolution networks for traffic flow forecasting, IEEE Transactions on Intelligent Transportation Systems 24~(8) (2023) 8650--8666.

\bibitem{Zhao2021Predictability}
K.~Zhao, D.~Khryashchev, H.~Vo, Predicting taxi and uber demand in cities: Approaching the limit of predictability, IEEE Transactions on Knowledge and Data Engineering 33~(6) (2021) 2723--2736.
\newblock \href {https://doi.org/10.1109/TKDE.2019.2955686} {\path{doi:10.1109/TKDE.2019.2955686}}.

\bibitem{haoyietal-informer-2021}
H.~Zhou, S.~Zhang, J.~Peng, S.~Zhang, J.~Li, H.~Xiong, W.~Zhang, Informer: Beyond efficient transformer for long sequence time-series forecasting, in: The Thirty-Fifth {AAAI} Conference on Artificial Intelligence, {AAAI} 2021, Virtual Conference, Vol.~35, {AAAI} Press, 2021, pp. 11106--11115.

\bibitem{haoyietal-informerEx-2023}
H.~Zhou, J.~Li, S.~Zhang, S.~Zhang, M.~Yan, H.~Xiong, Expanding the prediction capacity in long sequence time-series forecasting, Artificial Intelligence 318 (2023) 103886.

\bibitem{Lenstra2006}
J.~Lenstra, A.~Kan, Complexity of vehicle routing and scheduling problems, Networks 11 (2006) 221 -- 227.
\newblock \href {https://doi.org/10.1002/net.3230110211} {\path{doi:10.1002/net.3230110211}}.

\bibitem{Bhattacharya2020MultiagentRollout}
S.~Bhattacharya, S.~Kailas, S.~Badyal, S.~Gil, D.~Bertsekas, \href{https://proceedings.mlr.press/v155/ bhattacharya21a.html}{Multiagent rollout and policy iteration for pomdp with application to multi-robot repair problems}, Proceedings of the 2020 Conference on Robot Learning 155 (2021) 1814--1828.
\newline\urlprefix\url{https://proceedings.mlr.press/v155/ bhattacharya21a.html}

\bibitem{Open-Meteo}
Open-Meteo, \href{https://open-meteo.com/en/docs/ historical-weather-api}{Historical weather api} (2022).
\newline\urlprefix\url{https://open-meteo.com/en/docs/ historical-weather-api}

\bibitem{ERA5}
H.~Hersbach, P.~Bell, B.and~Berrisford, G.~Biavati, A.~Horányi, J.~Muñoz~Sabater, J.~Nicolas, C.~Peubey, R.~Radu, D.~Rozum, I.and~Schepers, A.~Simmons, C.~Soci, D.~Dee, J.-N. Thépaut, Era5 hourly data on single levels from 1959 to present (2018).
\newblock \href {https://doi.org/10.24381/cds.adbb2d47} {\path{doi:10.24381/cds.adbb2d47}}.

\bibitem{ERA5-land}
J.~Muñoz~Sabater, Era5-land hourly data from 1981 to present (2019).
\newblock \href {https://doi.org/10.24381/cds.e2161bac} {\path{doi:10.24381/cds.e2161bac}}.

\bibitem{Kingma2014AdamAM}
D.~P. Kingma, J.~Ba, \href{https://api.semanticscholar.org/ CorpusID:6628106}{Adam: A method for stochastic optimization}, CoRR abs/1412.6980 (2014).
\newline\urlprefix\url{https://api.semanticscholar.org/ CorpusID:6628106}

\end{thebibliography}

\end{document}